\documentclass[10pt,twocolumn,letterpaper]{article}

\usepackage[pagenumbers]{cvpr} 

\usepackage{mathtools}

\definecolor{cvprblue}{rgb}{0.21,0.49,0.74}

\usepackage{xcolor,amsmath}
\usepackage[linesnumbered,ruled,vlined]{algorithm2e}

\usepackage{algpseudocode}
\usepackage{placeins}
\usepackage{graphicx}

\usepackage{rotating} 
\usepackage{lipsum}   
\usepackage{booktabs} 
\DontPrintSemicolon
\usepackage{svg}

\SetKwComment{Comment}{\color{green!50!black}// }{}

\newcommand{\assign}{\leftarrow}

\SetKwProg{Function}{function}{}{}

\RestyleAlgo{ruled}
\usepackage[utf8]{inputenc} 
\usepackage[T1]{fontenc}    
\usepackage{url}            
\usepackage{booktabs}       
\usepackage{amsfonts}       
\usepackage{nicefrac}       
\usepackage{microtype}      

\usepackage[dvipsnames]{xcolor}

\usepackage{amssymb}
\usepackage{mathtools}
\usepackage{amsthm}
\usepackage{dsfont}
\usepackage{listings}
\usepackage[accsupp]{axessibility}

\usepackage[pagebackref,breaklinks,colorlinks,allcolors=blue]{hyperref}
\theoremstyle{plain}
\newtheorem{theorem}{Theorem}[section]

\newtheorem{lemma}[theorem]{Lemma}

\theoremstyle{definition}

\newtheorem{assumption}[theorem]{Assumption}

\theoremstyle{remark}

\def\BE{\begin{equation}}
\def\EE{\end{equation}}
\def\BEA{\begin{eqnarray}}
\def\EEA{\end{eqnarray}}

\newcommand{\R}{\mathbb{R}}

\newcommand{\E}{\mathbb{E}}

\DeclareMathOperator*{\argmin}{arg\,min}
\newcommand{\X}{\mathcal{X}}
\newcommand{\Y}{\mathcal{Y}}

\newcommand{\norm}[1]{\left\lVert#1\right\rVert}
\newcommand{\sqrbr}[1]{\left[#1\right]}

\newcommand{\parent}[1]{\left(#1\right)}

\newenvironment{proofsketch}[1][\proofname]{%
 \proof[Proof Sketch]%
}{\endproof}
\usepackage{chngcntr}
\newcommand{\layoutmode}{2} 

\newcommand{\figwidth}{%
  \ifnum\layoutmode=3
    0.32
  \else
    0.48
  \fi
}

\newcommand{\ifthree}[1]{%
  \ifnum\layoutmode=3
    #1%
  \fi
}

\def\paperID{14339} 
\def\confName{CVPR}
\def\confYear{2026}

\title{Is the Modality Gap a Bug or a Feature? A Robustness Perspective}

\author{%
  \begin{tabular}[t]{c}
    Rhea Chowers \\
    Hebrew University\\
    Jerusalem, Israel \\
    \texttt{rhea.chowers@mail.huji.ac.il}\\\\
    Udi Barzelay \\
    IBM Research\\
    Haifa, Israel \\
    \texttt{udib@il.ibm.com}
  \end{tabular}
  \hspace{1cm}
  \begin{tabular}[t]{c}
    Oshri Naparstek \\
    IBM Research\\
    Haifa, Israel \\
    \texttt{Oshri.Naparstek@ibm.com}\\\\
    Yair Weiss \\
    Hebrew University\\
    Jerusalem, Israel \\
    \texttt{yair.weiss@mail.huji.ac.il}
  \end{tabular}
}

\begin{document}

\maketitle
\begin{abstract}
Many modern multi-modal models (e.g. CLIP) seek an embedding space in which the two modalities are aligned. Somewhat surprisingly, almost all existing models show a strong modality gap: the distribution of images is well-separated from the distribution of texts in the shared embedding space. Despite a series of recent papers on this topic, it is still not clear why this gap exists nor whether closing the gap in post-processing will lead to better performance on downstream tasks. In this paper we show that under certain conditions, minimizing the contrastive loss yields a representation in which the two modalities are separated by a global gap vector that is orthogonal to their embeddings.  We also show that under these conditions the modality gap is monotonically related to robustness: decreasing the gap does not change the clean accuracy of the models but makes it less likely that a model will change its output when the embeddings are perturbed.  Our experiments show that for many real-world VLMs we can significantly increase robustness by a simple post-processing step that moves one modality towards the mean of the other modality, without any loss of clean accuracy.
\end{abstract}

\section{Introduction}
Foundation models are a common and successful approach to solving a variety of problems - models are trained on extremely large datasets and then adapted to other tasks either by fine tuning or zero shot application, under the assumption that they learned a meaningful embedding space. A specific and popular class of these models are contrastive multi-modal models. These are trained to learn a shared embedding space for different data types such as images and texts by aligning pairs of similar instances from the two modalities via a contrastive loss \citep{ntupletloss,oord, zhang22a}. These shared embedding spaces, specifically that of CLIP \citep{clip}, are commonly used for various tasks such as zero shot classification, text-to-image retrieval, text to image generation, and more.

The success of multi-modal models on a wide range of different applications  (e.g. text-to-image generative models \citep{dalle2}), might imply that they have successfully solved the optimization problem on which they were trained, yet some open questions persist. One of these regards the "modality gap" \citep{liang2022mind} - a phenomenon in which the two modalities are embedded into linearly separable regions of the unit hypersphere. Figure~\ref{fig:intro}a shows projections of CLIP's embedding of the MS-COCO validation set onto its first three principal components. Clearly, the two modalities are well separated in embedding space. This widely observed phenomenon across models and data types contradicts the training objective which aims to make similar texts and images perfectly overlap in embedding space—an objective known as "alignment" \citep{wang2020hypersphere}.

Recent works have tried to explain the cause for the formation of the gap, suggesting either information imbalance between images and texts (i.e. when a single caption is appropriate for multiple images) \cite{trigger} or the use of contrastive loss in cases when the effective dimension of each modality is small ("dimensionality collapse") \cite{zhang2024connect}, but a clear explanation remains elusive. Concurrently, investigation into the downstream effects of the gap has also failed to reach a clear conclusion \cite{liang2022mind,shi2023towards,informationtheorymodalitygap,liao2025multimodal,acceptthegap,zhang2024connect}.  \Cref{fig:bug_or_feature} presents the application of various multi-modal models on common downstream tasks. The performance of these models varies non-monotonously when adjusting the gap by moving the modality means towards each other, implying no clear relation between performance and the size of the gap. 

\begin{figure}[t]
  \centering
  \begin{subfigure}[t]{0.45\linewidth}
   { {\includegraphics[width=1.1\linewidth]{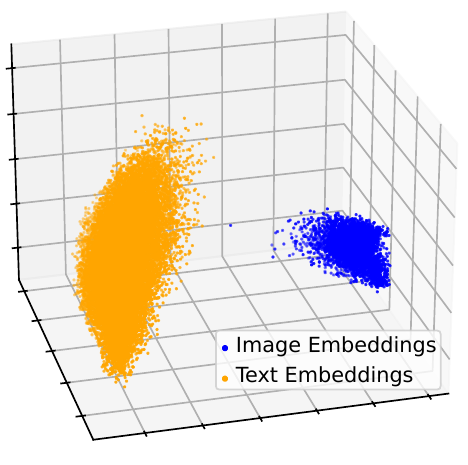}}
    \caption{CLIP's embedding of MS-COCO validation set}}
  \end{subfigure}
\hfill
 \begin{subfigure}[t]{0.45\linewidth}
    {{\includegraphics[width=\linewidth]{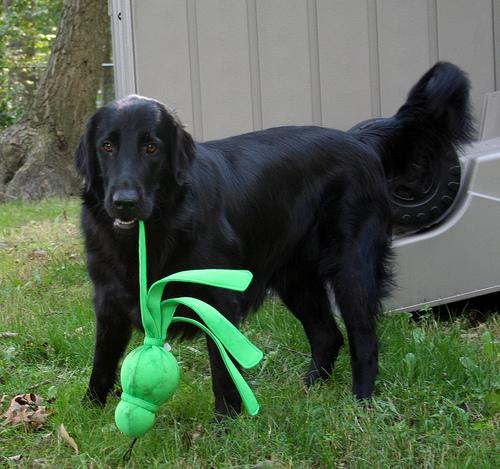}}
    \caption{"a photo of a \textbf{dog} / frog" {\color{green} $\vee$}\\
 "photograph of the dog / \textbf{frog}" {\color{red} $\times$}}
 }
  \end{subfigure}
  
  \caption[]{Left: Projections of CLIPs embedding of the MS-COCO validation set onto its first 3 principal components. A clear separation between images and texts is evident. Right: An image from the Imagenet \citep{imagenet} validation set that's misclassified by CLIP when changing the caption template. Multi-modal models can lose more than $6\%$ of their accuracy when replacing the caption template.}
    \label{fig:intro}
\end{figure}

Another open topic regarding multi-modal models and deep learning in general  focuses on the robustness of models to small variations in their input. Various works have shown that despite their large training corpus and the numerous augmentations used during their training, the performance of these models on downstream tasks is sensitive to small variations in the input such as single pixel shifts \citep{shifman} and caption rephrasing \citep{zhou2022cocoop,kimparaphrasing}. Figure~\ref{fig:intro}b demonstrates this brittleness. When trying to classify an image as a dog or a frog using CLIP, the model  outputs different predictions depending on the the two texts with which the image is compared, even if the change is semantically meaningless. 

In this work we establish a direct link between these two lines of work. In \cref{theory} we expand on \cite{zhang2024connect} and prove that under realistic assumptions, multi-modal models learn an embedding space containing a modality gap due to their initialization and the dynamics induced by the contrastive loss. We continue to prove that the size of the gap is  correlated with the robustness of these models, i.e. their ability to perform consistently under small changes to the embedding space. Our theory, along with empirical verification on real models and datasets, with both controlled (i.e. Gaussian) and real noise (i.e. text rephrasing), leads us to conclude that if robustness is a desired property of the models then the modality gap is indeed a bug caused by an interaction of the multi-modal contrastive loss and the initialization scheme.

Inspired by our theoretical findings, in \cref{section:algo} we present a simple, efficient algorithm to close the gap which can be applied as a post processing step when using the different models' embeddings spaces. In \cref{section:experiments} we demonstrate a use case of this algorithm in dealing with various noise models such as quantization and other real noise settings where the theoretical assumptions are not met. 

\begin{figure}[t]
  \centering
  \begin{subfigure}[t]{\figwidth\linewidth}
   { {\includegraphics[width=\linewidth]{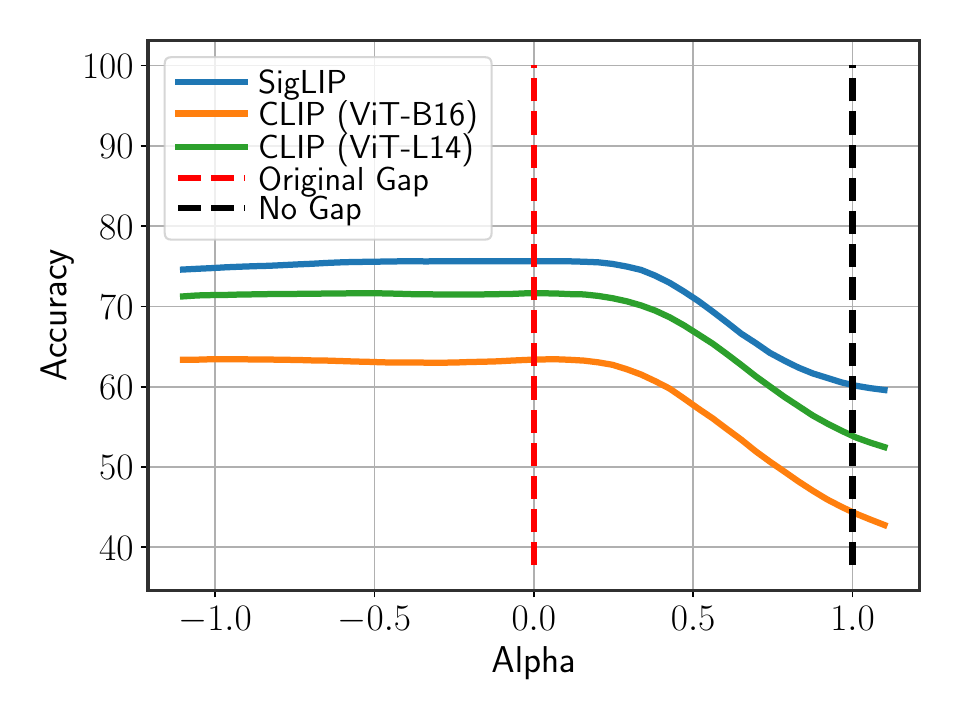}}
    \caption{Zero Shot on Imagenet}
    \label{fig:naive_gapinet}
    }
  \end{subfigure}
\hfill
 \begin{subfigure}[t]{\figwidth\linewidth}
    {\includegraphics[width=\linewidth]{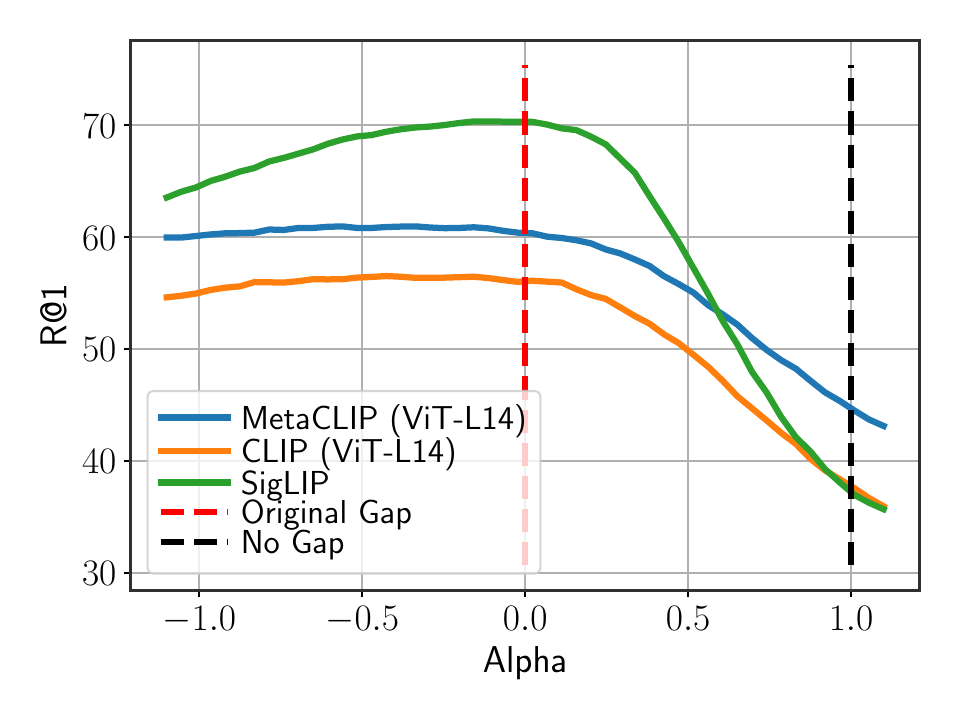}}
    \caption{I2T Retrieval on MS-COCO}
    \label{fig:naive_gapcoco}
  \end{subfigure}
     \ifthree{
     \hfill
     \begin{subfigure}[t]{\figwidth\linewidth}
  \includegraphics[width=\figwidth\linewidth]{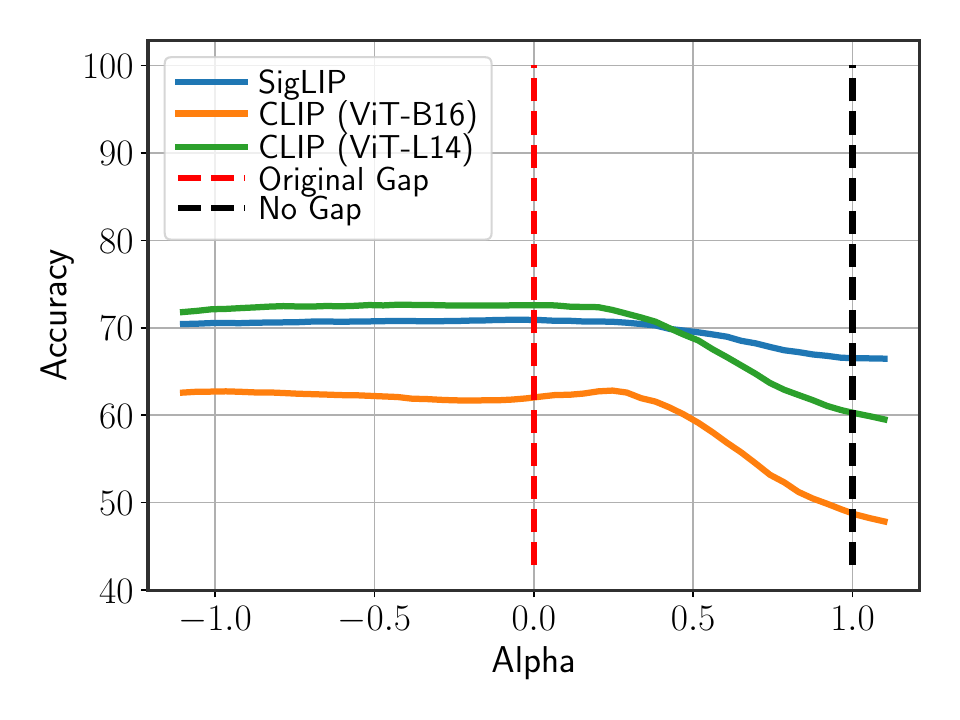}
  \caption{Zero Shot on CIFAR100}
  \label{fig:naive_gapcifar100}
  \end{subfigure}
  }
  
  \caption[]{Is the modality gap a bug or a feature? Changing the gap by moving the text embeddings by $\alpha\cdot\vec{g}$ has an inconsistent effect on downstream performance. The figure follows  \citep{liang2022mind} and shows that for some datasets, models benefit from slightly enlarging the gap (\ref{fig:naive_gapinet}), some from maintaining it (\ref{fig:naive_gapcoco}) \ifthree {and some from closing it (\ref{fig:naive_gapcifar100})}.} 
    \label{fig:bug_or_feature}
\end{figure}

\section{Related Work}

Multi-modal contrastive learning \citep{zhang22a} is a method for learning a shared embedding space for different modalities e.g. images and texts. Models trained in this manner such as CLIP \citep{clip} and others \citep{xu2023metaclip, siglip} show impressive performance on downstream tasks, e.g. zero shot classification. This has led to widespread use of CLIP's embedding space in various tasks, most notably in text to image generation \citep{dalle2,kang2023gigagan}. While the contrastive loss \citep{ntupletloss, oord} of multi-modal models pushes the modality pairs' embeddings to be aligned and uniformly spread on the unit sphere \citep{wang2020hypersphere}, in practice multi-modal models map the different modalities to distinct areas of the shared embedding space, creating a modality gap \citep{liang2022mind}. 

Several explanations as to the formation of the gap were suggested, stemming from either the data used to train these models \citep{liang2022mind,trigger}, the inherent differences between text and images \citep{trigger} or the training procedure \citep{shi2023towards,yaras2024explainingmitigatingmodalitygap}. Recently, Zhang et al.~\cite{zhang2024connect} have suggested that the gap is caused due to a combination of training with the contrastive loss and a dimensional collapse in the initial embeddings of the two modalities \citep{dimcollapse}. In this work we show that dimensionality collapse is neither necessary nor sufficient for the formation of the gap and we provide a more nuanced analysis of the dynamics of learning which cause the gap.

In addition to explaining the formation of the gap, previous works have also attempted to understand its effects on downstream performance. It has previously been shown that a large gap limits the ability to perform cross-modal tasks with uni-modal data \citep{zhang2024connect} and is negatively correlated in specific settings of cross modal retrieval \citep{liao2025multimodal,pmlr-v235-ming24a}. In general, however, recent work concludes that there is at most a weak effect of the gap on downstream performance \citep{informationtheorymodalitygap,liang2022mind,trigger} while other factors such as model and embedding size seem to have a stronger effect \citep{trigger}. Some works even suggest maintaining the modality gap \citep{acceptthegap, informationtheorymodalitygap}, or show that it has no effect in specific downstream settings \citep{zhang2023diagnosing}. We build on these findings and and prove that the gap indeed has negative effect on downstream performance in terms of robustness.

Alongside the attempt to understand the gap and its implications, previous works suggested various methods for closing the gap. These were mainly motivated by failure modes in specific settings \citep{mistretta2025cross,liao2025multimodal}, or by the inherent contradiction of the gap to the contrastive loss objective. Some works suggest closing the gap by altering the training procedure of multi-modal models with specific losses and augmentations \citep{oh2023geodesic,yaras2024explainingmitigatingmodalitygap}, while others include the training of models that implicitly close the gap as part of pipelines performing complex downstream tasks such as text to image generation \citep{dalle2,Patel_2024_CVPR}. Our work falls in line with these, but is applicable in a general setting - for any downstream task that relies on cross modal nearest neighbor retrieval, and with no additional training required.

A different line of work studies multi-modal models in the context of robustness \citep{pmlr-v162-fang22a,tu2023acliprobustness,clipadversarial}. Various works have shown that their performance on downstream tasks is sensitive to both textual variations \citep{zhou2022cocoop,qiu2022benchmarking}, adversarial attacks \citep{Schlarmann_2023_ICCV}, various image perturbations such as augmentations \citep{qiu2022benchmarking}, spurious correlations \citep{counteranimal}, natural variations \citep{shifman} and more \citep{9878622}. These are tackled mainly via finetuning and retraining with different objectives or data \citep{pmlr-v235-schlarmann24a,zhou2022cocoop,Khattak_2023_ICCV,pmlr-v202-shu23a}. Our theoretical work aims to enrich the understanding of these shortcomings by proving that a modality gap causes degrading in robustness.

\section{Theory}\label{theory}
\subsection{Notation and Definitions}
As in previous work, we assume two modalities that are embedded on the $d$-dimensional unit hypersphere. Our embeddings are in the form of sample matrices $\X\in \R^{N\times d}$ and $\Y\in \R^{M\times d}$ with modality means $\mu_x = \frac{1}{N}\sum_{\vec{x}\in \X}\vec{x}$ and $\mu_y = \frac{1}{M}\sum_{\vec{y}\in \Y}\vec{y}$. Many downstream tasks rely on cross modal retrieval - finding the nearest neighbor of sample $\vec{y}\in \Y$ among the samples in the other embedding $\X$ by computing the $\ell_2$ distance between the two: $\text{NN}(\vec{y},\X) =\argmin_{\vec{x}\in \X} \norm{\vec{x} - \vec{y}}^2$.

Since the embeddings lie on the unit hypersphere, the NN can equivalently be computed using cosine similarity.

We focus on models trained with the multi-modal contrastive loss. During training we are given a paired dataset $\{x_i,y_i\}$ (e.g. image and matching caption) and minimize the following loss: 
\begin{equation}
    \label{mmloss}
    \mathcal{L}(\X,\Y) =
    -\frac{1}{N} \sum_{i=1}^N \log \ell(\vec{x}_i,\Y) -\frac{1}{N}\sum_{j=1}^N \log \ell(\vec{y}_j,\X) 
\end{equation}
Where $\ell$ contrasts an  embedding with the other modality:
\begin{equation}
    \ell(\vec{x}_i,\Y)  =\frac{e^{-\norm{\vec{x}_i-\vec{y}_i}/\tau}}{\sum_{\vec{y}_j\in \Y} e^{-\norm{\vec{x}_i-\vec{y}_j}/\tau}} 
\end{equation}
with $\tau>0$ being the temperature parameter.

We follow previous works \citep{zhang2023diagnosing} and define, for a paired dataset the local gap as the vector between a pair from the two modalities:
\begin{equation}\label{localgap}
    \vec{g}_i=\vec{x}_i-\vec{y}_i
\end{equation}
We also define the global gap \citep{liang2022mind,zhang2023diagnosing,shi2023towards} as the vector between the modality means:
\begin{equation}\label{gap}
    \vec{g}=\frac{1}{M}\sum_{\vec{y} \in \Y}\vec{y} - \frac{1}{N}\sum_{\vec{x}\in \X}\vec{x} = \mu_y-\mu_x
\end{equation}
When the two modalities are balanced ($N=M$), the global vector is the average of local vectors. Nevertheless, the global gap vector is well defined for any dataset consisting of two modalities, whether a bijective pairing exists or not.

\begin{figure}[h]
    \centering
    \begin{tabular}{cc}
       
        \includegraphics[width=0.47\linewidth]{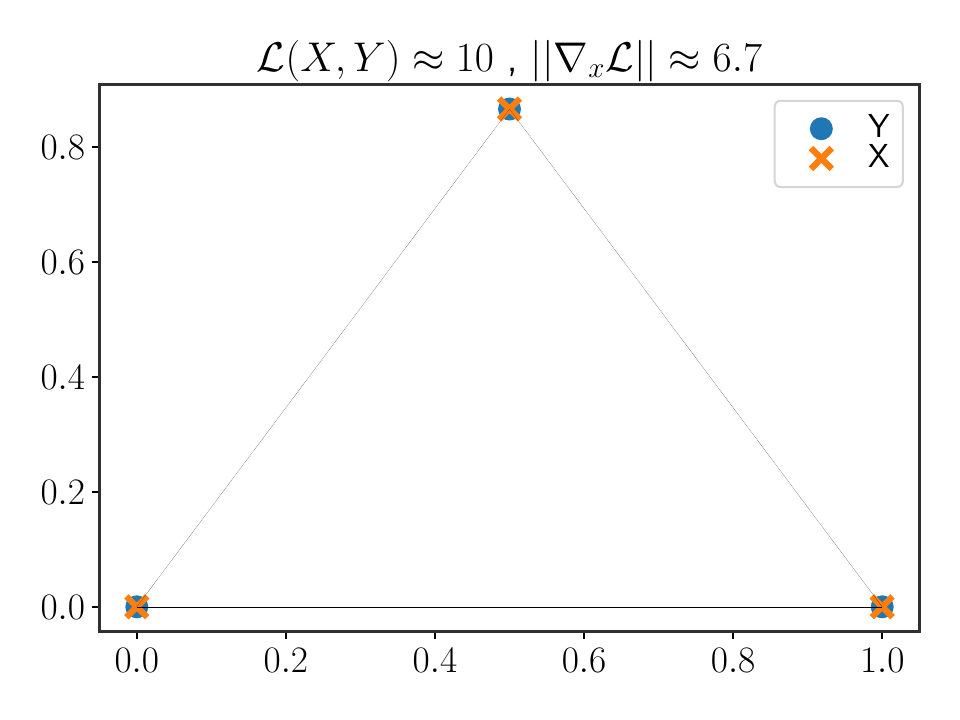}
         & 
          \includegraphics[width=0.47\linewidth]{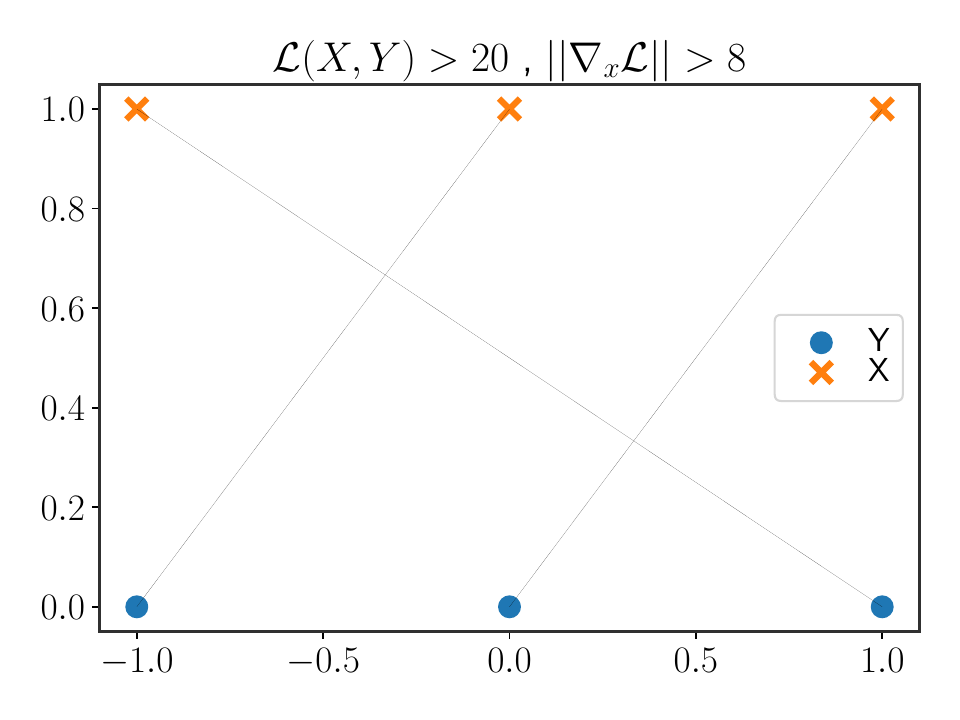}
         \\
          \includegraphics[width=0.47\linewidth]{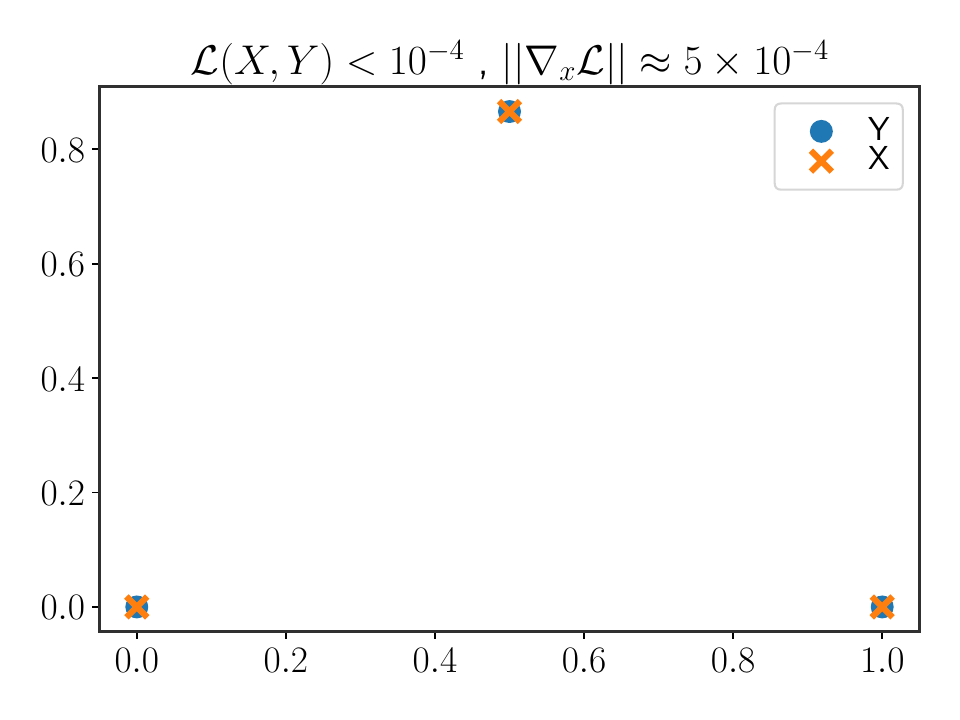}
         & 
          \includegraphics[width=0.47\linewidth]{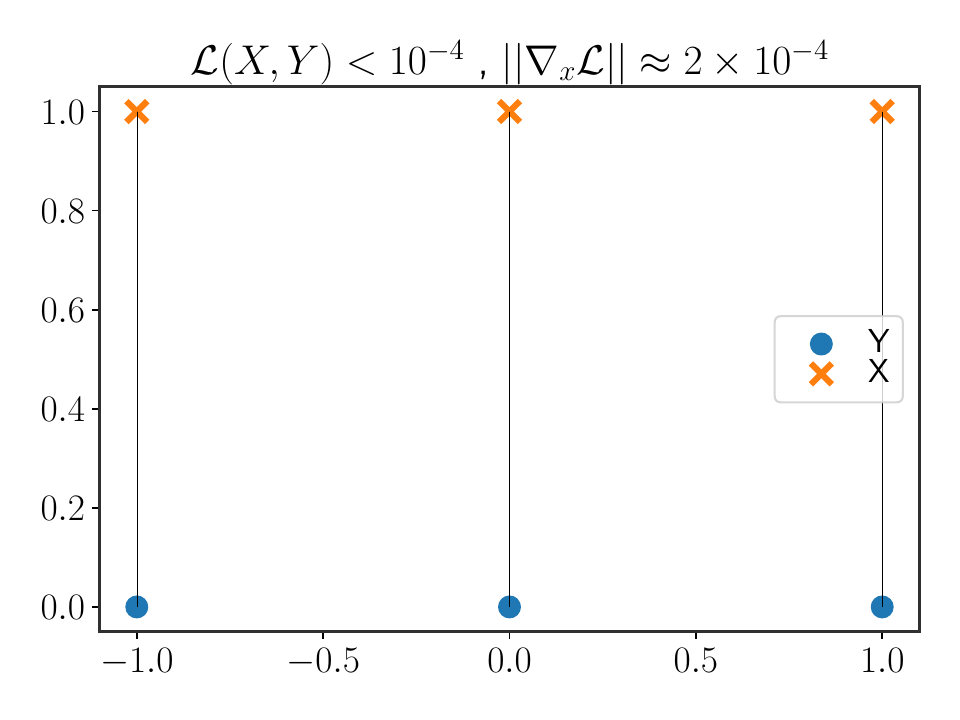}
    \end{tabular}
    \caption{Three points in each modality in  $\R^2$ and the corresponding multi-modal contrastive loss along with the  magnitude of the gradient of the loss. Lines connect  true pairs. As long as the points satisfy relative alignment - the true pair of any point is also its nearest neighbor - the loss and the gradient magnitude are close to zero, even when there exists a gap.}
    \label{fig:theory1} 
\end{figure}

\subsection{Why Should a Global Gap Exist?}\label{sec:gapexistence}

\begin{figure*}
    \centering
    {\setlength{\tabcolsep}{0pt}
    \begin{tabular}{@{}l@{}l@{}l@{}l@{}l@{}}
        \includegraphics[width=0.2\linewidth]{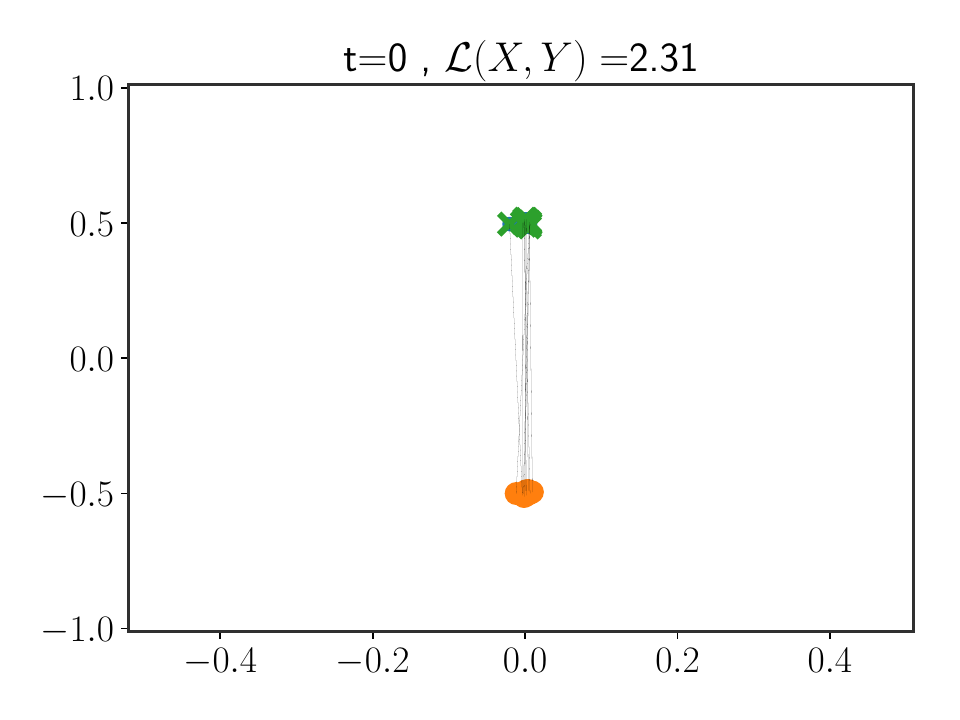} &
        \includegraphics[width=0.2\linewidth]{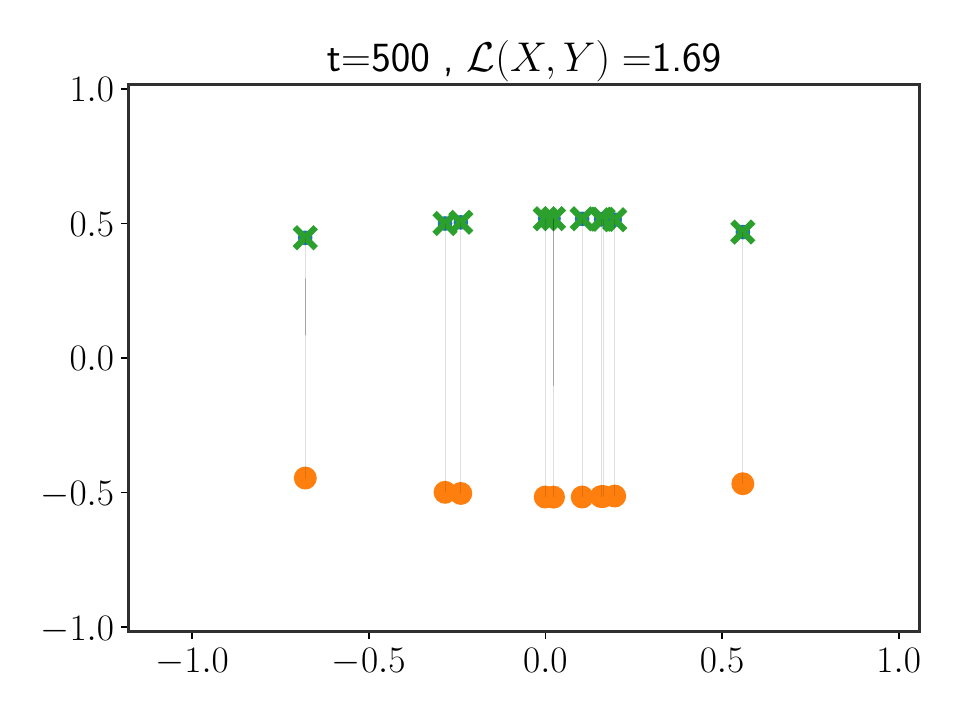} &\includegraphics[width=0.2\linewidth]{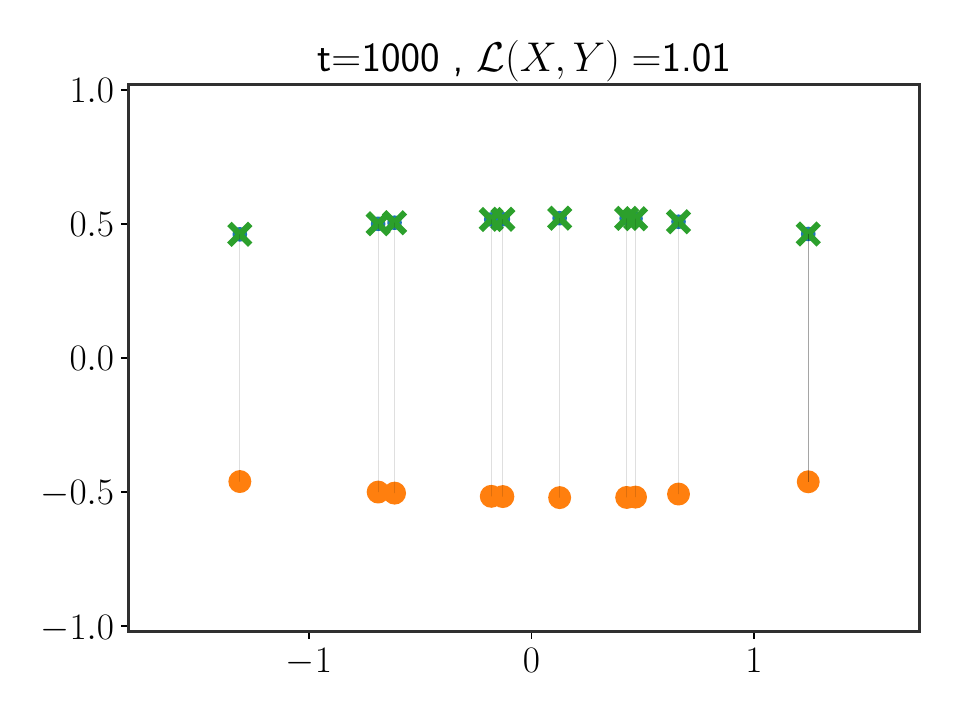} &
        \includegraphics[width=0.2\linewidth]{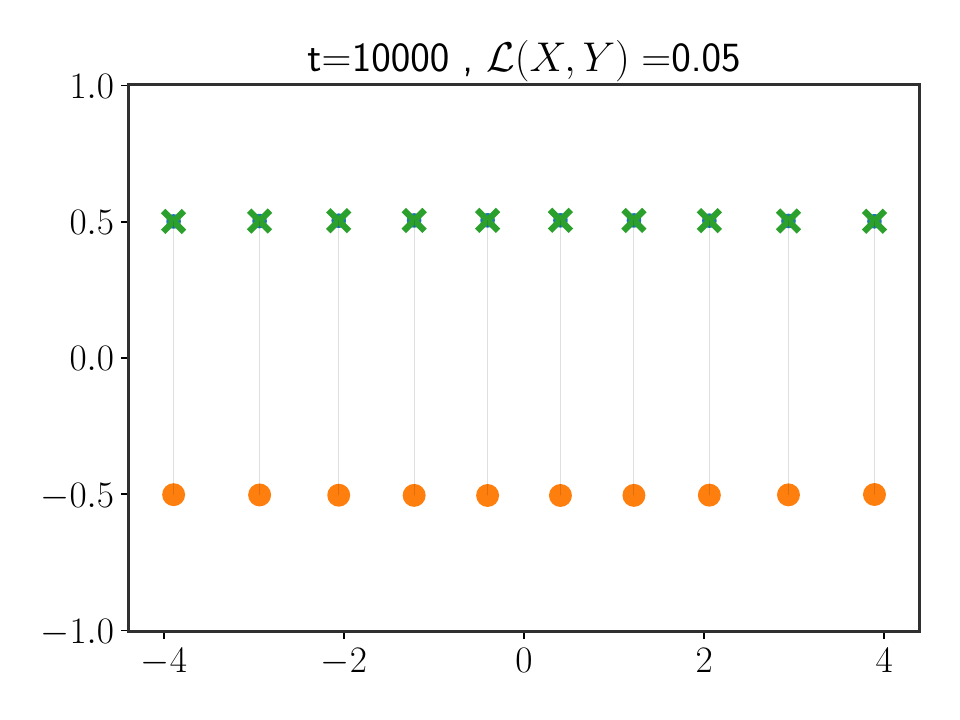} &
        \includegraphics[width=0.2\linewidth]{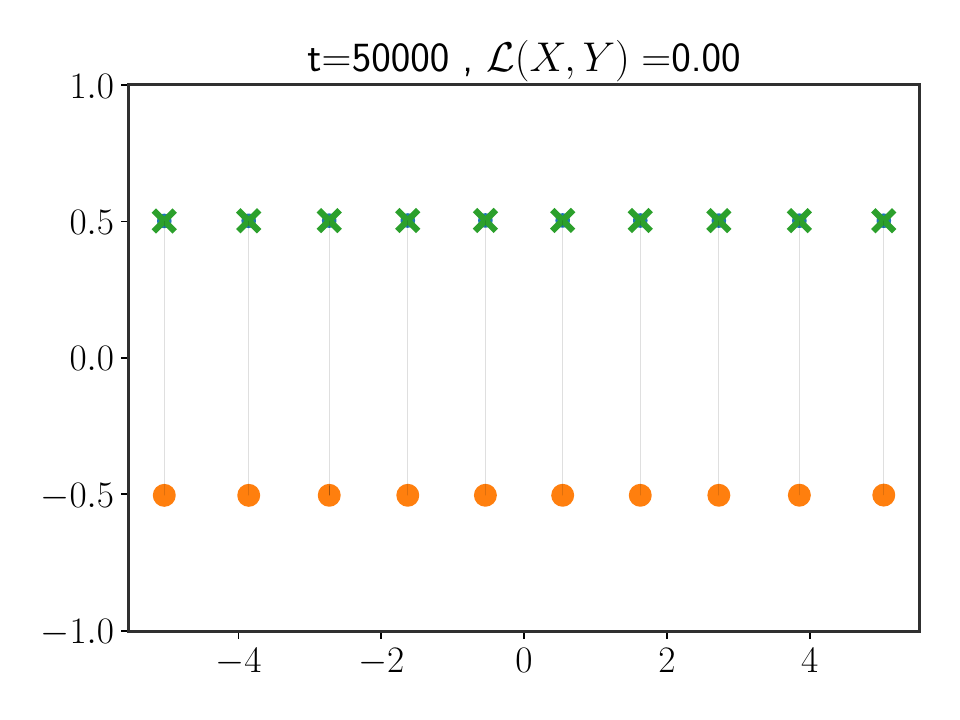}  \\
        \includegraphics[width=0.2\linewidth]{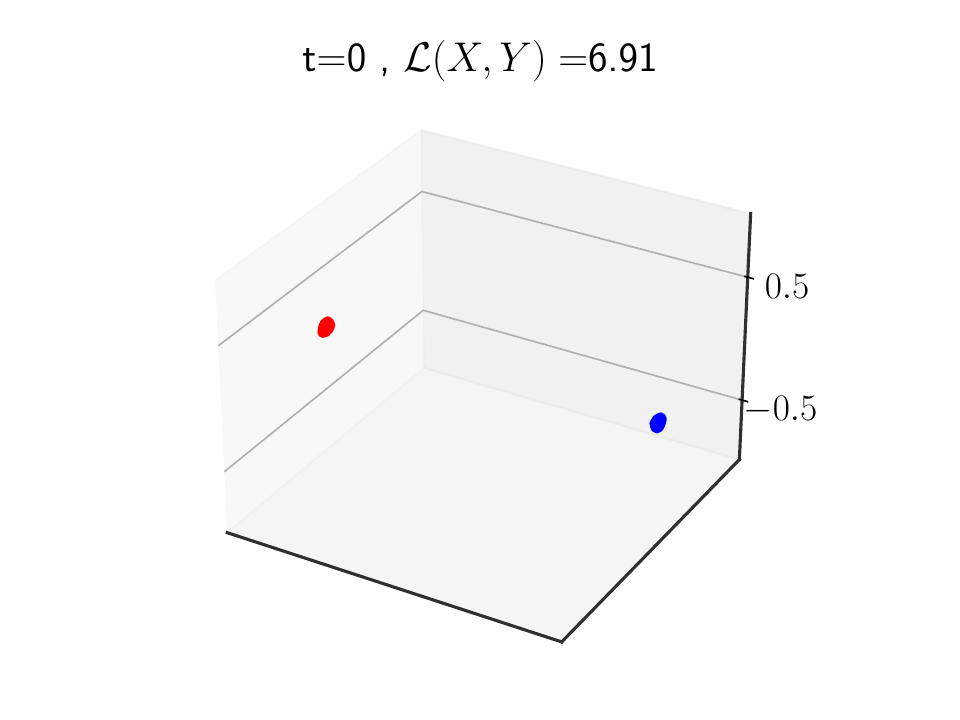} &
        \includegraphics[width=0.2\linewidth]{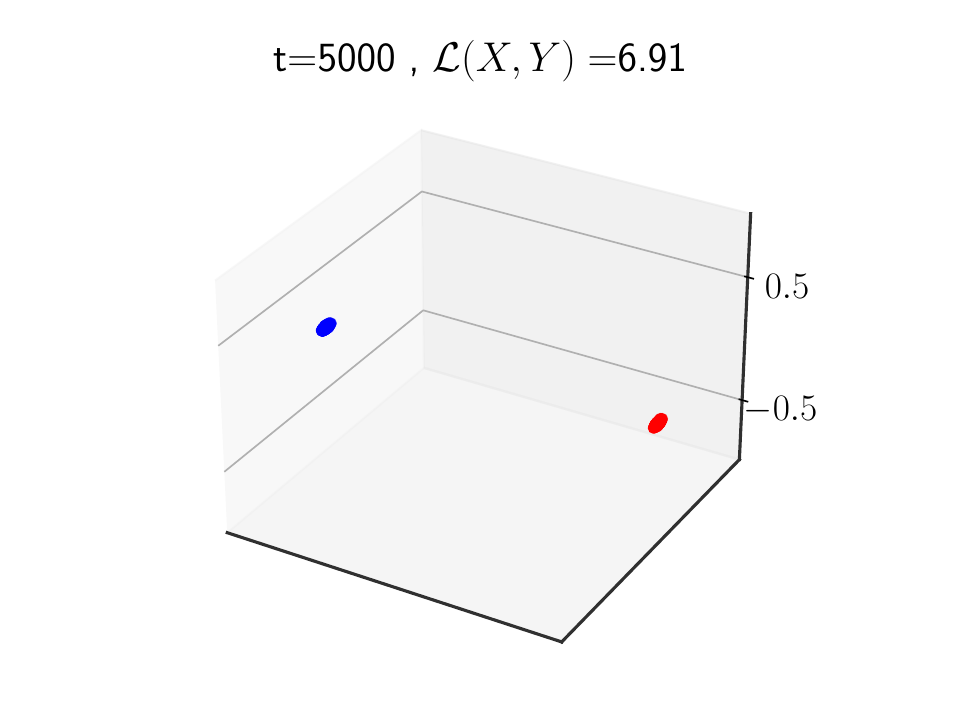} &
        \includegraphics[width=0.2\linewidth]{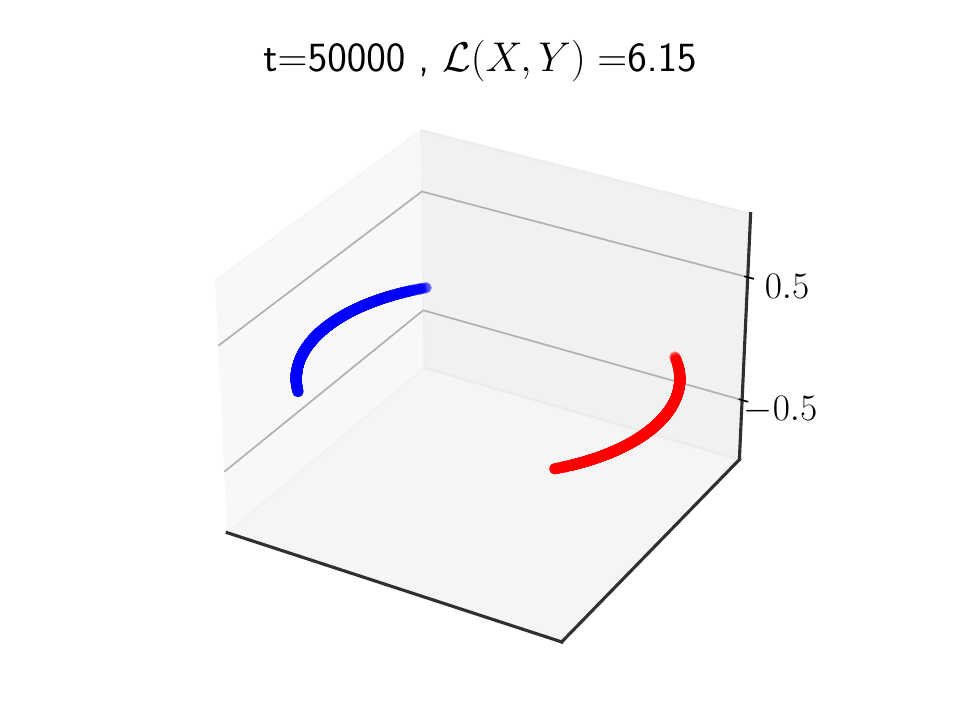} &
        \includegraphics[width=0.2\linewidth]{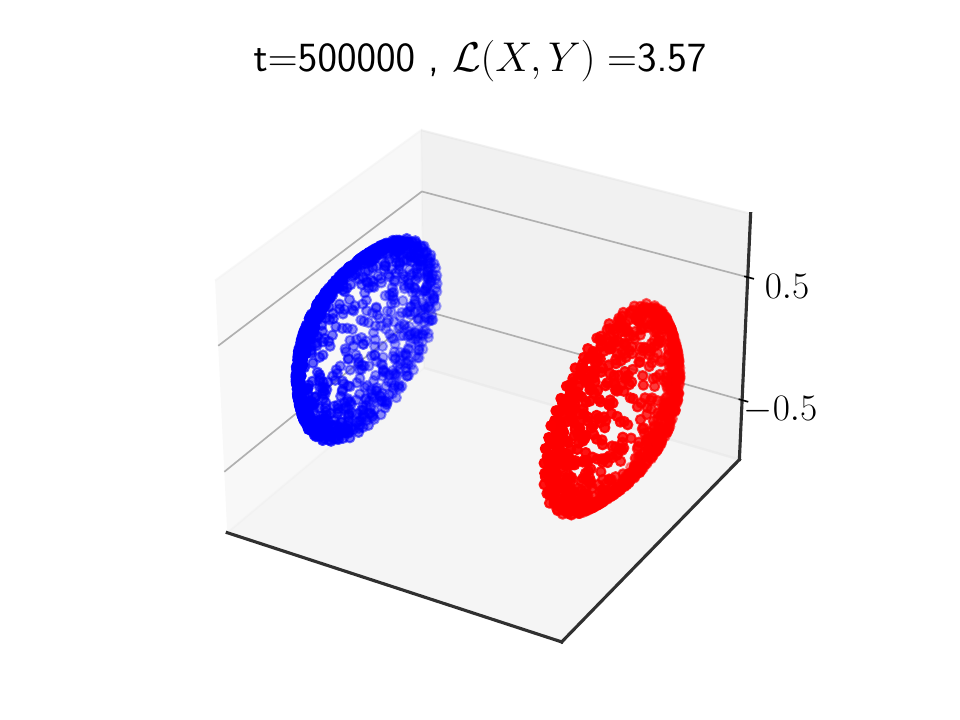} &
        \includegraphics[width=0.2\linewidth]{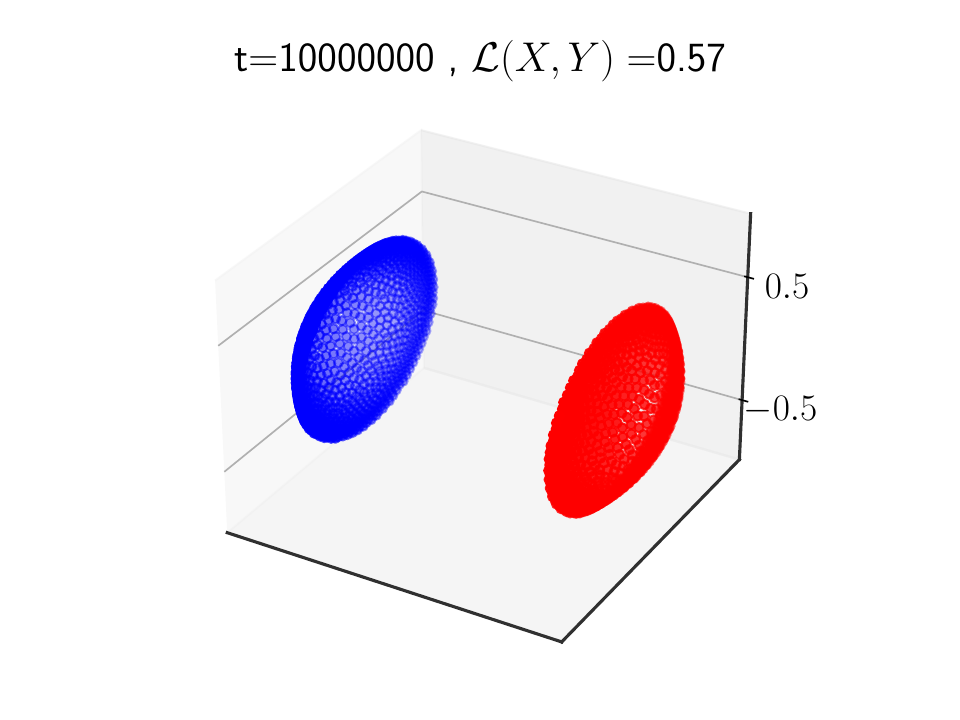}
        \end{tabular}
        }
    \caption{The evolution of embeddings using gradient descent on the contrastive loss (\cref{mmloss}) starting with two tight clusters. Both when using an unnormalized embedding space (top) and when constricting the embeddings to the sphere (bottom) they converge to a solution that has almost zero loss and for which a global gap vector exists and the gap vector is orthogonal to both modalities. We prove that minimizing the contrastive loss will lead to such a solution under certain assumptions. For full training details see \cref{appendix:simulations}.}
    \label{fig:theory2}
\end{figure*}

Since the contrastive loss rewards embeddings in which the representations in the two modalities are aligned \cite{wang2020hypersphere}, the presence of a global gap in embeddings learned using such a loss is highly surprising. Indeed, it is easy to show that the global minimum of the contrastive loss is obtained when the representations are maximally aligned, i.e. $\forall i: x_i=y_i$, entailing no local or global gap, i.e. $\vec{g}_i=0$. But it is also easy to see that there exist many other embeddings that achieve almost the same loss as the globally aligned embedding. \Cref{fig:theory1} shows an example. The lines are the local gap vectors $\vec{g}_i$ which show the correspondence between the two modalities: ideally we want circles and crosses that are connected to lie on top of each other in embedding space, meaning $\vec{g}_i=0$. As can be seen in the figure, the loss and the gradient can be made arbitrarily close to zero without perfect alignment: as long as the corresponding embeddings are closer than all other cross-modal pairs, the loss and the gradient will approach $0$. In particular, the solution in the bottom right (where there is a modality gap) achieves practically the same loss and gradient magnitude as the desired, perfectly aligned solution.  Intriguingly, gradient descent sometimes seems to prefer these low-loss solutions over the global optimum. \Cref{fig:theory2} shows dynamics of points in low dimensions, in which we minimize the contrastive loss with respect to the embeddings coordinates. In both of these cases, training converges to a solution that has almost zero loss but for which a global gap vector remains. 

Zhang et al. \cite{zhang2024connect} postulate that dimensional collapse, i.e. when the embeddings' variance within each modality is concentrated in relatively few dimensions, is the cause for the modality gap. But why does the modality gap happen when there is equal variance in all dimensions? \Cref{fig:theory2,fig:varshrink} show such cases - both embeddings are initialized by sampling from isotropic Gaussians, therefore maintaining equal variance in all dimensions even after normalization to the sphere. Still learning yields a representation with a global gap, even without dimensional collapse. Therefore, further investigation is needed.

We begin by defining:
\begin{align}
Q^x(i,j) &= \frac{e^{-\|x_i-y_j\|^2/\tau}}{\sum_{j'} e^{-\|x_i-y_{j'}\|^2/\tau}}   \notag \\
Q^y(i,j) &= \frac{e^{-\|x_i-y_j\|^2/\tau}}{\sum_{i'} e^{-\|x_{i'}-y_{j}\|^2/\tau}}   
\end{align}

Which are known as the softmax of the logit matrix \cite{clip}. $Q^x$ can be thought of as a soft assignment matrix: for each $x_i$ it measures the normalized affinity between $x_i$ and all points $y_j$ in the other modality while $Q^y$ does the same for each point $y_i$.  By construction, the sum of the rows of $Q^x$ are equal to one, and the sum of the columns of $Q^y$ are equal to one.
Using these matrices the loss becomes:
\begin{equation}\label{mmloss}
    \mathcal{L}(\X,\Y) = -\parent{
    \frac{1}{N}\sum_{i=1}^N \log Q^x(i,i) + \log Q^y(i,i) 
    }
\end{equation}

The gradient of the loss can be written:

\begin{equation}\label{eq:grad}
\frac{\partial \mathcal{L}}{\partial y_i}
\propto (x_i - y_i)
- \sum_{k=1}^N \frac{Q^x(k,i) + Q^y(k,i)}{2}
  (x_k - y_i)
\end{equation}


The gradient has two opposing effects: an attractive force $(x_i - y_i)$ pushes $y_i$ towards it matching $x_i$ and repulsive forces which push $y_i$ away from $x_k$. The strength of the repulsive force is determined by $\sum_{k=1}^N \frac{Q^x(k,i) + Q^y(k,i)}{2}= \frac{1}{2}(1+S_i^y)$ with $S_i^y := \sum_k Q^x(k,i)$. Intuitively, this means that points for which $S_i^y>1$ will be pushed away from the other modality (their repulsive force are stronger than the attractive force) while those for which $S_i<1$ will be pushed towards the other modality. Interestingly, points for which $S^y_i>1$ are those that are close to the other modality (\cref{fig:varshrink} top), so gradient descent will reduce the variance in the direction of the initial gap vector (\cref{fig:varshrink} bottom). The following theorem formalizes this intuition.

\begin{theorem}\label{theorem:newtheory}
    Let $\mu_x$ and $\mu_y$ be the modalities' empirical means.
     Assume that $\forall i: \norm{y_i-\mu_y} \leq \epsilon, \norm{x_i-\mu_x} \leq \epsilon$,
    and $\norm{\mu_x - \mu_y}\gg\epsilon$.
    Then the gradient of the contrastive loss with respect to the  embeddings 
    is approximately in the direction that shrinks the variance of each modality in the direction of the gap vector $\vec{g}=\mu_x-\mu_y$.
\end{theorem}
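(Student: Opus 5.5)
The plan is to linearize the gradient \cref{eq:grad} around the two cluster means. Write $x_k=\mu_x+\delta^x_k$ and $y_i=\mu_y+\delta^y_i$ with $\norm{\delta}\le\epsilon$, and let $\hat g=\vec g/\norm{\vec g}$ with $\vec g=\mu_x-\mu_y$. Since $\norm{\vec g}\gg\epsilon$, the squared distances expand as
\[
\norm{x_k-y_i}^2=\norm{\vec g}^2+2\vec g\cdot(\delta^x_k-\delta^y_i)+O(\epsilon^2).
\]
The common term $\norm{\vec g}^2$ cancels in every softmax. This gives, to first order in $\epsilon$,
\[
Q^x(k,i)\approx \frac{1}{N}\bigl(1+\tfrac{2}{\tau}\vec g\cdot(\delta^y_i-\bar\delta^y)\bigr),
\]
where $\bar\delta^y=0$ because $\mu_y$ is the empirical mean. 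An analogous formula holds for $Q^y$. The key structural fact is that the affinities depend on $y_i$ only through its projection $\vec g\cdot\delta^y_i$ onto the gap direction. Hence $S^y_i=\sum_k Q^x(k,i)\approx 1+\frac{2}{\tau}\vec g\cdot\delta^y_i$. So points whose offset from $\mu_y$ points toward $\mu_x$ have $S^y_i>1$, and points pointing away have $S^y_i<1$. This is exactly the intuition stated before the theorem.

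Next I substitute these expressions into \cref{eq:grad}. Both the attractive term $x_i-y_i$ and the repulsive term $\sum_k w_{ki}(x_k-y_i)$ equal $\vec g$ plus $O(\epsilon)$. The gradient is therefore dominated by
\[
\Bigl(1-\tfrac12(1+S^y_i)\Bigr)\vec g=-\tfrac{1}{\tau}(\vec g\cdot\delta^y_i)\,\vec g+O(\epsilon).
\]
The remaining pieces are $\delta^x_i-\delta^y_i$ and the weighted average of $\delta^x_k$; they are $O(\epsilon)$ and enter with weights of order one. The leading term is of size $\norm{\vec g}^2\epsilon/\tau$, which dominates the $O(\epsilon)$ remainder because $\norm{\vec g}\gg\epsilon$ (and, in the regime I would assume, $\norm{\vec g}^2/\tau\gtrsim1$).

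I then check the sign. The descent step $-\eta\,\partial\mathcal L/\partial y_i$ moves $y_i$ along $-\hat g$ by an amount proportional to $(\hat g\cdot\delta^y_i)$, up to orientation conventions. I will verify this against the sign convention of \cref{eq:grad}: a point closer to the other modality must be pushed away. For the projected coordinate $p_i=\hat g\cdot\delta^y_i$ the step is therefore $p_i\mapsto(1-c)p_i$ with $c\propto\eta\norm{\vec g}^2/\tau>0$. The mean of $p_i$ is zero and the shrinkage factor is common to all $i$, so $\frac1N\sum_i p_i^2$ decreases, i.e.\ the variance along $\vec g$ shrinks. The orthogonal components receive only $O(\epsilon)$ contributions and are untouched at leading order. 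By the symmetric computation with the roles of $x$ and $y$ swapped, using $S^x_i$, the same holds for the $x$ modality.

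I expect the main obstacle to be the bookkeeping that makes ``approximately'' precise. There are two issues. First, I must confirm that the uniform $O(\epsilon)$ terms, such as $\delta^x_i-\delta^y_i$, do not swamp the gap-direction shrinkage. Second, I must check that the first-order Taylor expansion of the softmax is valid, which requires $\norm{\vec g}\epsilon/\tau$ to be small. Otherwise $Q$ is not near uniform, and I would fall back on a monotonicity argument: $S^y_i$ is increasing in $\vec g\cdot\delta^y_i$ because the exponent is linear in it. That monotonicity still yields the correct sign for the variance-reducing direction, though without the clean linear factor.
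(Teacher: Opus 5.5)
Your proposal is correct and follows the paper's proof essentially step for step. You linearize the softmax around the two means to get $S^y_i\approx 1+\frac{2}{\tau}\vec g\cdot(y_i-\mu_y)$, substitute into \cref{eq:grad} where every $x_k-y_i$ equals $\vec g+O(\epsilon)$, and read off the dominant term $\propto(1-S^y_i)\vec g$, which pulls each $y_i$ back toward $\mu_y$ along the gap. Your regime bookkeeping goes beyond what the paper states, with two refinements: dominance of this term (of size $\norm{\vec g}^2\epsilon/\tau$) over the first-order remainder (essentially the partner offset $x_i-\mu_x$) needs $\norm{\vec g}^2/\tau\gg 1$ rather than following from $\norm{\vec g}\gg\epsilon$, and the sign should be settled by direct differentiation, which shows the bracket in \cref{eq:grad} is a positive multiple of $-\nabla_{y_i}\mathcal{L}$.
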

\begin{proofsketch}
We compute the Taylor expansion of $k(x)=e^{-\|x-y\|^2/\tau^2}$ around $\mu_x$ and use it to simplify  \cref{eq:grad}, yielding:
\begin{equation}
\frac{\partial \mathcal{L} }{\partial y_i} \approx \frac{-2}{\tau} 
[\vec{g} \cdot (y_i - \mu_y) ] \vec{g} + O(\epsilon) 
\end{equation}
Thus $y_i$ moved along the gap in the direction of $\mu_y$. See full proof in \cref{appendix:proofs}.
\end{proofsketch}

\begin{figure}[hb!]
    \centering
    \includegraphics[width=0.5\linewidth]{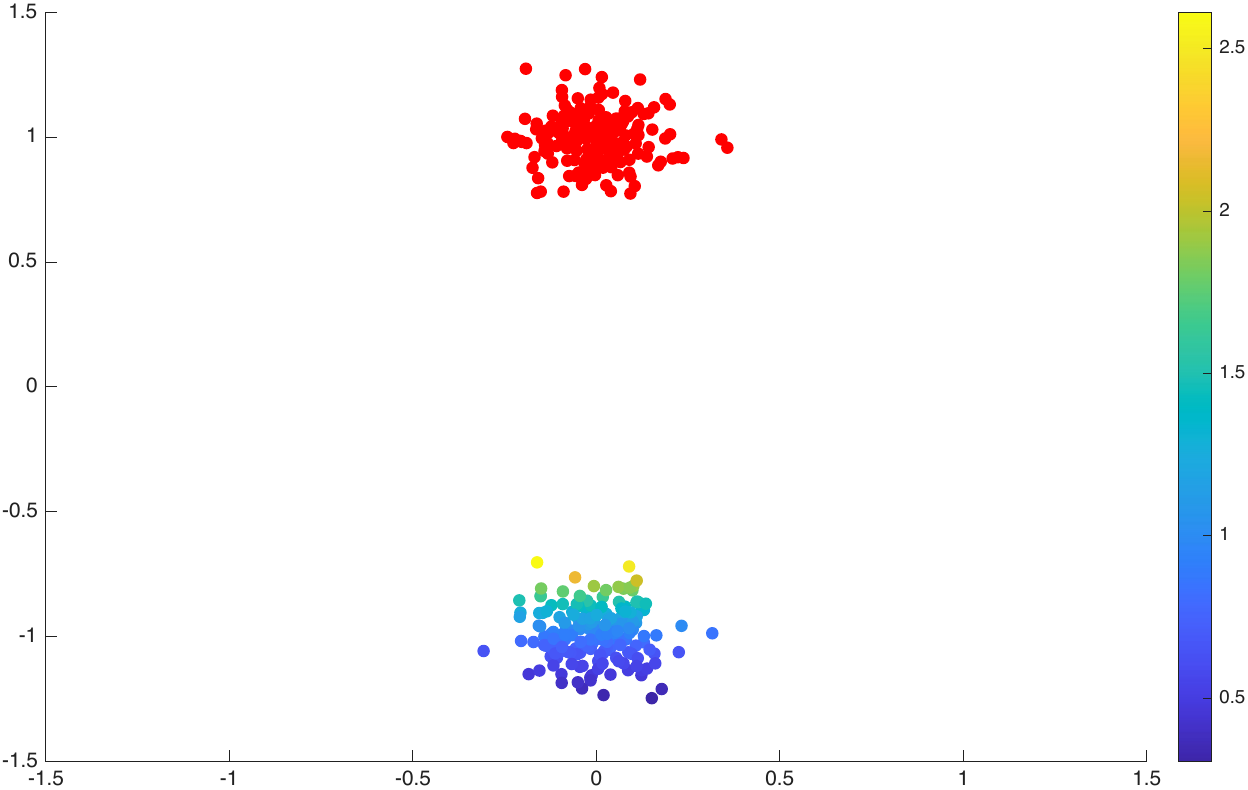}
    \begin{tabular}{ccc}
        \small Init. &
        \small $t=25$ &
        \small $t=100$  \\
        \includegraphics[width=0.3\linewidth]{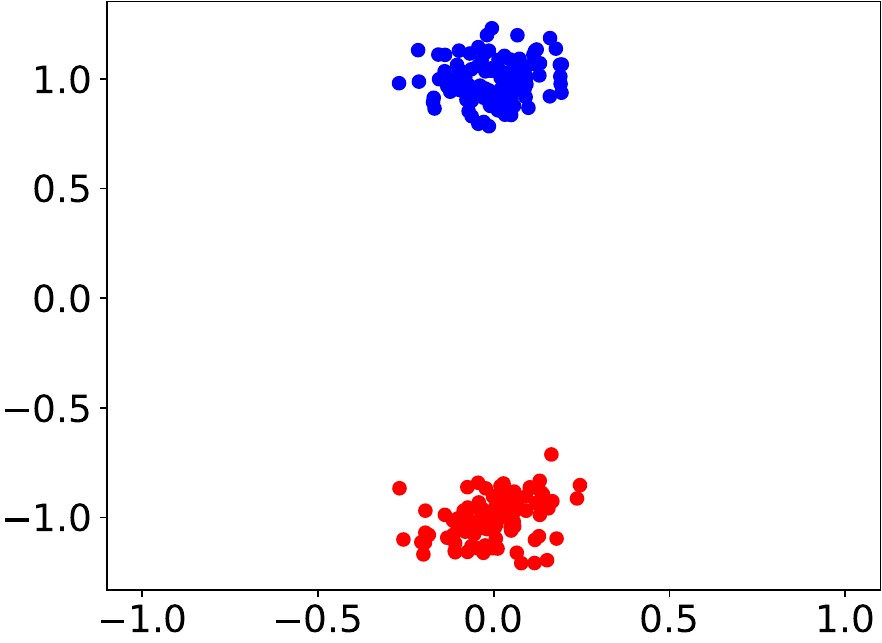} &
        \includegraphics[width=0.3\linewidth]{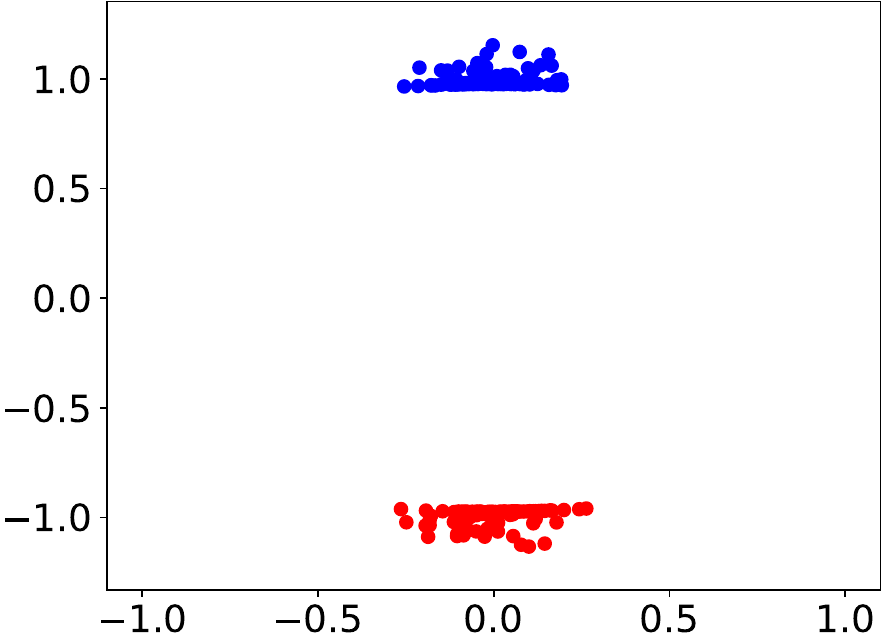} &
        \includegraphics[width=0.3\linewidth]{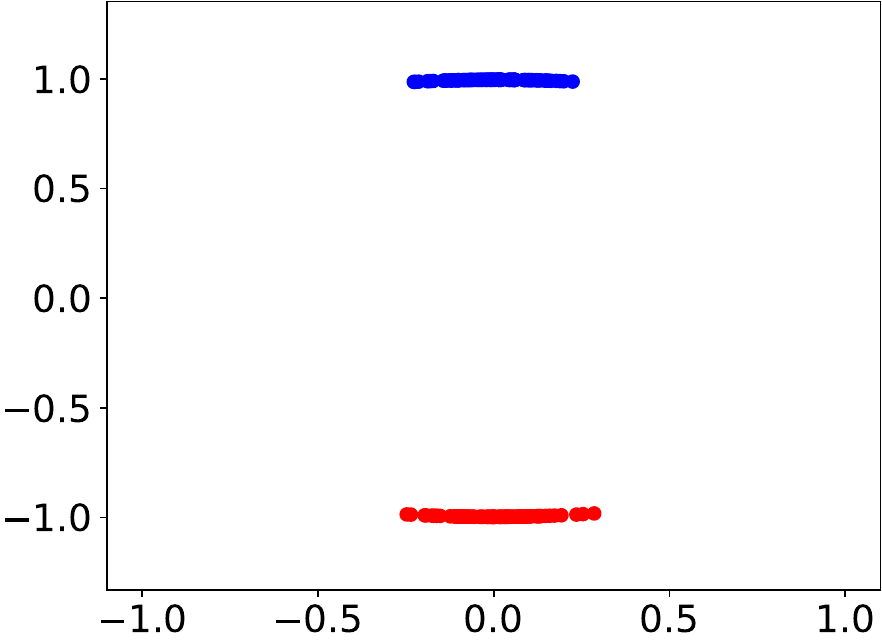}
    \end{tabular}

    \caption{ {\bf (Top:)} Two initial embeddings where the bottom embedding is color-coded based on $S^y_i$. $S^y_i$ decreases with distance to the other modality. {(\bf Bottom:}) The training dynamics of a toy model initialized with isotropic Gaussians. Training starts by shrinking variance in the direction of the gap according to \cref{theorem:newtheory}.}
    \label{fig:varshrink}
\end{figure}

\Cref{fig:varshrink} illustrates
\cref{theorem:newtheory}. The embeddings begin with equal variance in all directions, but training first shrink the variance in each modality along the initial gap direction $\vec{g}$, resulting in no variance in the gap.


We now prove that once there exists a direction along which both modalities are constant,   gradient descent will converge to a solution where both the local gap vectors $g_i$ and the global gap vector $\vec{g}$ are orthogonal to both modalities.   Our result similar to that of \cite{zhang2024connect}  but it highlights the crucial role of double-stochasticity: convergence to a nonzero, orthogonal gap vectors requires that the two matrices $Q^x,Q^y$ are {\em doubly-stochastic}, i.e. $S_j^x=S_j^y=1$.   While both matrices $Q^x,Q^y$ are singly stochastic by definition, only in certain point configurations will they also be doubly stochastic. One such case is when the nearest neighbor matching between the two sets is a perfect match so each $x_i$ is assigned to exactly one $y_j$ and vice versa. 
Empirically, we have found that in the later stages of gradient descent, double stochasticity does indeed hold approximately but this is not true in the initial iterations (\cref{fig:boxplots} left).  

\begin{theorem}\label{theorem:mmloss}
    Assume that at a given iteration $t$ 
 there exists a direction $\vec{v}$ such that $\forall \vec{x}_i\in \X: \vec{v}^T \vec{x}_i=a$ and $\forall \vec{y}_i\in \Y: \vec{v}^T \vec{y}_i=b$. Assume that for all iterations after $t$ the matrices $Q^x,Q^y$ are doubly stochastic. Then gradient descent on the contrastive loss (\cref{mmloss}) will converge to a solution where all the local gap vectors are the same $\vec{g}_i=\vec{g}$ and furthermore, $\vec{g}_i$ and $\vec{g}$ are orthogonal to both $X$ and $Y$.
\end{theorem}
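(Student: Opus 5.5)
The plan is to decompose every embedding into its component along $\vec{v}$ and its component in the orthogonal complement, and to track how gradient descent acts on each part under double stochasticity. Write $x_i = a\vec{v} + x_i^\perp$ and $y_i = b\vec{v} + y_i^\perp$, taking $\|\vec{v}\|=1$. In the squared distances appearing in $Q^x,Q^y$ we have $\|x_i-y_j\|^2 = (a-b)^2 + \|x_i^\perp - y_j^\perp\|^2$. The constant $(a-b)^2$ cancels in both softmaxes, so $Q^x$ and $Q^y$ depend only on the orthogonal components.

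The first step is to show that the $\vec{v}$-structure is invariant under the dynamics. Project the gradient \cref{eq:grad} onto $\vec{v}$:
\[
\vec{v}^T\frac{\partial \mathcal{L}}{\partial y_i} \propto (a-b) - \sum_k \frac{Q^x(k,i)+Q^y(k,i)}{2}(a-b) = (a-b)\Bigl(1-\tfrac{1}{2}\bigl(S_i^y + 1\bigr)\Bigr),
\]
using that the columns of $Q^y$ sum to one. Double stochasticity gives $S_i^y=1$, so this projection vanishes. The symmetric computation for $\partial\mathcal{L}/\partial x_i$ uses the row sums of $Q^y$, equal to one by double stochasticity, together with the row sums of $Q^x$, which are always one. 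Hence the $\vec{v}$-components stay fixed at $a$ and $b$ for all iterations after $t$. By induction the hypothesis of the theorem then persists.

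In the unnormalized case no step affects the $\vec{v}$ direction. If the embeddings are constrained to the sphere, I would check that the projection or renormalization step also preserves this. It does as long as $\|x_i^\perp\|$ remains constant, and the tangential projection of a gradient with no $\vec{v}$-component keeps the $\vec{v}$-coordinate fixed to first order. This is one place I expect some care to be needed, and I would state it as a first-order argument.

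The second step is to analyze the orthogonal parts. Since $Q^x,Q^y$ depend only on $\{x_i^\perp\},\{y_j^\perp\}$, the dynamics of the orthogonal components is exactly gradient descent on the contrastive loss restricted to $\vec{v}^\perp$, with the constant $(a-b)\vec{v}$ offset decoupled. In this restricted problem the $\vec{v}$-offset no longer prevents alignment. I would argue that the loss drives the orthogonal components toward alignment, $x_i^\perp - y_i^\perp \to 0$. The attractive term pulls pairs together, and under double stochasticity the repulsive terms sum to a convex combination with total weight one. Consequently $\sum_i \|x_i^\perp-y_i^\perp\|^2$ can be shown to be nonincreasing, e.g. by summing the updates and using that the $Q$'s are doubly stochastic, so repulsions cancel in aggregate. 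Concretely, I would show that the mean difference $\frac1N\sum_i(x_i^\perp-y_i^\perp)$ is driven to zero: summing \cref{eq:grad} over $i$ with doubly stochastic weights makes the repulsive contributions collapse to a term proportional to the mean difference. The same argument applied to individual pairs, via a Lyapunov function on the local gaps, gives $x_i^\perp-y_i^\perp\to 0$.

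At convergence, then, $\vec{g}_i = x_i - y_i = (a-b)\vec{v}$ for all $i$, so all local gaps are equal to $\vec{g}=(a-b)\vec{v}$. Since $N=M$, the global gap equals this common local gap up to sign. Orthogonality means orthogonality to the variation of each modality, namely to $x_i-\mu_x$ and $y_i-\mu_y$. This holds because these differences have zero $\vec{v}$-component: all $x_i$ share $a$ and all $y_i$ share $b$.

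The main obstacle is the convergence claim in the second step, namely proving that pairwise alignment in $\vec{v}^\perp$ is actually reached rather than merely approached. I would handle it with a Lyapunov argument on $\sum_i\|x_i^\perp-y_i^\perp\|^2$, leaning on double stochasticity, and fall back on asserting convergence to the low-loss aligned configuration shown in \cref{fig:theory2} if a clean monotonicity bound fails.
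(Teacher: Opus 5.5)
Your Step 1 matches the paper's argument: project \cref{eq:grad} on $\vec v$ and use double stochasticity. Your remark that $(a-b)^2$ cancels from both softmaxes, so that the $\vec v^\perp$-components evolve by gradient descent on the restricted loss, correctly justifies the change of coordinates the paper makes without comment.

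The gap is in Step 2, where your proposed mechanism runs the wrong way. Summing \cref{eq:grad} over $i$ with doubly stochastic weights does not leave a term proportional to the mean difference. It leaves nothing:
\[
\sum_i (x_i-y_i)-\sum_k x_k\sum_i \tfrac{Q^x(k,i)+Q^y(k,i)}{2}+\sum_i y_i\sum_k \tfrac{Q^x(k,i)+Q^y(k,i)}{2} = N(\mu_x-\mu_y)-N\mu_x+N\mu_y=0.
\]
This holds because under double stochasticity both the row sums and the column sums of $\tfrac12(Q^x+Q^y)$ equal one, and the same computation applies to the $x_i$. Hence $\mu_x$ and $\mu_y$ are frozen after iteration $t$, and $\frac1N\sum_i(x_i^\perp-y_i^\perp)$ is conserved. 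If it is nonzero at iteration $t$, which the hypotheses allow, then $x_i^\perp-y_i^\perp\to0$ is impossible.

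A concrete instance: take a unit $\vec c\perp\vec v$, let $p_i$ be the vertices of a regular polygon in a plane orthogonal to both, and set $x_i=a\vec v+p_i$, $y_i=b\vec v+\vec c+p_i$. The kernel matrix is symmetric circulant, so $Q^x=Q^y$ is exactly doubly stochastic. Gradient descent only dilates the polygon, so this persists. Your own Step 1, applied to $\vec c$, freezes the offset, so $x_i^\perp-y_i^\perp=-\vec c$ forever.

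The same issue defeats your Lyapunov function. Along the flow, with $W=Q^x+Q^y$ and $u_i=x_i-y_i$,
\[
\frac{d}{dt}\,\frac12\sum_i\norm{u_i}^2\propto-\sum_{i,k}\bigl(2\delta_{ik}-\tfrac12(W_{ik}+W_{ki})\bigr)u_i^Tu_k-\sum_{i,k}(W_{ik}-W_{ki})\,x_i^Ty_k .
\]
The first term is negative semidefinite but blind to the common mean of the $u_i$, and the second term has no sign.

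So the missing ingredient is the conservation of the global gap, and it changes what Step 2 can deliver. In the example the theorem's conclusion still holds: all $\vec g_i$ are equal and orthogonal to both modalities, but $\vec g_i=(a-b)\vec v-\vec c$, not $(a-b)\vec v$. You therefore need one of two things. Either add a hypothesis such as $\mu_x-\mu_y\parallel\vec v$ at iteration $t$, which is natural if $\vec v$ is the initial gap direction from \cref{theorem:newtheory}. Or change the target: show that the local gaps converge to the frozen global gap and that this vector ends up orthogonal to the modalities. In both cases you still need a genuine convergence argument in $\vec v^\perp$.

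The paper's proof handles this step only by invoking ``standard uniformity and alignment arguments'' for the minimizer of the restricted loss. That says nothing about the limit of doubly-stochastic gradient descent and runs into the same conserved quantity. Your fallback of asserting convergence would reproduce that gap rather than close it.

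Finally, your sphere remark is incorrect as stated. The tangential projection $g-(x_i^Tg)x_i$ of a gradient with $\vec v^Tg=0$ has $\vec v$-component $-a\,x_i^Tg$, which is first order and varies with $i$.
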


\begin{proofsketch}
By the assumption that $v^T x_i=a$ for all $x_i$ and $v^T y_j=b$ for all $y_j$ we see that using \cref{eq:grad}:
\begin{eqnarray*}
v^T \frac{\partial \mathcal{L}}{\partial y_i}
&\propto  &v^T \left( x_i - y_i \right) \\
&& - v^T \sum_{k=1}^N \frac{Q^x(k,i) + Q^y(k,i)}{2}
  (x_k - y_i) \\
  &=& 0
\end{eqnarray*}

where the last equality used the assumption of double stochasticity. 
This means that gradient descent will not change the values of either modality in direction $\vec{v}$ and hence will converge to a solution in which this direction is unchanged and in all other directions $\vec{u}$, the modalities will be perfectly aligned $\vec{u}^T x_i = \vec{u}^T y_i$. 
\end{proofsketch}

Together, \cref{theorem:newtheory} and \cref{theorem:mmloss} explain the dynamics seen in \cref{fig:theory2}. At first, according to \cref{theorem:newtheory}, learning begins by shrinking the variance in the direction of the gap. Once this happens and there is no variance in the direction of the gap, then according to \cref{theorem:mmloss} the gradient in the vertical dimension of the top figure remains zero and the loss continues to decrease by moving the points only in the horizontal direction until the loss reaches a value close to zero with a constant, orthogonal gap vector. In the bottom figure, the initial iterations reduces the variance in the gap direction and subsequent iterations decrease  while moving in the other  two directions.

This phenomenon of an orthogonal gap has previously been noticed to hold empirically for multi-modal models trained on various datatypes \cite{zhang2023diagnosing}. We term this the \textit{global orthogonality assumption} and formally define it as:

\begin{assumption} \label{assump}
    The gap vector $\vec{g}$ is orthogonal to the affine subspaces defined by $\X$ and $\Y$:
    \begin{align} \label{eq:ortho}
        \forall x\in \X, y\in \Y: \cos \parent{x-\mu_x , \vec{g}  } \notag \\
        = \cos \parent{y-\mu_y , \vec{g} }=0
    \end{align}
\end{assumption}

Our theorems provide an explanation for the origin of assumption~\ref{assump} provided that the initialization of the two modalities takes the form of two, separated and concentrated clusters. \Cref{fig:boxplots} (right) shows that this is indeed the case - the distance between the two modalities is much larger than the inner modality distance for random initializations of CLIP on real data. This has already been shown to be the case for multi-modal models, a phenomenon termed the "cone effect" \cite{liang2022mind}.

To summarize, multi-modal models are commonly initialized in non-intersecting tight clusters with an existing gap. According to our analysis,  under these conditions initial training dynamics decrease the variance of these clusters in the direction of the initial gap,  while subsequent iterations maintain the gap while aligning the two modalities only in directions orthogonal to the gap, thus creating global orthogonality at convergence.   
For completeness, we provide more empirical support for our assumptions in \cref{appendix:assumptions} and further discussion of the dimensionality collapse assumption in \cref{appendix:dimcollapse}.

\begin{figure}[ht!]
    \centering
    \begin{tabular}{cc}        \includegraphics[width=0.45\linewidth]{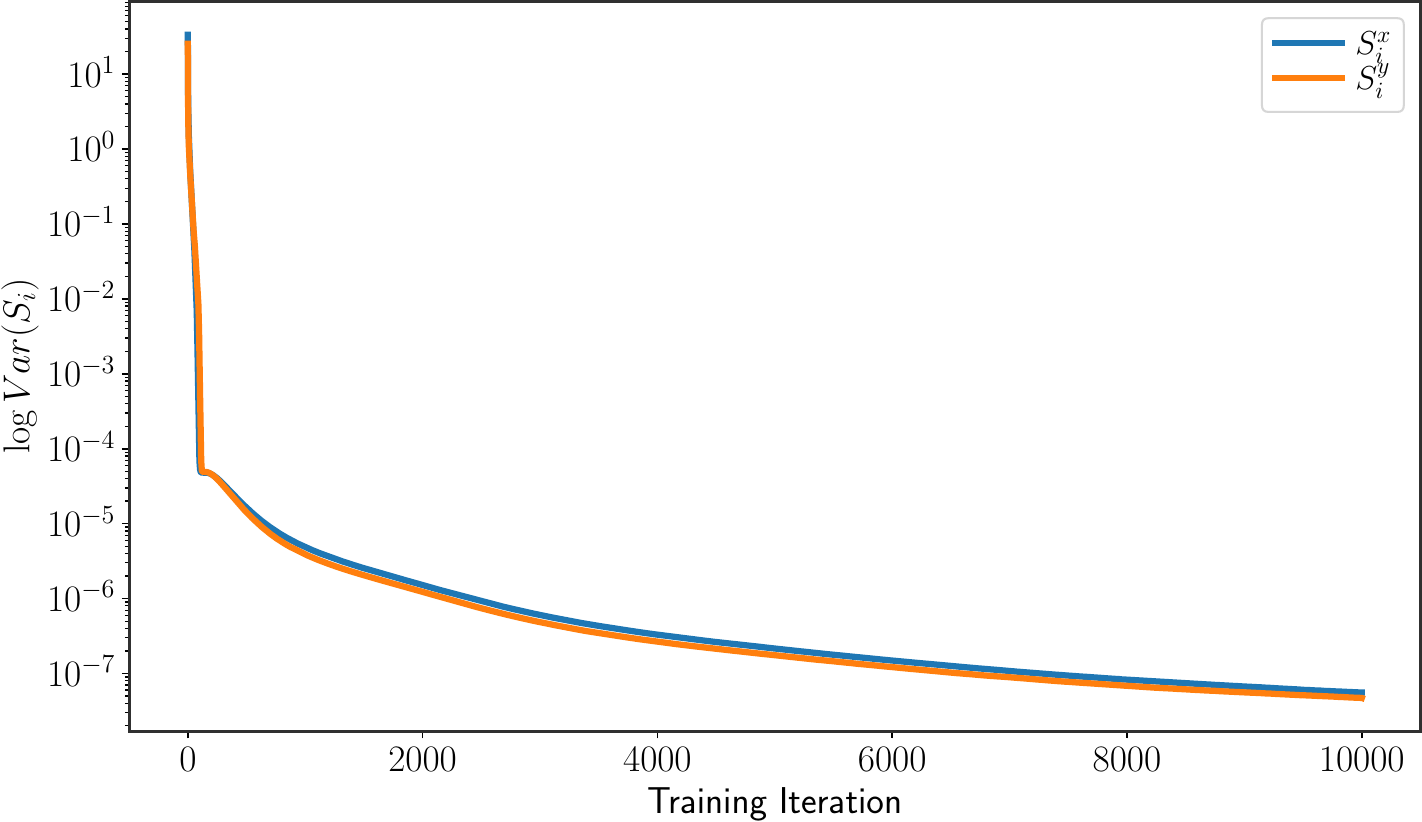}\label{fig:si} \hfill &
        \includegraphics[width=0.45\linewidth]{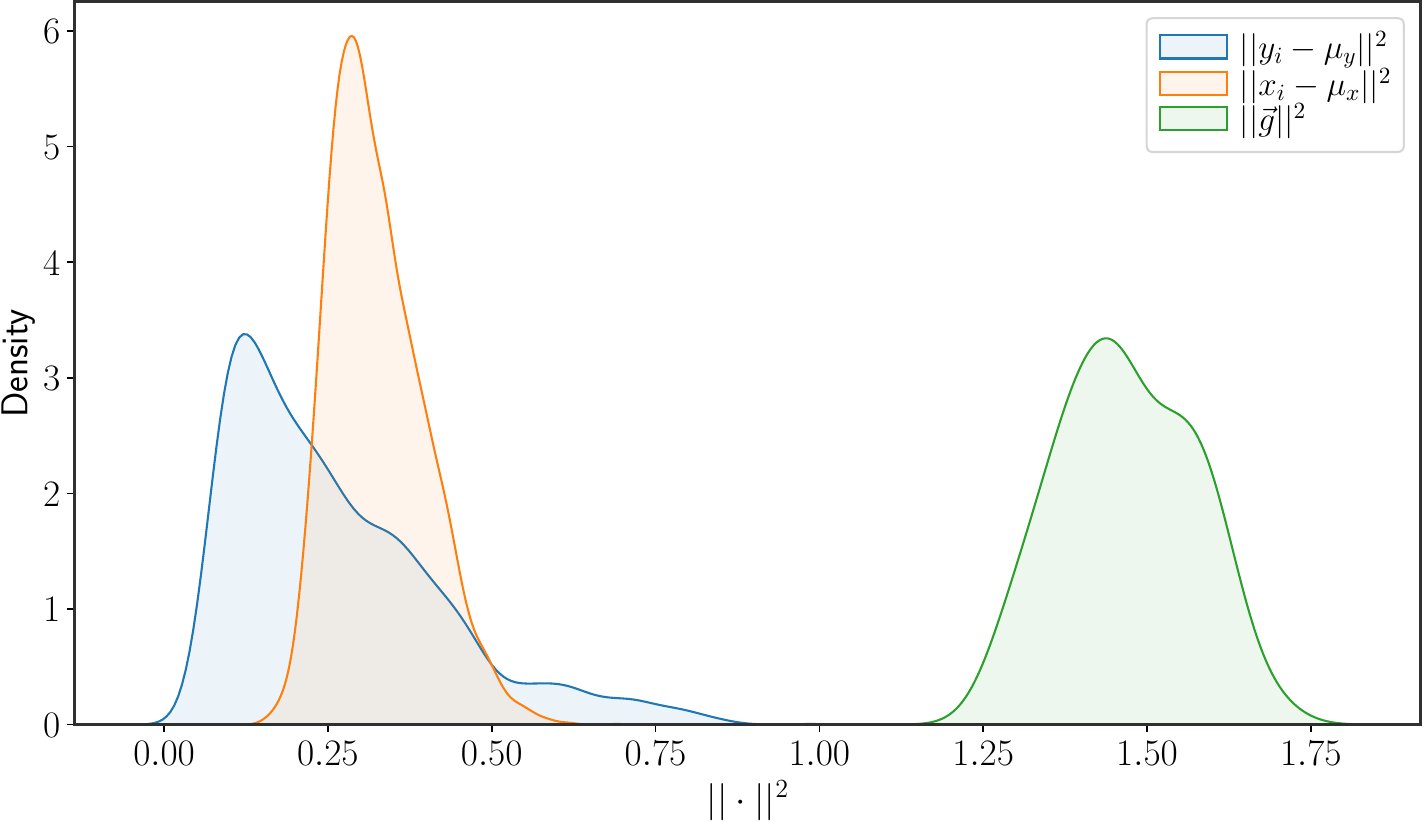} \label{fig:kdeplot}%
    \end{tabular}
    \caption{Empirical evidence for the assumptions of our thoeorems on training dynamics \textbf{Left}: The variance of $S_i^x, S_i^y$ in log scale throughout training of \cref{fig:varshrink}. The values of $S_i^x$ and $S_i^y$ approach their means as training progresses, which is $1$ by definition, meaning that the matrices $Q^x,Q^y$ are approximately doubly-stochastic.  \textbf{Right}: KDE plot of the gap's norm and the distances between the embeddings and their means calculated over 50 random initializations of CLIP (ViT-B-32) on 5000 randomly sampled images and texts from the Flickr30k dataset \cite{flickr}. The within modality distance is much smaller than the distance between the modalities. }
    \label{fig:boxplots}
\end{figure}

\subsection{The Modality Gap Decreases Robustness} \label{section:modalityrobust}

We define robustness ("Rob") as the probability that when adding noise sampled from some distribution $\mathcal{P}$ to the embeddings of modality $\X$, the nearest neighbor of some $y\in \Y$ will not change:
\begin{equation}
\text{Rob}(\X,\Y,\mathcal{P}) = \E_{y\sim\Y,\epsilon\sim \mathcal{P}} \sqrbr{\mathds{1}_{\text{NN} y,\X) = \text{NN}(y,\X + \epsilon)}}
\end{equation}
Notice that in the zero shot classification setting, this definition of robustness captures semantically meaningful changes - any change of nearest neighbor from image embedding to text, necessarily changes the classification of that image.

We now ask how does the existence of the gap relate to the robustness of the models to noise in the embedding space. \Cref{robustness_illustration} provides intuition. In this figure, there are two captions and the space of images that are close to one caption or another is defined by a linear decision boundary.  Any noise added to the text embeddings changes the decision boundary between the two classes. In the case of a gap between the cluster of image embeddings and text embeddings, as images are further away from the texts, their sensitivity to change in the classification decision boundary grows. We prove this formally in the following theorem.

\begin{theorem}\label{theorem:robust}
    Let $\{\vec{y}_i\}$ be a set of points in one modality (e.g. an image embedding) and $\vec{x}_1,\vec{x}_2\in \X$ two vectors in the second modality (e.g. captions) with mean $\mu_x = \frac{\vec{x}_1+\vec{x}_2}{2}$. Assume $\vec{x}_1$ is the nearest neighbor of some $\vec{y}$ in $\X$. Under the orthogonality assumption~\ref{assump}, moving any point $\vec{y}$ towards the other modality by a global translation $\vec{g}=\mu_y-\mu_x$ increases robustness: the probability that the nearest neighbor of $\vec{y}$ in $\X$ does not change after adding noise to $\X$, with covariance $\sigma^2 I$ and zero empirical mean.
\end{theorem}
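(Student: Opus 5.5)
The plan is to reduce the nearest-neighbor event to a one-dimensional Gaussian tail and then show that translating $\vec{y}$ along $\vec{g}$ leaves the mean of that Gaussian unchanged while shrinking its variance.

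\textbf{Step 1: symmetric coordinates for the captions.} Write $\vec{x}_1=\mu_x+\vec{d}$ and $\vec{x}_2=\mu_x-\vec{d}$, where $\vec{d}=\frac{\vec{x}_1-\vec{x}_2}{2}$. The noise $\epsilon_1,\epsilon_2$ added to $\X$ has zero empirical mean, so $\epsilon_2=-\epsilon_1=:-\vec{e}$. Concretely, $\vec{e}=\frac{n_1-n_2}{2}$ for i.i.d. $n_k\sim\mathcal{N}(0,\sigma^2 I)$, so $\vec{e}$ is an isotropic Gaussian with covariance $\frac{\sigma^2}{2}I$. Only isotropy matters below.

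\textbf{Step 2: reduce the nearest-neighbor event to a sign condition.} Set $\vec{y}'=\vec{y}-\mu_x$. Expanding the squared norms gives
\begin{equation*}
\norm{\vec{y}-\vec{x}_2+\vec{e}}^2-\norm{\vec{y}-\vec{x}_1-\vec{e}}^2
=\norm{\vec{y}'+(\vec{d}+\vec{e})}^2-\norm{\vec{y}'-(\vec{d}+\vec{e})}^2
=4\,\vec{y}'\cdot(\vec{d}+\vec{e}).
\end{equation*}
Hence the nearest neighbor remains $\vec{x}_1$ exactly when $\vec{y}'\cdot\vec{d}+\vec{y}'\cdot\vec{e}>0$. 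The clean condition is $\vec{y}'\cdot\vec{d}>0$, which holds by the assumption that $\vec{x}_1$ is the nearest neighbor. Since $\vec{y}'\cdot\vec{e}\sim\mathcal{N}\bigl(0,\tfrac{\sigma^2}{2}\norm{\vec{y}'}^2\bigr)$, we get
\begin{equation*}
\text{Rob}(\vec{y})=\Phi\!\left(\frac{\sqrt{2}\,\vec{y}'\cdot\vec{d}}{\sigma\norm{\vec{y}'}}\right).
\end{equation*}

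\textbf{Step 3: translate and use orthogonality.} Replace $\vec{y}$ by $\vec{y}_\alpha=\vec{y}-\alpha\vec{g}$ with $\alpha\in[0,1]$, and decompose $\vec{y}_\alpha'=(\vec{y}-\mu_y)+(1-\alpha)\vec{g}$. Assumption~\ref{assump} gives two facts:
\begin{itemize}
\item $\vec{g}\perp \vec{y}-\mu_y$;
\item $\vec{g}\perp \vec{x}_1-\mu_x=\vec{d}$.
\end{itemize}
The second fact gives $\vec{y}_\alpha'\cdot\vec{d}=(\vec{y}-\mu_y)\cdot\vec{d}$, which does not depend on $\alpha$. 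This quantity is positive, so the clean nearest neighbor, and hence clean accuracy, is unchanged. By Pythagoras, the first fact gives
\begin{equation*}
\norm{\vec{y}_\alpha'}^2=\norm{\vec{y}-\mu_y}^2+(1-\alpha)^2\norm{\vec{g}}^2,
\end{equation*}
which is strictly decreasing in $\alpha$ on $[0,1]$ when $\vec{g}\neq 0$. With a fixed positive numerator and a shrinking denominator, the argument of $\Phi$ increases. Since $\Phi$ is monotone, $\text{Rob}$ increases as the gap is closed.

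\textbf{Main obstacle.} The delicate part is bookkeeping rather than analysis. I need to justify that "zero empirical mean" forces $\epsilon_2=-\epsilon_1$ and that the result is still isotropic. I also need to check that the sign of $\vec{y}'\cdot\vec{d}$ is preserved, which is what lets monotonicity of $\Phi$ apply. Finally, I need to use orthogonality in two distinct ways: $\vec{g}$ orthogonal to the $\Y$-deviations for the norm, and $\vec{g}$ orthogonal to the $\X$-deviations for the inner product. I will also remark that the argument works in the ambient space and ignores renormalization to the sphere, consistent with the translation used in the statement.
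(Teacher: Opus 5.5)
Your proposal is correct and follows essentially the paper's own argument, specifically its second (``another way to see this'') version. That version writes the noisy normal as $\tilde w = w+\eta$, notes that $w^T y$ is unchanged because $\vec g \perp w$, and uses the Pythagorean distance lemma to shrink $\norm{y-\mu_x}$ and hence the variance of $\eta^T y$. Your write-up is tighter than the paper's: the translation signs are consistent, the sign condition is derived exactly, and you appeal to isotropy explicitly through $\Phi$. That appeal is genuinely needed, because for noise that is merely zero-mean with covariance $\sigma^2 I$, shrinking the variance of $\eta^T y$ while the direction of $y-\mu_x$ changes need not lower the failure probability.
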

\begin{proofsketch}
    The proof follows the intuition presented in \cref{robustness_illustration} - as the query vector $\vec{y}$ is closer to modality $\X$, a larger rotation of the separating hyperplane is required to decrease robustness. Therefore the theorem holds as subtracting the global gap vector decreases the distance between $\vec{y}$ and $\X$. 
\end{proofsketch}

For full proof and generalization for the cases of $|\X|>2$ and general noise we refer the reader to \cref{appendix:proofs}. \Cref{theorem:robust} proves that under the orthogonality assumption, when moving one modality towards the other by translating it with the global gap vector then the embedding spaces become more robust and less sensitive to noise. 
This provides us with clear motivation to close the gap.

We note that while the theorem is phrased in terms of the global gap vector, any vector that is orthogonal to modality $\X$ will suffice, as long as translating $\vec{y}$ with it will decrease the distance between $\vec{y}$ and the mean of modality $\X$.

\begin{figure}[t]
  \centerline{
    \includegraphics[width=0.8\columnwidth]{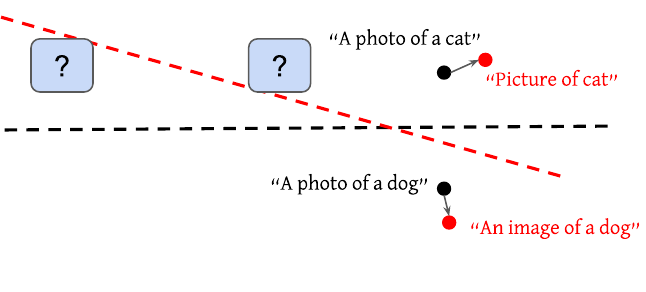}}
  \caption[]{Illustration of the relationship between robustness and the modality gap. Two image embeddings (blue squares) are classified identically when using the black texts. However, once slight noise is added to the texts (red), only the image with a smaller "gap" will maintain its classification. }
\label{robustness_illustration}
\end{figure}



In real applications, robustness is not the only desirable property of a system and a method that increases robustness while significantly decreasing the clean accuracy, will be of limited interest. 
We therefore strive to understand when does closing the gap have no effect on downstream performance, therefore solely improving robustness. Since most downstream tasks such as zero shot classification or text to image retrieval are based on cross modal nearest neighbors, we ask when can we guarantee that  the nearest neighbors will not change when varying the gap.

\begin{theorem} \label{theorem:loss_invariance}
    As in \cref{theorem:robust}, let $\vec{v}$ be some vector that is orthogonal to the affine subspace containing the modality $\X$. 
    Then translating the embeddings of modality $\X$ by $\alpha\cdot \vec{v}$ for any $\alpha\in \R$, such that $\forall x_i\in \X: x_i \assign x_i + \alpha\cdot \vec{v} $, does not change the nearest neighbor of  $\vec{y}$ in $\X$ for any $\vec{y}\in\Y$.
\end{theorem}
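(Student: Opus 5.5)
The plan is to expand the squared distance and show that the translation adds the same constant to $\norm{\vec{x}_i-\vec{y}}^2$ for every $\vec{x}_i\in\X$. Since the nearest neighbor is an $\argmin$ over $\X$, adding a term that does not depend on $i$ leaves the minimizer unchanged.

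Fix $\vec{y}\in\Y$ and write $\vec{x}_i' = \vec{x}_i+\alpha\vec{v}$. Then
\begin{equation*}
\norm{\vec{x}_i'-\vec{y}}^2 = \norm{\vec{x}_i-\vec{y}}^2 + 2\alpha\,\vec{v}^T(\vec{x}_i-\vec{y}) + \alpha^2\norm{\vec{v}}^2 .
\end{equation*}
The last term is independent of $i$. For the middle term, decompose $\vec{x}_i-\vec{y} = (\vec{x}_i-\mu_x) + (\mu_x-\vec{y})$. By hypothesis $\vec{v}$ is orthogonal to the affine subspace containing $\X$, in the sense of assumption~\ref{assump}, so $\vec{v}^T(\vec{x}_i-\mu_x)=0$ for all $i$. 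Hence $\vec{v}^T(\vec{x}_i-\vec{y}) = \vec{v}^T(\mu_x-\vec{y})$, which does not depend on $i$.

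Therefore $\norm{\vec{x}_i'-\vec{y}}^2 = \norm{\vec{x}_i-\vec{y}}^2 + c(\vec{y},\alpha)$, where $c(\vec{y},\alpha)=2\alpha\,\vec{v}^T(\mu_x-\vec{y})+\alpha^2\norm{\vec{v}}^2$ is the same for every $\vec{x}_i$. The two argmins coincide, ties included, so $\text{NN}(\vec{y},\X+\alpha\vec{v})$ is the translate of $\text{NN}(\vec{y},\X)$, i.e.\ the same index.

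The only delicate point is the sphere. After translation the embeddings are generally no longer unit norm, and the $\ell_2$ and cosine nearest neighbors may differ. I would state explicitly that NN is taken with respect to $\ell_2$ distance, as defined in the notation section, and that no renormalization is applied. I would also remark that renormalizing each $\vec{x}_i'$ is harmless if $\norm{\vec{x}_i'}$ is constant in $i$. This holds when $\vec{v}\perp \vec{x}_i$ for all $i$, since then $\norm{\vec{x}_i'}^2=1+\alpha^2\norm{\vec{v}}^2$, and it also holds more generally when $\vec{v}^T\mu_x$ is the only contribution. Uniform scaling preserves cosine rankings, so the conclusion carries over. Apart from this clarification, the argument is a routine computation.
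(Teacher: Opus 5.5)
Your proof is correct and follows the same route as the paper's. Both expand the squared distance and use the fact that $\vec{v}^T\vec{x}_i$ is the same for every $\vec{x}_i\in\X$, so the translation shifts every distance to $\vec{y}$ by a common constant. Your direct expansion is slightly cleaner, since it avoids the paper's unneeded use of the unit-norm identities $\norm{\vec{x}_i}=1$ and $-2\vec{x}^T\vec{y}=\norm{\vec{x}-\vec{y}}^2-2$. Your renormalization remark needs no extra condition: because $\vec{v}^T\vec{x}_i=\vec{v}^T\mu_x$ for every $i$, one always has $\norm{\vec{x}_i'}^2=1+2\alpha\,\vec{v}^T\mu_x+\alpha^2\norm{\vec{v}}^2$, which is independent of $i$.
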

\begin{proofsketch}
    Since the vector $\vec{v}$ is orthogonal to $\X$, the variance of modality $\X$ in $\vec{v}$ is zero, therefore moving $\X$ by $\alpha\cdot\vec{v}$ changes the distance from any $\vec{y}\in\Y$ to all $\vec{x}\in \X$ by the same constant, meaning cross modal nearest neighbors are preserved. For full proof see \cref{appendix:proofs}. 
\end{proofsketch}

Under the orthogonality assumption~\ref{assump} such a nearest neighbor preserving direction is simply the gap vector $\vec{g}$. Therefore \cref{theorem:loss_invariance} ensures us that narrowing the gap along this direction would not change performance of the model on downstream tasks such as zero shot classification. 

\section{Algorithm}\label{section:algo}

From \cref{theory} we conclude that although the contrastive loss maintains the gap created at initialization throughout the training, it is desirable to close the gap in order to improve robustness. Theorems~\ref{theorem:robust} and~\ref{theorem:loss_invariance} provide us with a tool to do so - move the modalities towards each other along the global gap vector, as under assumption~\ref{assump} this provably does not affect performance on downstream tasks when there is no noise at all. 

Since in practice the gap vector is  not completely orthogonal to the modality subspaces (see \cref{appendix:assumptions}), we suggest projecting the gap vector to be exactly orthogonal to the affine subspace of the modality being retrieved before closing the gap. This is done by computing the principal components $V$ of the modality being retrieved (texts in case of classification) and projecting the gap $\vec{g}$ to the orthogonal complement of the components:
\begin{equation}\label{algo}
    \vec{g}' \assign \vec{g} - VV^T\vec{g}
\end{equation}
Afterwards, the modality being retrieved is moved towards the mean over the other modality by translating $\Y \assign \Y - \vec{g}'$, therefore decreasing the distance of any query point to the retrieved modality, as per \cref{theorem:robust}. 
\Cref{theorem:loss_invariance} assures us that closing the gap in the direction of $\vec{g}'$ would not affect performance on any downstream tasks that relies on computing nearest neighbors. In the appendix, we also provide an extension to this algorithm that allows moving in directions that are not perfectly orthogonal to each modality (see \cref{appendix:approx_algo}).

Contrary to previous attempts at closing the gap, our method requires no retraining or finetuning of the models \citep{oh2023geodesic,mistretta2025cross}, or the training of new ones (such as the prior network in \citep{dalle2,Patel_2024_CVPR}). Our method is also general and does not adhere to a specific setting \citep{liao2025multimodal} - any task using cross modal nearest neighbors could benefit from it.


\section{Experiments}\label{section:experiments}
\begin{figure}[t!]
     \centering
   \begin{subfigure}[t]{\figwidth\linewidth}
    \includegraphics[width=\linewidth]{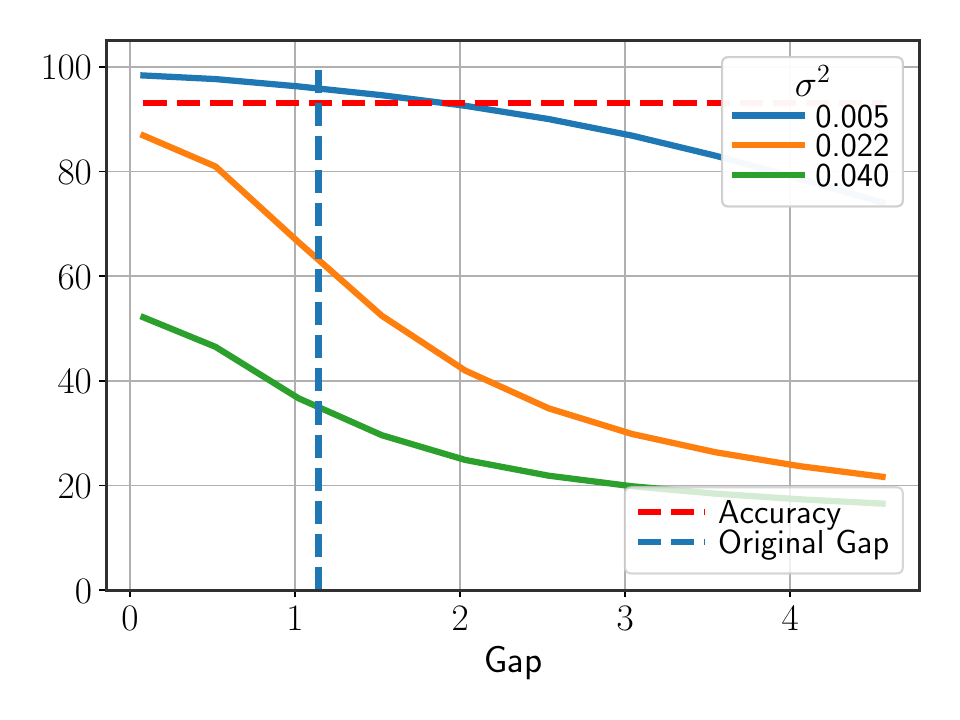} 
     \caption{CLIP (ViT L-14) on CIFAR10}
     \label{fig:siglipcifar}
 \end{subfigure}
 \hfill
 \begin{subfigure}[t]{\figwidth\linewidth}
  \includegraphics[width=\linewidth]{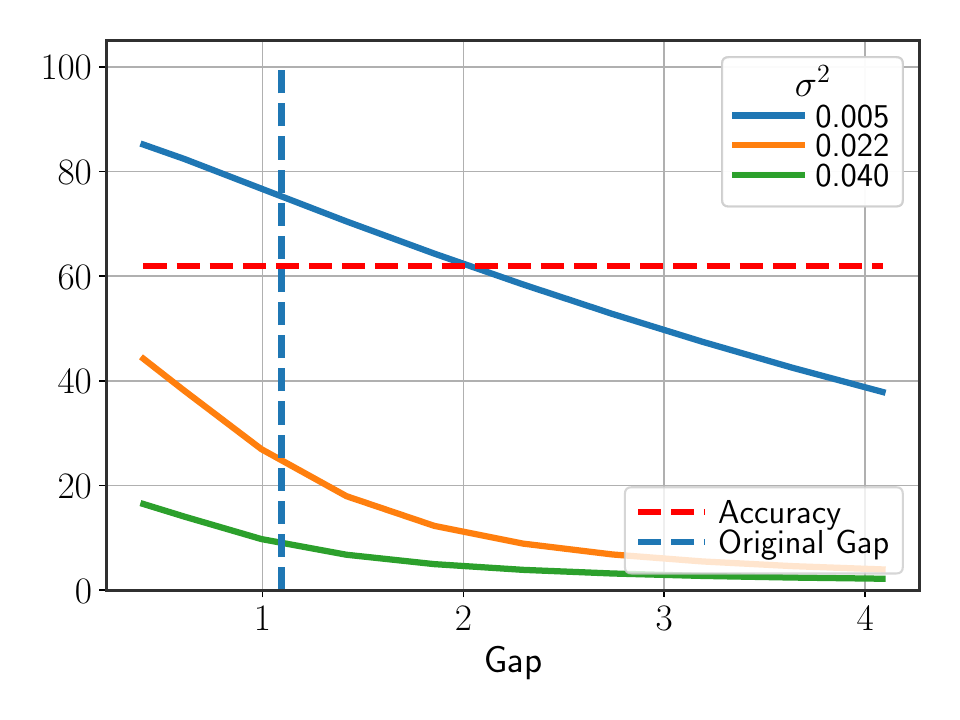} 
  \caption{CLIP (ViT B-16) on CIFAR100}
  \label{fig:clipcifar100}
  \end{subfigure}
   \ifthree{
 \hfill
 \begin{subfigure}[t]{\figwidth\linewidth}
 {{\label{fig:cliportho}\includegraphics[width=\linewidth]{Figs/rob_acc_exact/svhn_robustness_ViT-B-16-SigLIP-
     webli.pdf} }
 \caption{SigLIP on SVHN}
 }
 \end{subfigure}
 }
 \ifthree{
      \hfill
      \begin{subfigure}[t]{\figwidth\linewidth}
       {\label{fig:vig14vqa}\includegraphics[width=\linewidth]{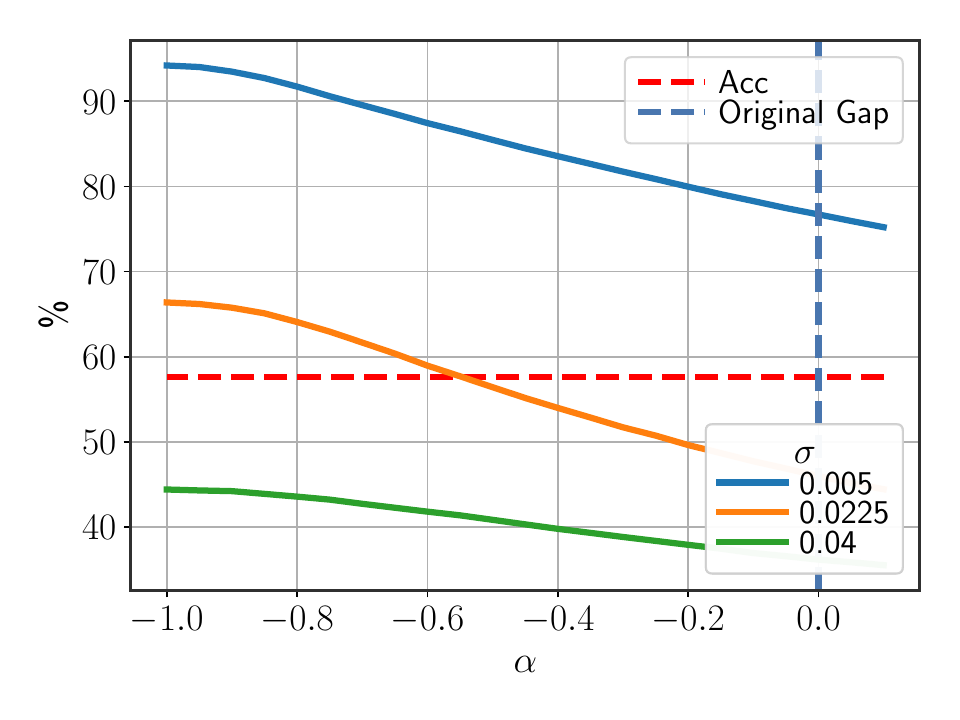} }%
       \caption{CLIP (ViT L-14) on A-OKVQA}
       \end{subfigure}
  }
 \hfill
 \begin{subfigure}[t]{\figwidth\linewidth}
  {\label{fig:vit16vqa}\includegraphics[width=\linewidth]{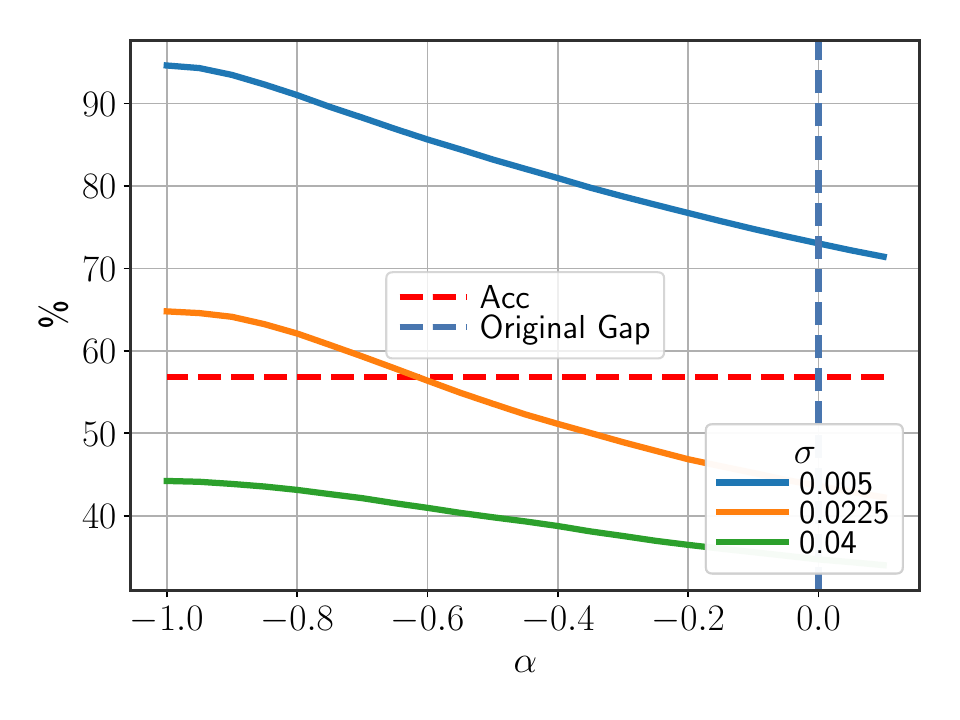}}
  \caption{CLIP (ViT B-16) on A-OKVQA}
   \end{subfigure}
  \hfill
  \begin{subfigure}[t]{\figwidth\linewidth}
  {\label{fig:siglipvqa}\includegraphics[width=\linewidth]{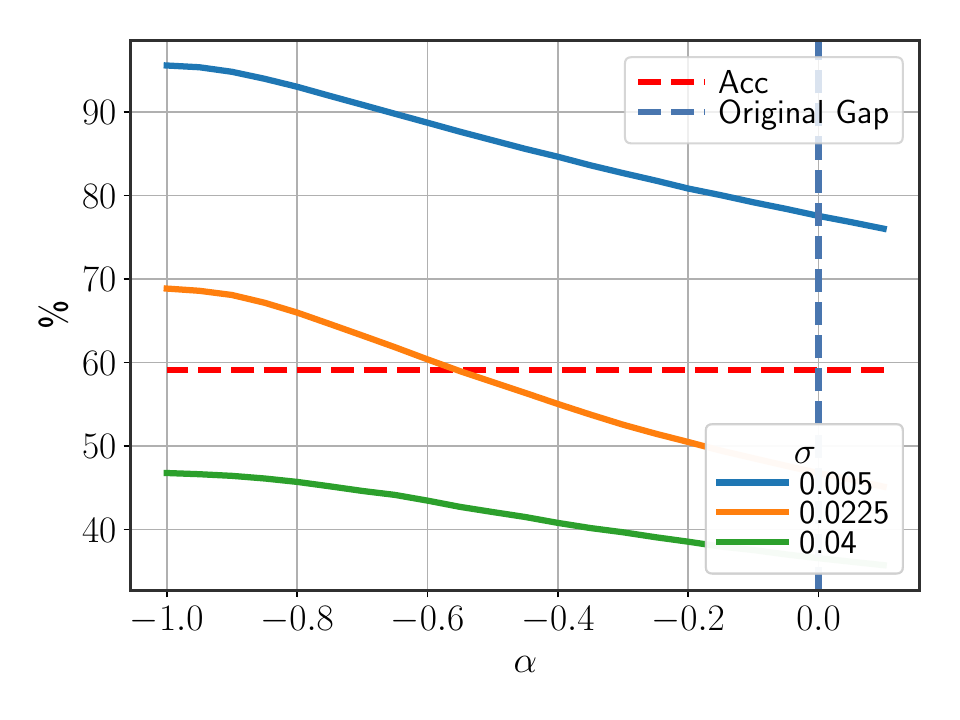}}
\caption{SigLIP on A-OKVQA}
 \end{subfigure}

    \caption[]{The zero shot classification accuracy, multiple choice VQA accuracy and robustness under noise $\eta\sim \mathcal{N}(0,\sigma^2 I)$, on various models and datasets. As predicted by our theory, robustness monotonically increases when the gap is closed while accuracy is maintained.  }
    \label{fig:robustness}
\end{figure}
We turn our attention to understanding how well our theory and algorithm work in practice for various real world models and datasets. We follow \citep{liang2022mind} and experiment on standard zero shot classification and image-text retrieval tasks. We create embeddings of commonly used, pretrained, multi-modal models using  using the openclip library \citep{openclip}.

To measure robustness, we empirically estimate:

\begin{align}
\label{eq:empirical_rob}
    R(\alpha)= \frac{1}{K}\sum_{i=1}^K \frac{1}{|\Y|}\sum_{y\in \Y} \mathds{1}_{\text{NN}(y,\X+\alpha\cdot\vec{g}) = \text{NN}(y,(\X + \epsilon_i)+\alpha\cdot\vec{g}}
\end{align}

Where $K$ is the number of samples of noise, $\epsilon_i \sim \mathcal{P}$ is a sample from the noise model $\mathcal{P}$ (such as quantization or gaussian noise) and $\alpha \in \R$ is a scalar controlling the amount of the gap that we close. We set $K=1$ (e.g. when there is a single deterministic quantization) unless stated otherwise.

\subsection{Robustness Under Controlled Noise}\label{section:gaussian_experiments}

Our first set of experiments attempts to verify our theory and algorithm in the presence of controlled noise added to the embedding space. We add varying degrees of independent Gaussian noise with zero mean and variance $\sigma^2$ to each text embedding $\vec{x}\in\X$ and translate the text embeddings by $\alpha\cdot \vec{g}'$ according to our algorithm (\cref{section:algo}). We do this for different values of $\alpha$ therefore expanding or closing the gap. For each change in the gap, we measure both accuracy and  robustness (\cref{eq:empirical_rob}) averaging over $K=100$ samples of noise.


We test our algorithm on various models, in both zero shot classification and visual question answering (VQA) on the A-OKVQA dataset \citep{okvqa}, following the evaluation protocol of \citet{clipvqa} for CLIP-like models. Results are displayed in \cref{fig:robustness} - when applying our algorithm under Gaussian noise robustness greatly increases when closing the gap across all models and tasks, and greatly decreases when expanding the gap. As noise increases, robustness is lower but still improves when the gap is closed. See \cref{appendix:other_noises} for similar results on other noise distributions.

As predicted by \cref{theorem:loss_invariance}, the zero shot and VQA accuracy do not change when closing the gap in the direction orthogonal to the affine subspace of texts.

\subsection{Robustness to Quantization}

\begin{figure}[t!]
 \centering
 \begin{subfigure}[t]{\figwidth\linewidth}
 {\label{fig:siglipcifarquant}\includegraphics[width=\linewidth]{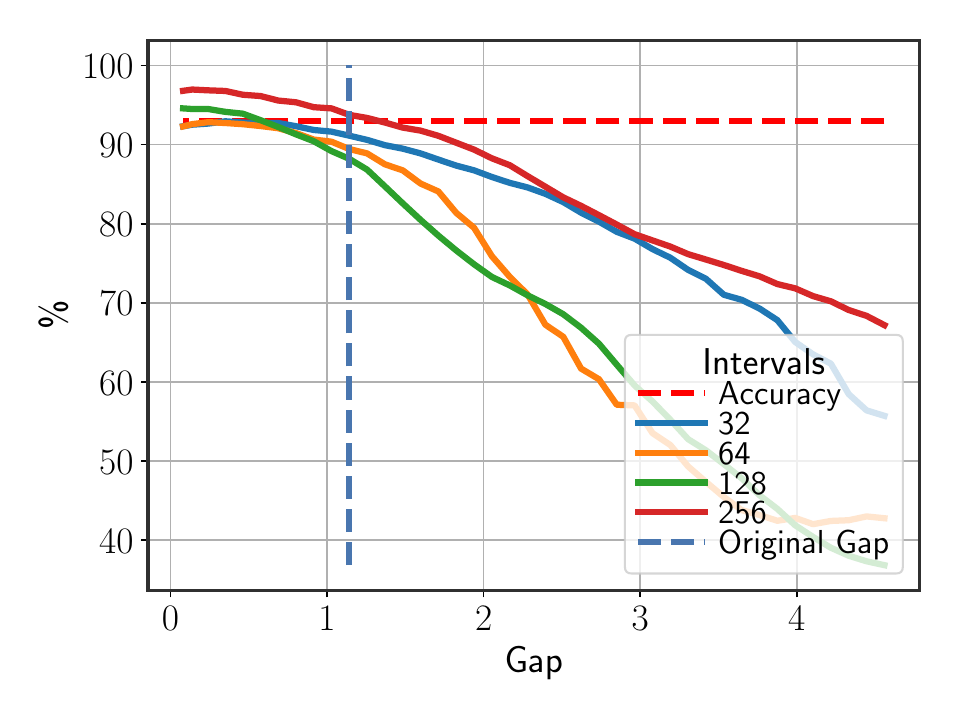} }%
  \caption{CLIP (ViT L-14) on CIFAR10}
        \end{subfigure}
 \hfill
 \begin{subfigure}[t]{\figwidth\linewidth}
  {\label{fig:clipcifar100quant}\includegraphics[width=\linewidth]{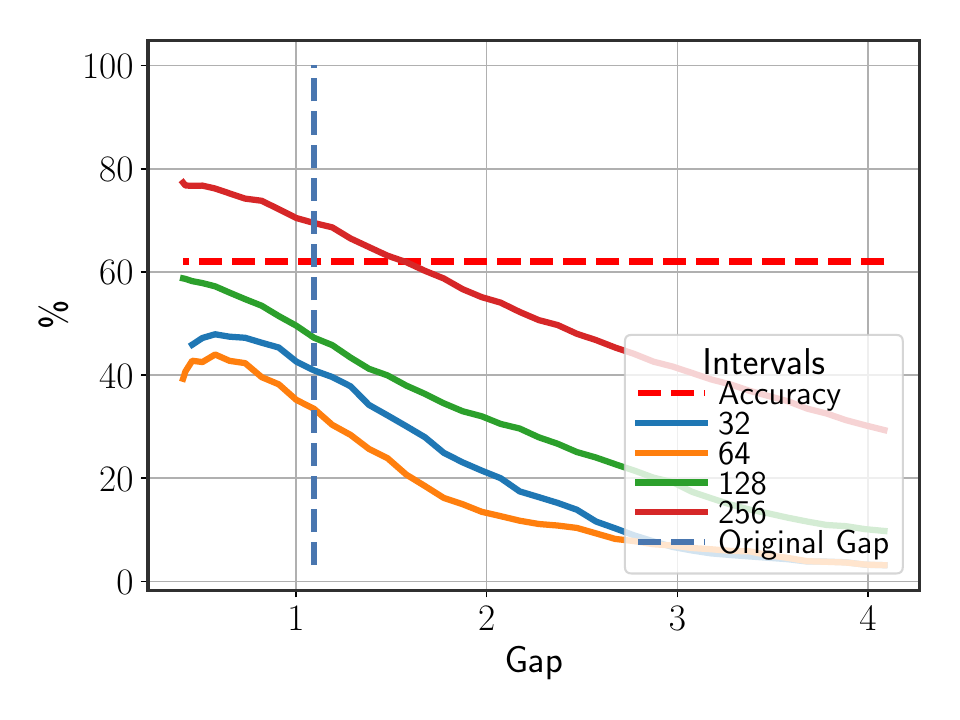}}
  \caption{CLIP (ViT B-16) on CIFAR100}
         \end{subfigure}
  \ifthree{
      \hfill
      \begin{subfigure}[t]{\figwidth\linewidth}
      {\label{fig:clipsvhbquant}\includegraphics[width=\linewidth]{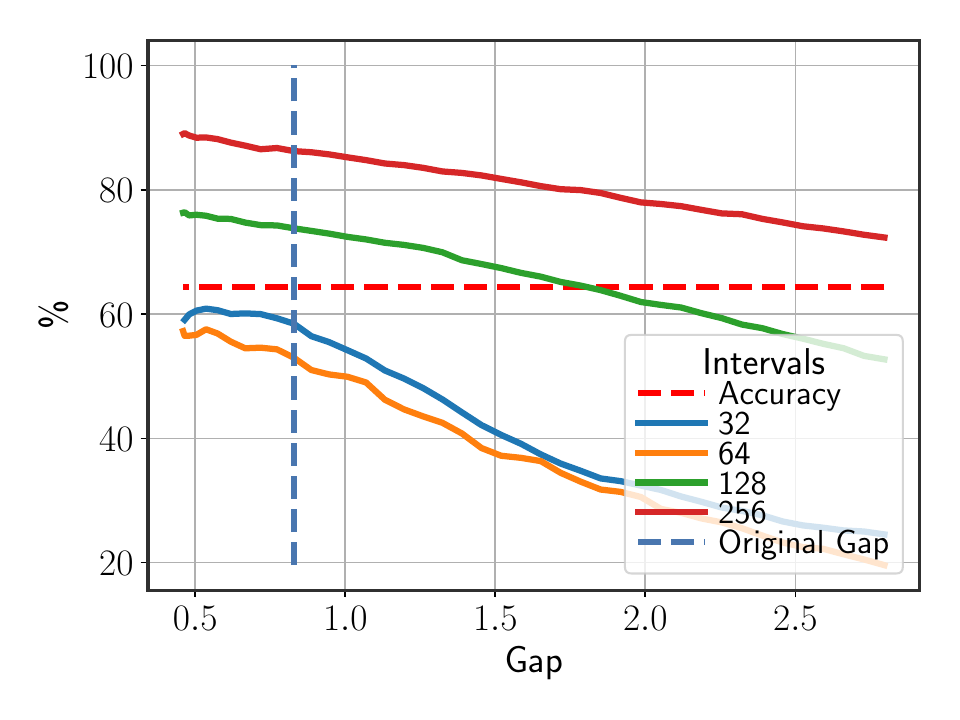}}
      \caption{CLIP (ConvNeXt) on CIFAR100}
             \end{subfigure}
  }
  \caption[]{CLIP variants are increasingly more robust to quantization as the gap is closed. The gap norm is measured in the non-quantized space.}
    \label{fig:quant}
\end{figure}

We continue and test our theory and algorithm on noise distributions that apply to real world applications. A setting of uncorrelated noise is that of quantization \citep{quantization_uncorrelated}. While many forms of model quantizations exist, we limit ourselves to the simple case of post-training quantizing of the embedding vectors alone, a setting relevant to tasks in which the embeddings were calculated in advance e.g. retrieval augmented generation (RAG) \citep{llmrag}. We close the gap to varying degrees and then quantize the text and image embedding vectors to different numbers of intervals between a constant range of $\sqrbr{-3,3}$  (as all embeddings are on the unit hypersphere), and measure robustness.

Empirically, we find that since quantization noise does not always have zero mean, it is better to close the gap until the gap is minimized in the \textit{quantized} embedding space rather than the original one. \Cref{fig:quant} displays results for these experiments. As can be seen, robustness is greatly improved when closing the gap before quantizing the embedding vectors.

\subsection{Robustness to Rephrasing} \label{subsection:rephrasing}
We now turn our attention to text rephrasings - noise that is added in the \textit{input} space rather than embedding space - which we've investigated thus far. We find that in this setting our theoretical assumptions do not hold - specifically, the noise in the embedding space that results from the rephrasing has non-zero mean and tends to lie entirely within the text subspace, with little to no variance in the direction of the gap (and see \cref{appendix:rephrasing}). This is not surprising as models that are trained on large datasets are expected to be invariant to various input noises, such as text rephrasing. 

We find that our algorithm can indeed be used for such cases. For each question in the A-OKVQA dataset, we simulate a sample of random noise by sampling 3 random wrong answers and a random correct answer, exact details are provided in \cref{appendix:rephrasing}. Since in this setting the answers are randomly selected, we focus on measuring accuracy under noise instead of robustness, while closing the gap using our algorithm. Results are presented in \cref{fig:vqa_rephrasing} -  accuracy under rephrasings greatly increases when the gap is closed using our algorithm, while clean accuracy is unaffected.

\begin{figure}
  \centering
  \begin{subfigure}[t]{\figwidth\linewidth}
    {\includegraphics[width=\linewidth]{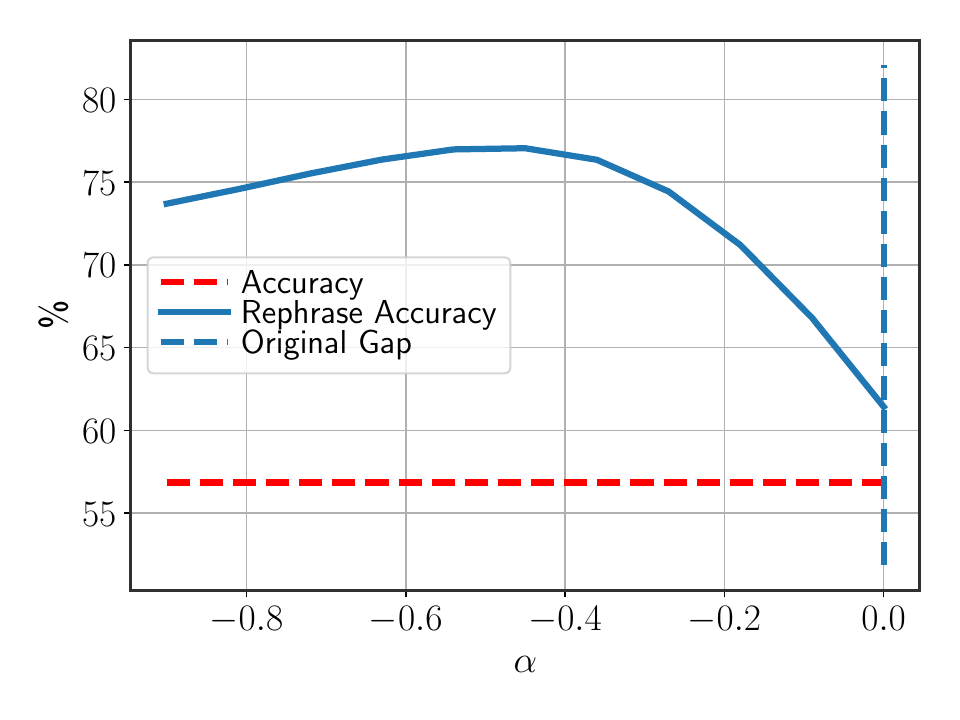} }
    \caption{CLIP (ViT B-16)}
  \end{subfigure}
  \ifthree{
 \hfill
 \begin{subfigure}[t]{\figwidth\linewidth}
    {\includegraphics[width=\linewidth]{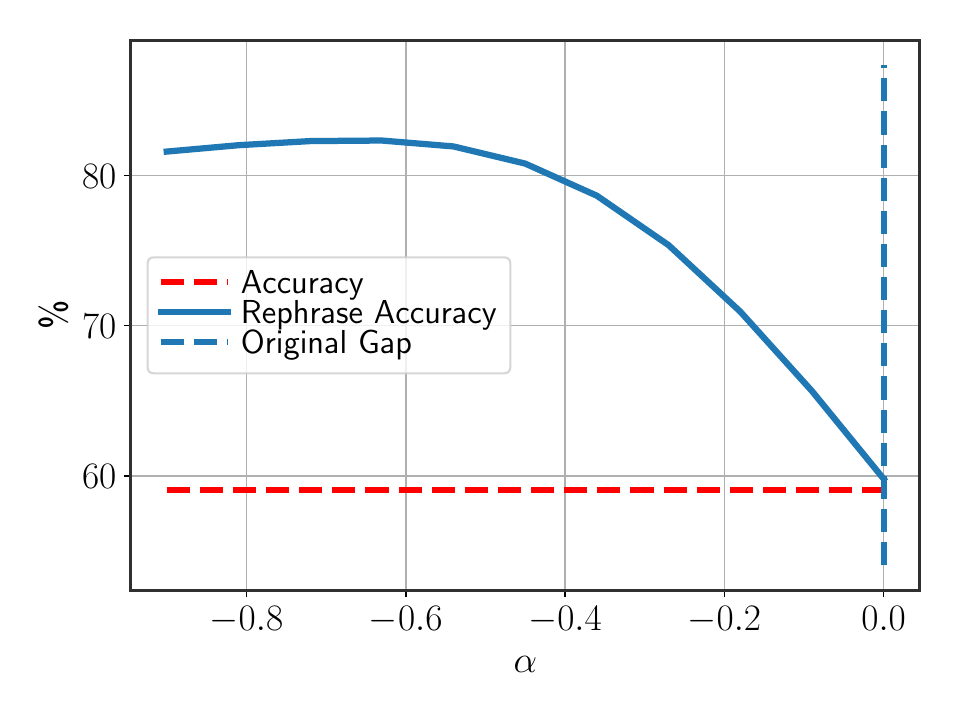} }
    \caption{SigLIP}
  \end{subfigure}
  }
 \hfill
 \begin{subfigure}[t]{\figwidth\linewidth}
    {\includegraphics[width=\linewidth]{Figs/finalzqa/ViT-B-16-SigLIP-webli_Avg_no_rob.pdf} }
    \caption{SigLIP}
  \end{subfigure}
  
  \caption[]{We apply rephrasing noise to textual inputs on the A-OKVQA dataset, and measure average accuracy and rephrase accuracy in the noisy setting over $K=500$ samples of wrong and correct answers for each question. Models are increasingly more accurate once the gap is closed, without change in clean accuracy, even though this setting is far from our theoretical assumptions.}
    \label{fig:vqa_rephrasing}
\end{figure}

\section{Discussion}
In this paper, we have attempted to provide a comprehensive theory for the formation of an orthogonal gap in multi-modal models. Specifically we showed that when initializing with tight clusters, minimizing the contrastive loss will lead to solutions where a global  gap vector separates the two modalities and that this vector will be orthogonal to both modalities. We continued to examine the effect of such a gap  in the context of robustness. We prove that in this context, the gap is indeed a bug - as the modality gap increases, the consistency of the retrieval when applying subtle variations to the embedding space is decreased.

Following this theoretical insight, we presented a simple and computationally inexpensive algorithm that improves robustness with hardly any reduction in clean accuracy. Our algorithm could easily be implemented as a post processing step when using multi-modal model embeddings.

Finally, we began examining the effect of our theory and algorithm in the context of real noise, such as quantization and text rephrasing.  We view our experiments as a hint to the potential of our algorithm and encourage further investigation as to settings of noise in which our simple algorithm could improve downstream performance.

\newpage
\section*{Acknowledgments}
This research was supported by a joint Hebrew University IBM Research grant. YW thanks the ISF and the Gatsby Foundation for their financial support.
{
    \small
    \bibliographystyle{ieeenat_fullname}
    \bibliography{bibli}
}
\clearpage
\appendix
\clearpage
\setcounter{page}{1}
\maketitlesupplementary

\section{Theorem Proofs}
\label{appendix:proofs}
\subsection{Proof of \Cref{theorem:newtheory}}

\begin{proof}
Calculating the gradient of the loss using the notation in \cref{mmloss}:
\begin{align}\label{oggrad}
    \nabla_{y_i} \mathcal{L} &= \frac{2}{\tau} [ 2(x_i - y_i) \notag \\
    &- \sum_{k=1}^N (Q^x(k, i) + Q^y(k, i))(x_k - y_i) ]
\end{align}

Defining $S_i = \sum_k Q^x(k, i)$ and $\tilde{x}_i$:
\begin{equation}
    \tilde{x}_i = \frac{\sum_k (Q^x(k, i) + Q^y(k, i))x_k}{S_i + 1}
\end{equation}
Inputting into \cref{oggrad}:
\begin{equation}
    \nabla_{y_i} \mathcal{L} = \frac{2}{\tau} \left[ 2(x_i - y_i) - (S_i + 1)(\tilde{x}_i - y_i) \right]
    \label{gradient-eq}
\end{equation}

Assume $\forall j: \norm{x_j-\mu_x}\leq \epsilon$ and $\forall i: \norm{y_i-\mu_y}\leq \epsilon$ . 
Substituting into \cref{gradient-eq}:
\begin{align}
    \nabla_{y_i} \mathcal{L} &= \frac{2}{\tau} [ 2(x_i - \mu_x +\mu_x - y_i) \notag \\
    &- (S_i + 1)(\tilde{x}_i -\mu_x + \mu_x - y_i) ] \notag \\
    &= \frac{2}{\tau} [ 2(x_i - \mu_x )+2(\mu_x - y_i) \notag \\
    &- (S_i + 1)(\tilde{x}_i -\mu_x) -(S_i+1) (\mu_x - y_i)  ]  \notag \\
    &=   \frac{2}{\tau} \sqrbr{(1-S_i) (\mu_x - y_i) + O(\epsilon)}
\end{align}
From our assumption $y_i=\mu_y+O(\epsilon)$ and that  $\norm{\mu_x - \mu_y} \gg \epsilon$:
\begin{equation}
    \nabla_{y_i} \mathcal{L} = \frac{2}{\tau} \sqrbr{(1-S_i) (\mu_x - \mu_y) + O(\epsilon)}  
\end{equation}
Meaning that the gradient of $y_i$ is approximately along the direction of the gap $\vec{g}=\mu_x - \mu_y$. 

The last thing left to prove is that the specific direction along the gap shrinks the variance in the gap. We'll show that:
\begin{equation}
 \label{eq:linear-approximation}
    S_i \approx 1 + \frac{2}{\tau}(\mu_x - \mu_y) \cdot (y_i - \mu_y)
\end{equation}
Meaning that each $y_i$ moves towards $\mu_y$ along the gap, thus shrinking the variance. 

Let $k(x,y)=e^{-\norm{x-y}^2/\tau}$ be the Gaussian kernel. Under our assumption, for $\epsilon\rightarrow 0$ we get that $x_i\rightarrow \mu_x$ and $y_j\rightarrow \mu_y$. Thus we can use a (linear) Taylor expansion of the kernel around the values of the means:
\begin{equation}
    k(x_k, y_i) \approx k(x_k, \mu_y) + \nabla_y k(x_k, y)\big|_{y=\mu_y} \cdot (y_i - \mu_y)
\end{equation}
The gradient of the kernel is $\nabla_y k(x, y) = \frac{2}{\tau}(x - y)k(x, y)$. Substituting this into the expansion:
\begin{equation}
    k(x_k, y_i) \approx k(x_k, \mu_y) \left[ 1 + \frac{2}{\tau}(x_k - \mu_y) \cdot (y_i - \mu_y) \right]
\end{equation}
Substituting the linearized kernel into the expression for $Q^x(k, i)$, the common factor $k(x_k, \mu_y)$ cancels from the numerator and denominator:
\begin{equation}
    Q^x(k, i) \approx \frac{1 + \frac{2}{\tau}(x_k - \mu_y) \cdot (y_i - \mu_y)}{\sum_{j=1}^N \left[ 1 + \frac{2}{\tau}(x_k - \mu_y) \cdot (y_j - \mu_y) \right]}
\end{equation}
The denominator sum simplifies significantly:
\begin{eqnarray}
    \sum_{j=1}^N \left[ 1 + \frac{2}{\tau}(x_k - \mu_y) \cdot (y_j -\mu_y) \right] \notag \\
    = N + \frac{2}{\tau}(x_k - \mu_y) \cdot \sum_{j=1}^N (y_j - \mu_y)
\end{eqnarray}
Under the assumption that the cluster $\Y$ is centered at its mean ($\sum (y_j - \mu_y) = 0$), the denominator becomes exactly $N$.

Summing $Q^x(k, i)$ over all $k$:
\begin{equation}
    S_i \approx \frac{1}{N} \sum_{k=1}^N \left[ 1 + \frac{2}{\tau}(x_k - \mu_y) \cdot (y_i - \mu_y) \right]
\end{equation}
Applying the centering identity for $\X$ ($\sum x_k = N\mu_x$):
\begin{equation}
    S_i \approx 1 + \frac{2}{\tau}(\mu_x - \mu_y) \cdot (y_i - \mu_y)
\end{equation}

\end{proof}

\subsection{Proof of \Cref{theorem:mmloss}}
\begin{proof}

    As stated in the theorem, assume that at some iteration $t$ of gradient descent $Q^x,Q^y$ are doubly stochastic and that exists $\vec{v}$ in which both modalities have zero variance. Assume that $Q^x,Q^y$ stay doubly stochastic throughout the training. We'll begin by showing that iteration $t+1$ maintains the zero variance in direction $\vec{v}$:
By the assumption that $v^T x_i=a$ for all $x_i$ and $v^T y_j=b$ for all $y_j$ we see that using \cref{eq:grad}:
\begin{eqnarray*}
v^T \frac{\partial \mathcal{L}}{\partial y_i}
&\propto  &v^T \left( x_i - y_i \right) \\
&& - v^T \sum_{k=1}^N \frac{Q^x(k,i) + Q^y(k,i)}{2}
  (x_k - y_i) \\
  &=& 0
\end{eqnarray*}
where the last equality follows from double stochasticity.

To show that training will converge when global and local orthogonality hold, we'll change the coordinate system s.t.:
\begin{eqnarray}
    \vec{x_i} = (a,\tilde{x}_i) \notag \\
    \vec{y_i} = (b,\tilde{y}_i) \notag 
\end{eqnarray}
and minimize the loss w.r.t. $\tilde{x}_i, \tilde{y}_i$. From standard uniformity and alignment arguments, the loss will be minimal when $\tilde{x}_i = \tilde{y}_i$ and the points are uniformly distributed. Note that at such a solution all the local gap vectors will be of the form
\begin{equation}
    \forall i: \vec{g}_i = \vec{x}_i-\vec{y}_i =  (a-b,0,0,...)
\end{equation}

meaning that by definition the global gap vector will also be equal to the above (as it is the mean). Orthogonality will also hold since the gap is solely in the direction of $\vec{v}$ which is orthogonal to both modalities.
\end{proof}

\subsection{Proof of \Cref{theorem:robust}}

\begin{lemma} \label{lemma:dist}
    Under the orthogonality assumption, for every point $y\in \Y$ and $\mu_x$ the mean of modality $\X$:
    \begin{equation}
        \norm{\mu_x - (y-g)} < \norm{\mu_x - y}
    \end{equation}
    with $\vec{g}=\mu_y-\mu_x$ the global gap vector.
\end{lemma}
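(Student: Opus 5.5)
Write $y - \mu_x = (y-\mu_y) + (\mu_y - \mu_x) = (y-\mu_y) + \vec{g}$, so that $y - \vec{g} - \mu_x = y - \mu_y$. The claim then reduces to comparing $\norm{y-\mu_y}^2$ with $\norm{(y-\mu_y)+\vec{g}}^2$, and I would simply expand the square:
\begin{equation*}
\norm{\mu_x - y}^2 = \norm{y-\mu_y}^2 + 2\,\vec{g}\cdot(y-\mu_y) + \norm{\vec{g}}^2 .
\end{equation*}
Meanwhile $\norm{\mu_x-(y-\vec{g})}^2 = \norm{y-\mu_y}^2$. So the inequality is equivalent to
\begin{equation*}
2\,\vec{g}\cdot(y-\mu_y) + \norm{\vec{g}}^2 > 0 .
\end{equation*}

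Next I would invoke the orthogonality assumption~\ref{assump}. By \cref{eq:ortho}, $\cos(y-\mu_y,\vec{g})=0$ for every $y\in\Y$, and hence $\vec{g}\cdot(y-\mu_y)=0$. This is valid also when $y=\mu_y$, where the cosine is undefined but the inner product is trivially zero. The cross term therefore vanishes, leaving $\norm{\mu_x-y}^2 - \norm{\mu_x-(y-\vec{g})}^2 = \norm{\vec{g}}^2$. This is a Pythagorean identity: the displacement $y-\mu_y$ is orthogonal to the gap. Taking square roots preserves the inequality since both norms are nonnegative.

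The only delicate point is strictness. The difference is $\norm{\vec{g}}^2$, which is positive exactly when $\vec{g}\neq 0$. I would therefore state explicitly that a modality gap exists, $\mu_x\neq\mu_y$; this is implicit in the setting, since otherwise translation by $\vec{g}$ is trivial and the inequality degenerates to equality. Note that only the orthogonality of $\vec{g}$ to the $\Y$-subspace is used here, not its orthogonality to $\X$. The latter is reserved for the subsequent robustness argument of \cref{theorem:robust} and for \cref{theorem:loss_invariance}.
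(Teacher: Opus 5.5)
Your proposal is correct and is essentially the paper's proof: the paper places $\mu_x$ at the origin (so $\vec{g}=\mu_y$) and applies Pythagoras to the right triangle with vertices $y$, $\vec{g}$, $\mu_x$, which is exactly your expansion with the cross term $\vec{g}\cdot(y-\mu_y)$ removed by orthogonality. Your explicit remark that strictness needs $\vec{g}\neq 0$, which the paper simply asserts as $\norm{\vec{g}}^2>0$, is a useful clarification.
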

\begin{proof}
    Assume coordinate frame s.t. $\mu_x=0$. Therefore the claim reduces to:
    \begin{equation}
        \norm{y-g} < \norm{y}
    \end{equation}
    Under the orthogonality assumption, the points $y,g,\mu_x$ form a right angled triangle with $\measuredangle y,g,\mu_x = \pi/2$. 
    Therefore, from the Pythagorean theorem $\norm{y}^2 = \norm{g}^2 + \norm{y-g}^2$. Since $\norm{g}^2 > 0$ then:
    \begin{equation}
        \norm{y}^2 > \norm{y-g}^2
    \end{equation}
    as required.
\end{proof}
Now to prove the theorem:
\begin{proof} \label{mainproof}    
The separating hyperplane between $x_1$ and $x_2$ is characterized by the normal to the plane $w=\frac{x_1-x_2}{\norm{x_1-x_2}}$. Denote $\tilde{w}$ the normal to the separating hyperplane between the noisy versions $X$ i.e $\tilde{x}_1$ and $\tilde{x}_2$.

Since the nearest neighbor of $y$ is $x_1$ then $w^Ty > 0$. 
We wish to show that the probability of $\tilde{w}^T(y+v) > 0$ is greater than the probability that $\tilde{w}^Ty > 0$. 

Since by our assumptions the noise only rotates the hyperplane, then $\tilde{w} = R(\theta)w$ with $R(\theta)$ some rotation matrix. 

Therefore: 
\begin{equation}
    P(\tilde{w}^Ty > 0) = P(w^T (R(-\theta)y) > 0) 
\end{equation}
Notice that $w^T (R(-\theta)y) > 0$ only if $\frac{\pi}{2} - \cos^{-1} (\frac{w^Ty}{\norm{y}}) > \theta$ where $\cos^{-1} (\frac{w^Ty}{\norm{y}})$ is the angle between $w$ and $y$.

From \cref{lemma:dist}, $y-g$ has smaller norm than $y$ and due to the orthogonality assumption, $g^Tw=0$. Therefore:
\begin{gather}    
    \frac{w^T(y+g)}{\norm{y+g}} = \frac{w^Ty}{\norm{y+g}} > \frac{w^Ty}{\norm{y}} \notag \\
    \Rightarrow \cos^{-1} (\frac{w^T(y+g)}{\norm{y+g}}) < \cos^{-1} (\frac{w^Ty}{\norm{y}}) \\
    \Rightarrow \frac{\pi}{2} - \cos^{-1} (\frac{w^T(y+g)}{\norm{y+g}}) > \frac{\pi}{2} - \cos^{-1} (\frac{w^Ty}{\norm{y}}) 
\end{gather}
Therefore the event that $w^Ty > \theta$ is a strict subset of the event that $w^T(y+g) > \theta$, meaning that:
\begin{equation}
    P\parent{w^T(y+g) > \theta} > P\parent{w^T(y+g) > \theta}
\end{equation}
\end{proof}
Another way to see this is to note that we can write $\tilde{w}=w+\eta$ with $\eta$ a zero mean r.v. with covariance $2\sigma^2 I$. The retrieval is robust if $\tilde{w}^Ty > 0$. Substituting:
    \begin{equation}
        \tilde{w}^Ty = \parent{w+\eta}^T y = w^T y +\eta^Ty
    \end{equation}
    From our assumption, $ w^T y>0$, so robustness will be maintained if $ w^T y > -\eta^Ty$. From our assumptions:
    \begin{equation}
        \text{Var}(\eta^Ty) = 2\sigma^2 \norm{y}^2
    \end{equation}
    Again, according to our assumptions, replacing $y\rightarrow y+g$ maintains:
    \begin{equation}
        w^Ty = w^T(y+g)
    \end{equation}
    since $g$ is orthogonal to $w$, and from \cref{lemma:dist}:
    \begin{equation}
        \norm{y} > \norm{y+g}
    \end{equation}
    Therefore:
    \begin{gather}
         \text{Var}(\eta^T(y+g)) = 2\sigma^2 \norm{y+g}^2 < 2\sigma^2 \norm{y}^2 \notag \\
         = \text{Var}(\eta^Ty)
    \end{gather}
    meaning that:
    \begin{equation}
        P\parent{w^T (y+g) < -\eta^T(y+g)} < P\parent{w^T y < -\eta^Ty}
    \end{equation}
    Since we decreased the variance of a zero mean r.v..

\subsection{Extensions to \Cref{theorem:robust}}

For completeness, we provide two extensions for the proof, one in the case of perturbation with non-zero mean, and another in the case of a general covariance structure. 
\subsubsection{Extension to Non-Zero Mean}
Suppose that the noise can change the mean of the perturbed points. Following all notations of proof~\ref{mainproof}, the requirement for robustness is that:
\begin{equation}
    \tilde{w}^Ty > b
\end{equation}
where $2b=\tilde{x}_1^T\tilde{x}_1-\tilde{x}_2^T\tilde{x}_2$, so robustness is maintained if:
\begin{equation}
    -\eta^T y \leq w^Ty-b
\end{equation}
If we assume that $w^Ty>b$, i.e. that the margin between $y$ and the decision boundary between $x_1,x_2$ is at least $b$, then the same proof from proof~\ref{mainproof} holds.

\subsubsection{Extension to General Covariance}
Suppose that the noise has covariance $C$. Then:
\begin{equation}
    \text{Var}(\eta^Ty) = y^T(2C)y
\end{equation}
In this case we would like to chose $v$ s.t. it is orthogonal to $x_1,x_2$ (maintaining $w^Ty=w^T(y+v)$), but also satisfies:
\begin{align}
    (y+v)^TC(y+v)\leq y^TCy \notag \\
    \Rightarrow 2v^TCy+v^TCv \leq 0
\end{align}
Any direction maintaining the above will increase robustness. Notice that this direction does not necessarily point from $y$ to the origin (which in our case is the mean of modality $\X$), therefore the gap is not necessarily closed.

\subsection{Proof of \Cref{theorem:loss_invariance}}
\begin{proof}
 Observe the relative distance between some $y\in Y$ and two points in the other modality when translating modality $\X$ along the gap by $\alpha\cdot v$:
    \begin{eqnarray}\label{eq:nnloss}
        \forall x_i, x_j \in \X: \notag \\
        \norm{y-(x_i + \alpha \cdot v)}^2 - \norm{y-(x_j + \alpha \cdot v)}^2 = \notag \\
         \norm{x_i}^2  -2x_i^Ty  - 2x_i^Tv -\norm{x_j}^2  +2x_j^Ty  + 2x_j^Tv 
    \end{eqnarray}
    Since $x_i,x_j$ are embedded on the unit sphere $\norm{x_i}=\norm{x_j}=1$. Since $y$ is also on the unit sphere, $-2x^Ty = \norm{x-y}^2 - 2$. Additionally, under the orthogonality assumption $\forall x\in \X: x^Tv=c$ for some constant $c\in \R$. Substituting into equation~\ref{eq:nnloss}:
    \begin{eqnarray}
         \norm{y-(x_i + \alpha \cdot v)}^2 - \norm{y-(x_j + \alpha \cdot v)}^2 =\notag  \\
         1+\norm{x_i-y}^2  - 2c - 1 + 2c - \norm{x_j-y}^2 = \notag \\
         \norm{x_i-y}^2- \norm{x_j-y}^2
    \end{eqnarray}
    Therefore:
    \begin{eqnarray}
         \norm{y-x_i}^2 < \norm{y-x_j}^2 \Rightarrow \notag \\
         \norm{y-(x_i + \alpha \cdot v)}^2 < \norm{y-(x_j + \alpha \cdot v)}^2
    \end{eqnarray}
    Meaning the nearest neighbor structure is maintained after translating one modality along the gap.
\end{proof}

\section{Does Information Imbalance Cause the Gap?}\label{sec:appendixinformationimbalance}
It has previously been postulated that information imbalance between texts and images, i.e. when texts are much less informative that their image counterparts, plays a role in the creation of the modality gap \cite{trigger}. 

We observe a simplified setting in which a training set for a model trained with the contrastive loss contains only two images $x_1,x_2$ with the same caption $y$. We'll show that the minima of these dynamics isn't when a gap exists but when all points lie on top one another.

The loss is therefore:
\begin{align}
    \mathcal{L}(x_{1},x_{2},y)=&-\frac{1}{2}(\log\frac{e^{-d(x_{1},y)}}{e^{-d(x_{1},y)}+e^{-d(x_{1},y)}} \notag \\ 
    &+\log\frac{e^{-d(x_{2},y)}}{e^{-d(x_{2},y)}+e^{-d(x_{2},y)}} \notag \\ 
    &+\log\frac{e^{-d(x_{1},y)}}{e^{-d(x_{1},y)}+e^{-d(x_{2},y)}} \notag \\ 
    &+\log\frac{e^{-d(x_{2},y)}}{e^{-d(x_{2},y)}+e^{-d(x_{1},y)}}) \notag \\ 
=&-\log\frac{1}{2}+d(x_{1},y)+d(x_{2},y)\notag \\
&+\log(e^{-d(x_{1},y)}+e^{-d(x_{2},y)})
\end{align}
With $d = \ell_2$ distance. Taking the derivative:
\begin{equation}
    \frac{d\mathcal{L}}{dx_{i}}=2(y-x_{i})+\frac{e^{-d(x_{i},y)}}{e^{-d(x_{1},y)}+e^{-d(x_{2},y)}}\cdot2(x_{i}-y)
\end{equation}
Since $\frac{e^{-d(x_{i},y)}}{e^{-d(x_{1},y)}+e^{-d(x_{2},y)}}<1$ we get that the total gradient is $\frac{d\mathcal{L}}{dx_{i}}=c(y-x_{i})$ for some $c>0$. 
The same analysis can be done for $\frac{d\mathcal{L}}{dy}$ where we get:
\begin{align}
    \frac{d\mathcal{L}}{dy}&=2(x_{1}-y)+2(x_{2}-y)\notag \\
    &+\frac{e^{-d(x_{1},y)}\cdot2(y-x_{1})+e^{-d(x_{2},y)}\cdot2(y-x_{2})}{e^{-d(x_{1},y)}+e^{-d(x_{2},y)}}
\end{align}
Now assume by negation that the minima of these dynamics isn't when $x_1=x_2=y$. For $x_i$ it is obvious from the gradient that the only minima is $x_i=y$. For $y$, it is enough to see that $y=\frac{x_1+x_2}{2}$ is a minima by substituting into the gradient. Therefore contradicting that there is a different minima except $x_1=x_2=y$, as required. 

\section{Dimensionality Collapse and the Gap}\label{appendix:dimcollapse}
\begin{figure*}[t!]
    \centering
    \includegraphics[width=0.4\linewidth]{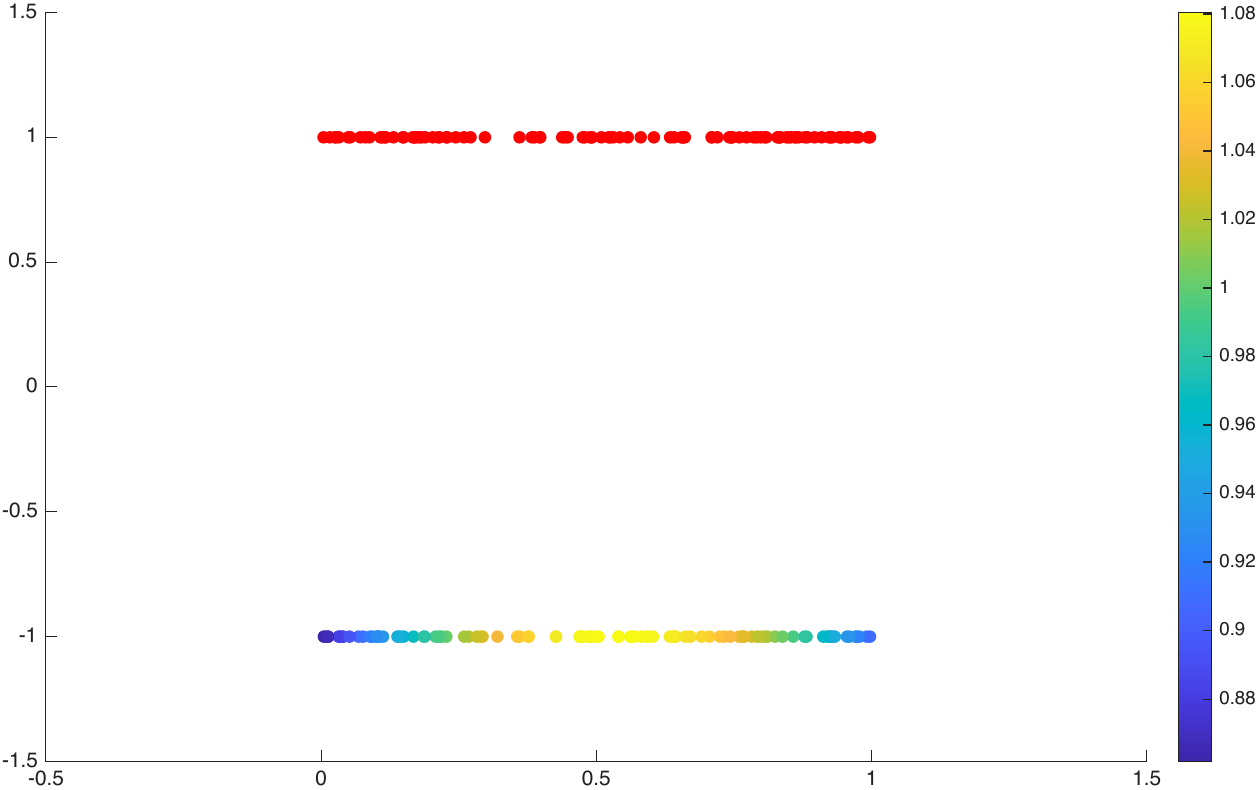}
    \begin{tabular}{ccccc}
        \small Initialization &
        \small Iteration 100 &
        \small Iteration 500 &
        \small Iteration 1000 &
        \small Iteration 5000  \\
        \includegraphics[width=0.17\linewidth]{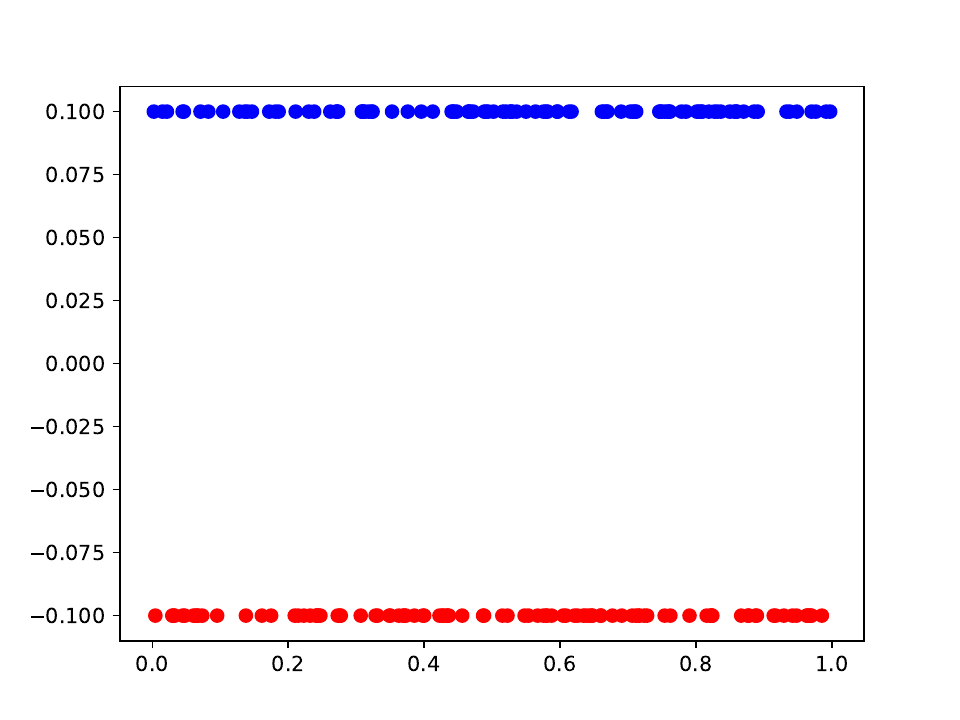} &
        \includegraphics[width=0.17\linewidth]{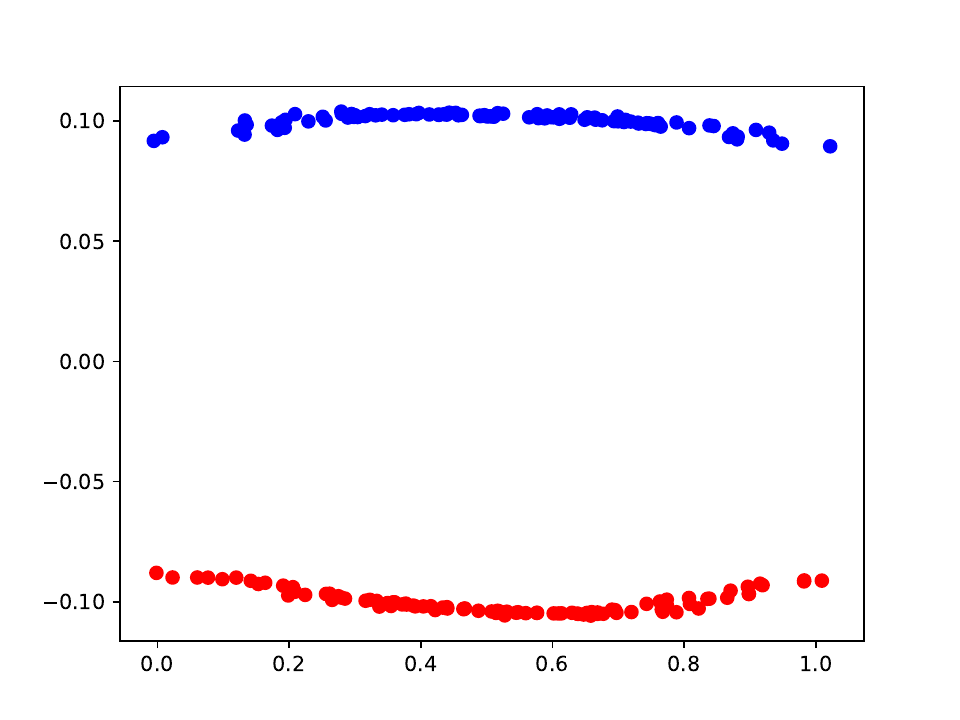} &
        \includegraphics[width=0.17\linewidth]{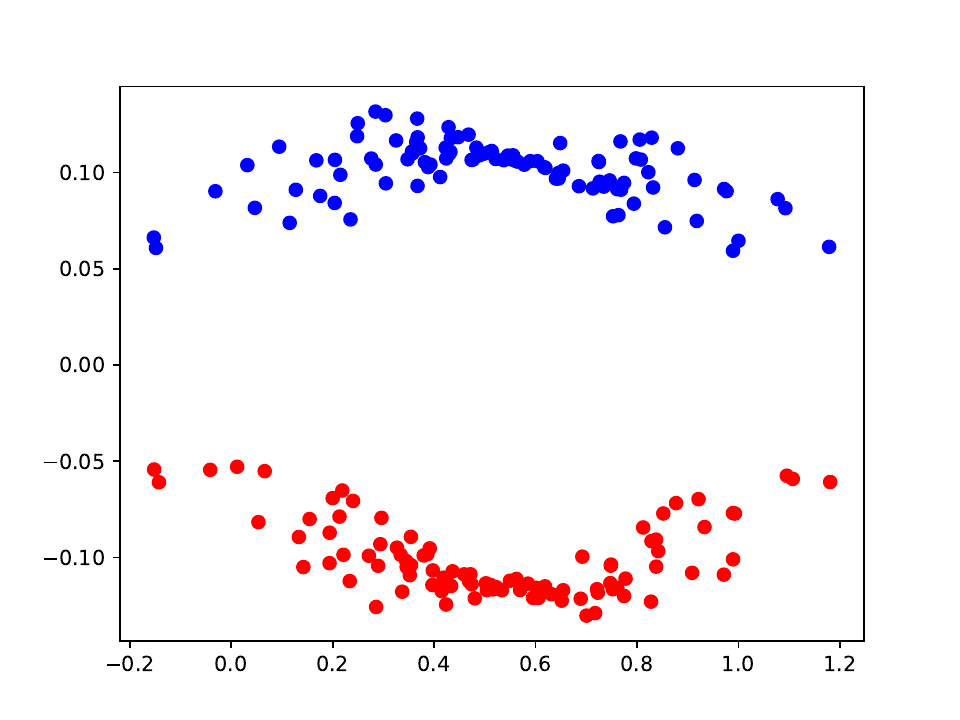}  &
        \includegraphics[width=0.17\linewidth]{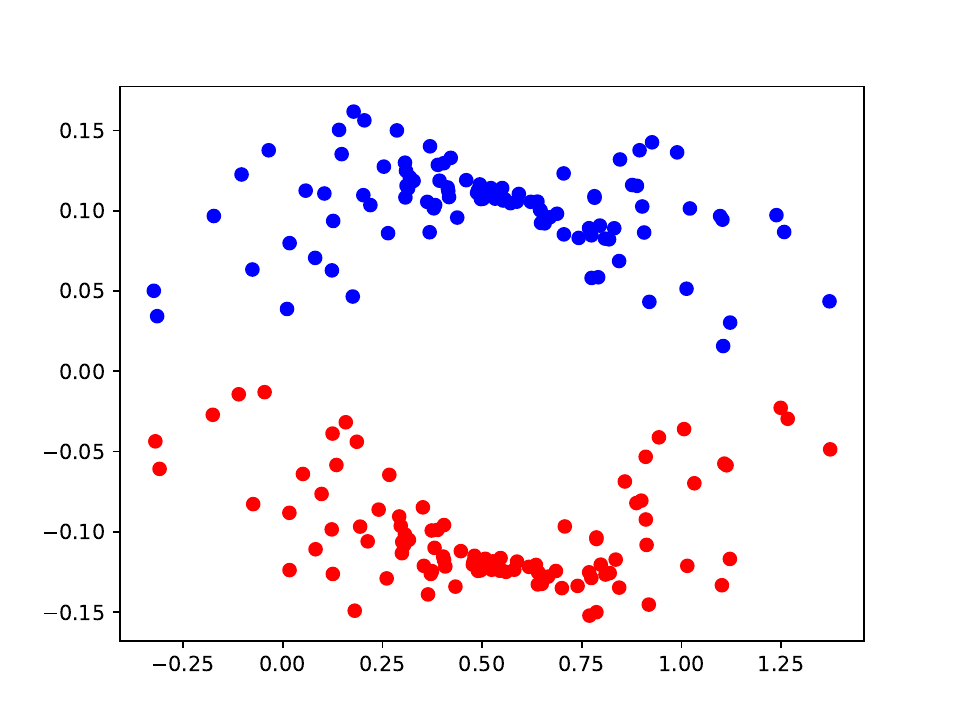} &
        \includegraphics[width=0.17\linewidth]{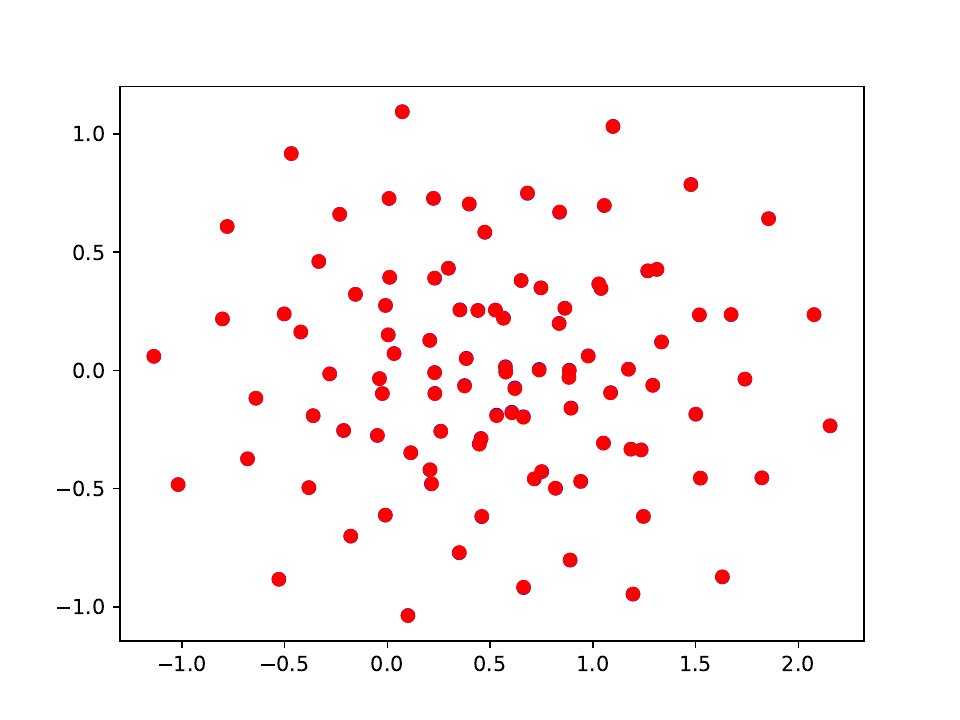}
    \end{tabular}

    \caption{{\bf (Top:} Two set of points that perfectly satisfy dimensionality collapse. The bottom points are color-coded by $S_i^y$. Note that $S_i>1$ for points near the center. {\bf (bottom:} The training dynamics. Despite no variance in the direction of the gap for both modalities, the solution converged to has no gap and is perfectly aligned. This is because in the initial iterations, points near the center are pushed away from the other modality hence destroying the original low dimensionality of each modality.}
    \label{fig:sufficient}
\end{figure*}
As mentioned in the paper, an orthogonal gap can emerge even without dimensionality collapse, as in \cref{fig:theory2} where initialization has equal variance in all dimensions. This makes it an unnecessary condition for the formation of an orthogonal gap.

But it the condition sufficient? \Cref{fig:sufficient} shows that the answer is no as well. We train again follow the simplified setting of experiment of \cref{fig:theory2}, only initializing the two modalities with complete dimensional collapse - all variance lies in a single dimension with the other containing none. In this case, training converges to a fully aligned solution with no gap, making the condition of dimensionality collapse insufficient to explain the formation of the gap. The behavior is consistent with our analysis because in the initial iterations double stochasticity does not hold. The top of \cref{fig:sufficient} color-codes the bottom points by $S_i^y$. Note that $S_i>1$ for points near the center, so at the initial iterations, points near the center are pushed away from the other modality more than points at the edges  hence destroying the original low dimensionality of each modality.

\section{Empirical Evidence for Assumptions}\label{appendix:assumptions}


\subsection{Empirical Evidence for Apprximate Double Stochasticity}\label{appendix:ds}

We repeat the experiment measuring  $S_j^x,S_j^y$ as in \cref{fig:boxplots} only for embeddings trained on the unit hypersphere in $\R^{64}$. Results are shown in \cref{fig:sihypersphere} - as training progresses, $S_j^x,S_j^y\rightarrow 1$ meaning that the matrices $Q^x,Q^y$ become more doubly-stochastic. 

\begin{figure}[h]
    \centering
  \includegraphics[width=\linewidth]{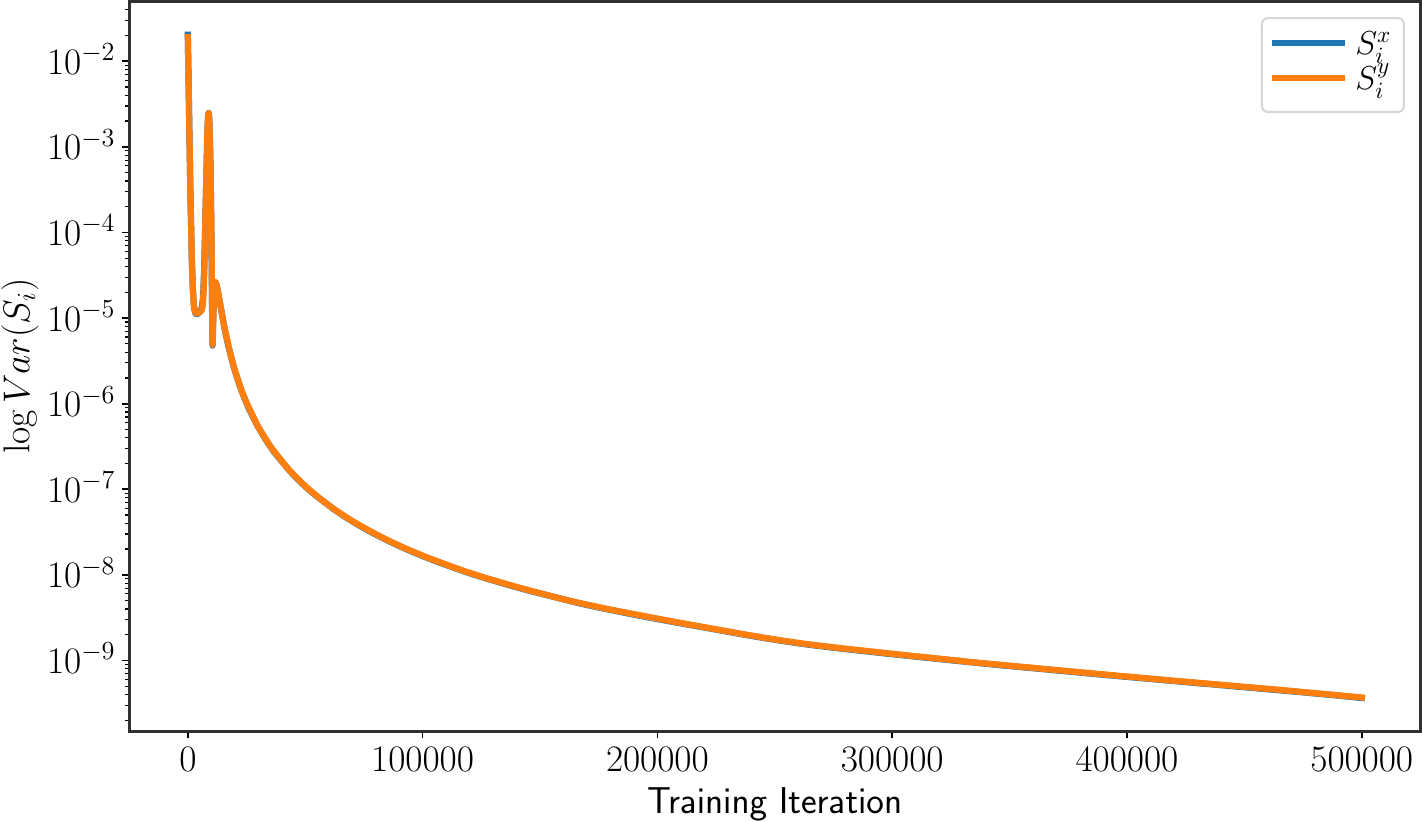} 
  \caption[]{We calculate $S_i^x,S_i^y$ throughout the training of $N=500$ embedding pairs initialized from a Gaussian distribution with variance $\sigma^2 = 0.01$ and gap of $\norm{\vec{g}} = 1.8$. As training progresses, the values of $S_i^x,S_i^y$ are concentrated around their means, which by definition equal $1$, making the matrices $Q^x,Q^y$ more doubly-stochastic.}
    \label{fig:sihypersphere}
\end{figure}

\subsection{Empirical Evidence for Orthogonality}\label{appendix:orthogonality}
For completeness we recompute results from \cite{zhang2023diagnosing} showcasing the orthogonality of the global gap vector for different models and datasets in \cref{fig:ortho}.

 \begin{figure}[h]
     \centering
   \begin{subfigure}[t]{\figwidth\linewidth}
    \includegraphics[width=\linewidth]{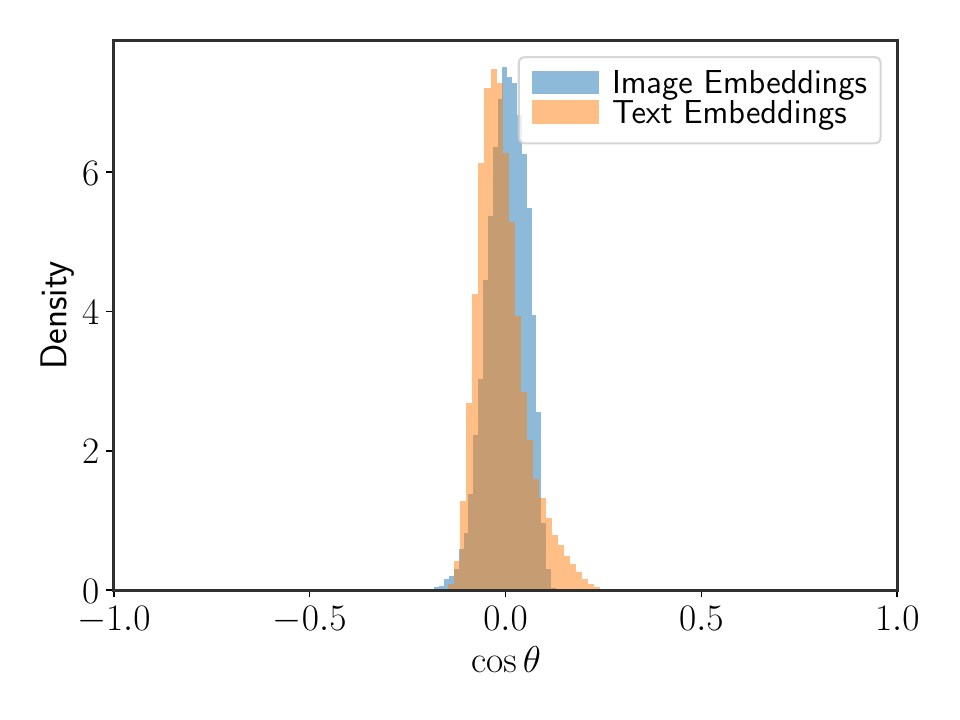} 
     \caption{SigLIP on Imagenet}
     \label{fig:cliportho}
 \end{subfigure}
 \hfill
 \begin{subfigure}[t]{\figwidth\linewidth}
  \includegraphics[width=\linewidth]{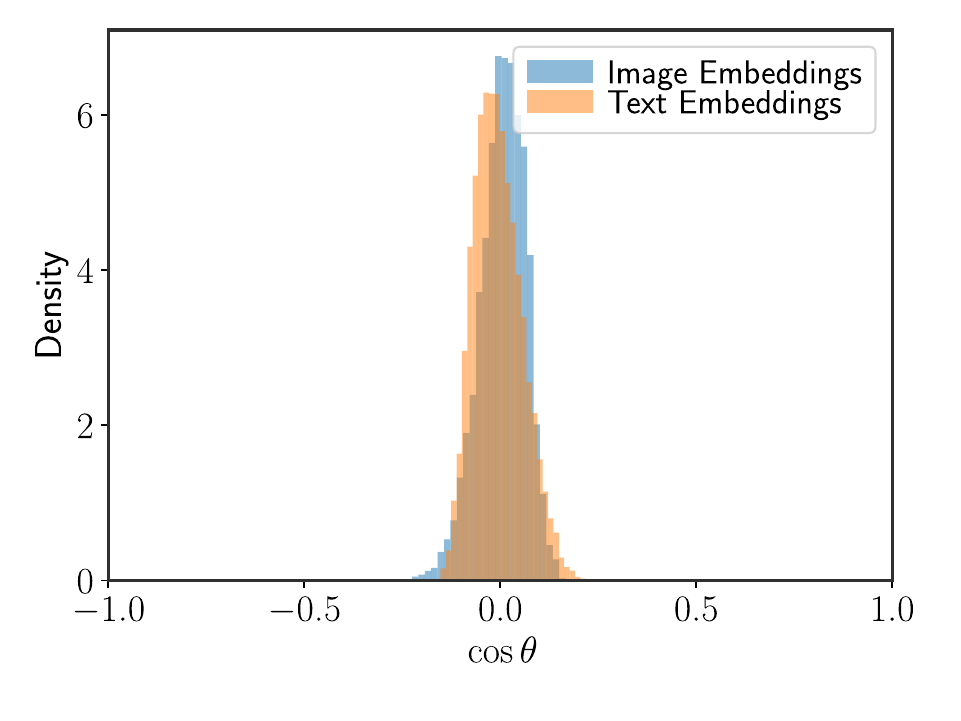} 
  \caption{CLIP on MS-COCO}
  \label{fig:cliportho}
  \end{subfigure}
   \ifthree{
 \hfill
 \begin{subfigure}[t]{\figwidth\linewidth}
 {{\label{fig:cliportho}\includegraphics[width=\linewidth]{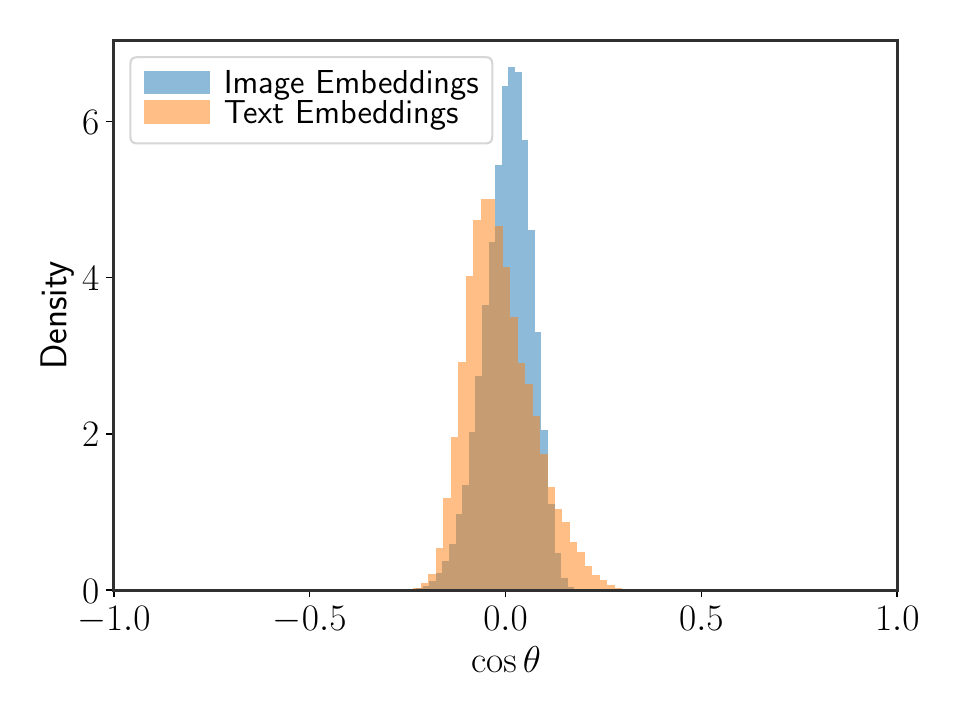} }
 \caption{CLIP (ViT-B16) on Imagenet}
 }
 \end{subfigure}
 }
  \caption[]{The orthogonality assumption - the cosine of angle $\theta$ between $\vec{g}$ and each embedding in each modality is nearly $0$ for different models and datasets, confirming assumption~\ref{assump}.} 
    \label{fig:ortho}
\end{figure}

\section{Robustness to Various Noise Distributions}\label{appendix:other_noises}
\begin{figure*}[ht!]
    \centering
    \begin{tabular}{ccc}
        \includegraphics[width=0.32\linewidth]{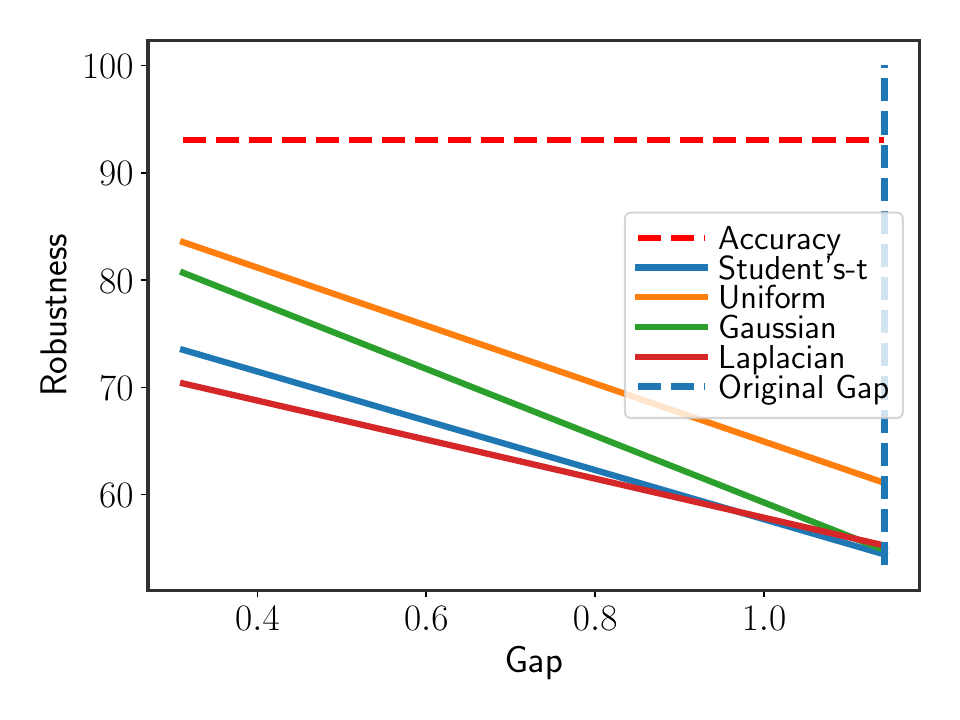} &
        \includegraphics[width=0.32\linewidth]{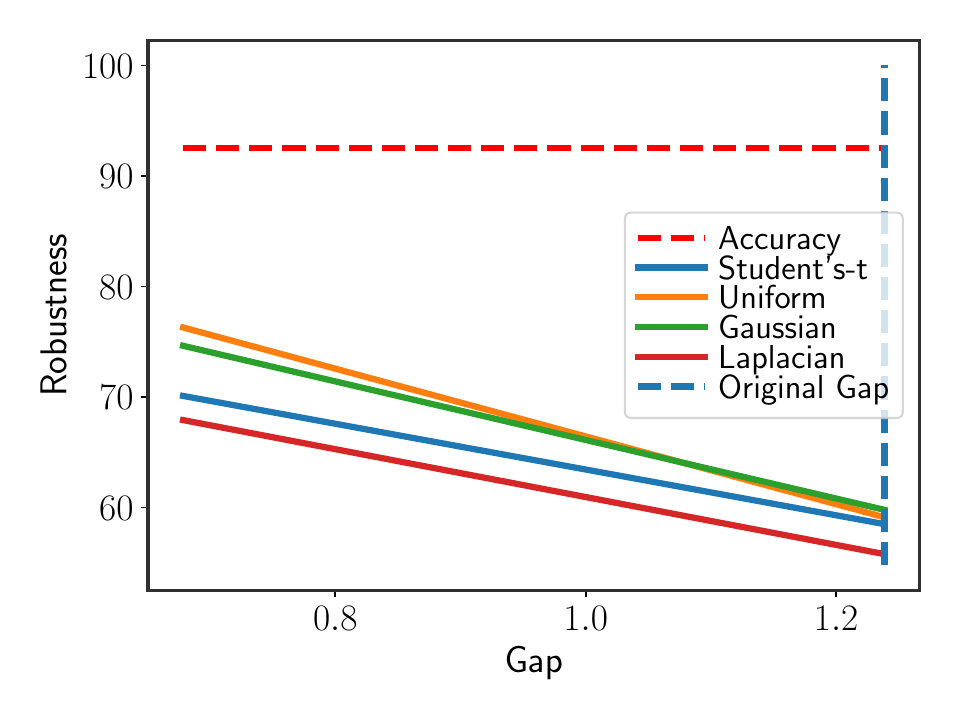} &
        \includegraphics[width=0.32\linewidth]{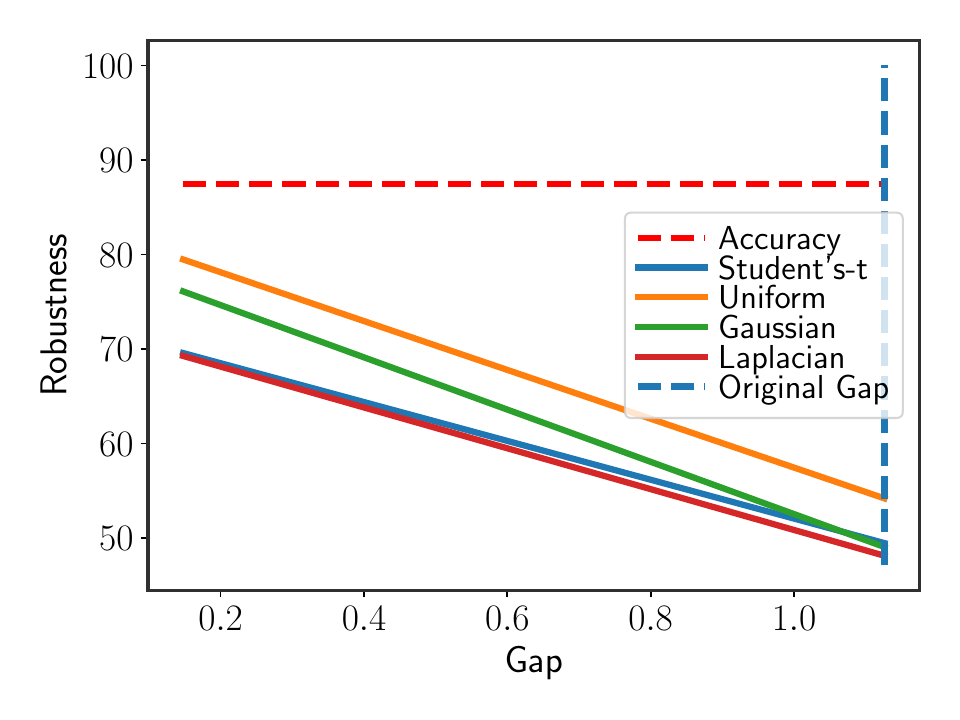} \\[-0.3em]

        \small (a) CLIP ViT-L-14 &
        \small (b) SigLIP &
        \small (c) CLIP ViT-B-16
    \end{tabular}

    \caption{Results for different noise distributions and models on CIFAR10. All are normalized to have variance $\sigma^2 = 0.025^2$. Robustness increases when the gap is closed.}
    \label{fig:noise_models}
\end{figure*}
Here we expand on the results presented in \cref{section:gaussian_experiments}. As \cref{theorem:robust} states, robustness should increase for any distribution of noise which has no correlation between the different dimensions and zero mean, not necessarily Gaussian noise. \Cref{fig:noise_models} expands the experiments in \cref{fig:robustness} to non-Gaussian distributions with zero mean and where all dimensions are sampled i.i.d. . As can be seen, as long as the noise's dimensions are uncorrelated, robustness increases when the gap is closed.

\section{Rephrasing Experiments - Further Details} \label{appendix:rephrasing}
This section provides details on the experiments presented in \cref{subsection:rephrasing}.
\subsection{Rephrasing Noise is Correlated}
Caption rephrasing, such as replacing the caption "a photo of a X" to "an image of X", tends to result in correlated noise in the embedding space. Intuitively, this is because in the input space, rephrasing can be seen as subtraction of a constant input - the string "a photo of a" followed by addition of the constant string "an image of". When these changes are applied to all inputs, it can be expected that the change in the embedding space would either not exist (if the model is completely robust to such changes) or be consistent.

To show this, we conduct the above experiment on Imagenet. Assume all $N$ class names (texts) are embedded with the prefix "a photo of a", resulting in the text embedding matrix $X$. We create a noise text embedding matrix, $\tilde{X}$ by embedding all class names with a different prefix (e.g. "an image of"). We repeat for 400 different caption templates, and calculate the empirical noise covariance. Define the noise to be $M = \tilde{X} - X$. The covariance $C$ is then:
\begin{equation}
    C = (M - \frac{1}{N} \vec{1}\vec{1}^T M)^T  (M - \frac{1}{N} \vec{1}\vec{1}^T M)
\end{equation}
To measure how much the noise is correlated, we simply measure the norm of all off-diagonal elements relative to the forbenius norm of the covariance matrix:
\begin{equation} \label{noisecor}
    d(C) = \frac{\norm{C - \text{diag}(C)}_F}{\norm{C}_F}
\end{equation}
This is of course equal to zero in the case that the noise is uncorrelated and $d(C)=1$ when completely correlated. \cref{fig:appendix:noisecov} measures $d(C)$ for different caption rephrasings for different models on Imagenet. As can be seen, in all cases $d(C)\approx 1$ meaning the noise is indeed extremely correlated. 

\begin{figure}[h]
   \begin{tabular}{cc}
        \includegraphics[width=0.45\linewidth]{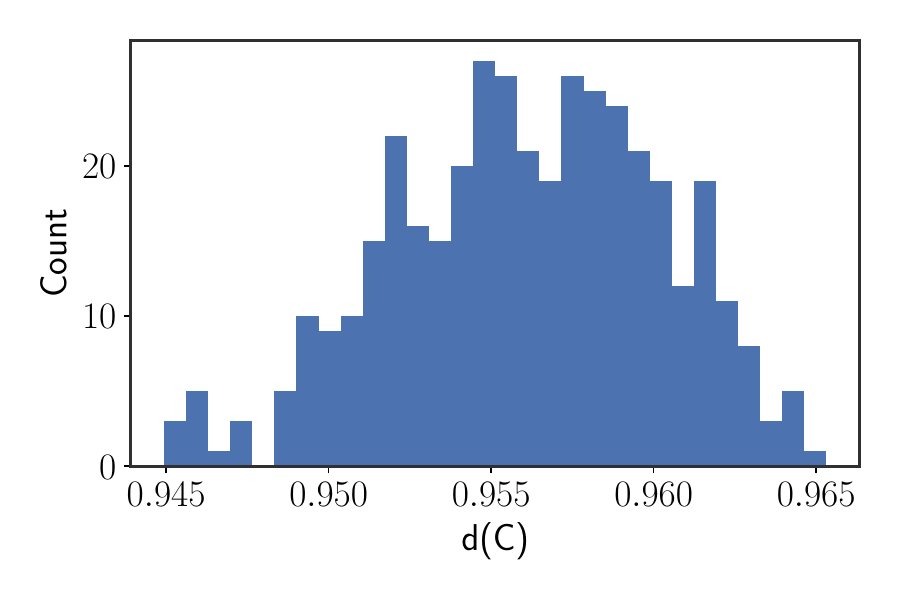} &
        \includegraphics[width=0.45\linewidth]{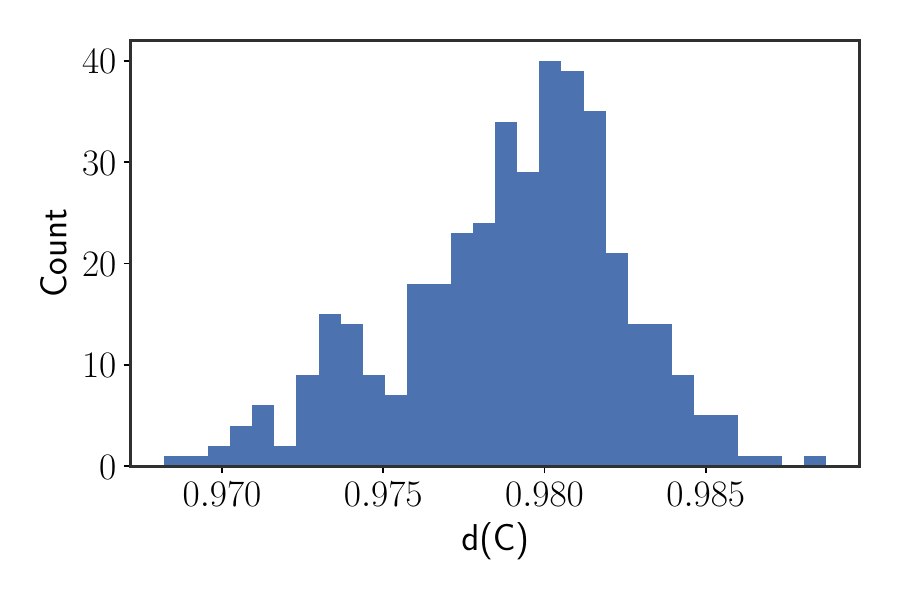}\\[-0.3em]
        \small (a) CLIP (ViT B-16) on Imagenet &
        \small (b) SigLIP on Imagenet  \\[-0.3em]
         
    \end{tabular}
  \caption[]{We compute $d(C)$ according to \cref{noisecor}. For all models we tested, with over 400 different captions, $d(C)\approx1$ suggesting the noise is extremely correlated in the embedding space. }
    \label{fig:appendix:noisecov}
\end{figure}

\subsection{Rephrasing A-OKVQA}
In the multiple choice VQA setting, we follow the protocol of \citet{clipvqa}: given a question $Q$, for each possible answer $A_i$ for $i\in \sqrbr{N_A}$ we embed the text that is the concatenation of $Q+A_i$ resulting in $N_A$ text embeddings. We chose the estimated answer via cosine similarity with the image embedding. We evaluate on the entire A-OKVQA validation set.

In order to rephrase a correct answer, we simply swap the correct answer $A_j$ with each of the "direct answers" options for that particular question in the dataset. Each question has up to 10 different possible correct answers, each one constitutes as a rephrasing. To replace the wrong answers we sample from the entire answer bank (not including the correct answers). 

Notice that in this case our measure of robustness isn't meaningful as the "noisy" wrong answers are completely different texts, therefore we shouldn't expect the model to consistently predict the same wrong answer when adding this type of noise. Therefore, under this setting we focus on consistency w.r.t. right answers, which is similar to measuring accuracy when adding noise. If accuracy increases when closing the gap, the model is more consistent on predicting the right answer, or in other words - robust w.r.t. that answer.

\section{Approximately Orthogonal Algorithm}\label{appendix:approx_algo}

As \cref{theorem:robust} proves, improvement of robustness is correlated with the amount of the gap that is closed, and is maximized when the two modality means overlap. In practice, the gap vector $g$ can almost be non-orthogonal to the subspace of $\X$, meaning that closing the gap in the direction of $g'=g - VV^Tg$ will produce a small increase in robustness since $\norm{g'} \ll \norm{g}$.

To solve this we rely on \cref{theorem:robust} that states that any direction which decreases the distance will increase robustness. We propose closing the gap in directions which are decreasingly orthogonal to the affine subspace of the modality being moved. Following from \cref{theorem:loss_invariance}, this procedure assures us that closing of the gap would result in minimal change to the clean nearest neighbor retrieval. We implement this idea by simply thresholding the number of components to which the gap vector is orthogonal, starting with those containing minimal variance. \Cref{alg:projection} presents a simple python pseudocode implementation.

\begin{algorithm}
\caption{Approximately Orthogonal Gap Vector}
\begin{algorithmic}[1]
\State \Comment{$\epsilon$ - variance threshold, $\vec{g}$ - gap vector, $X$ - modality to move}
\State $VSV^T \gets \texttt{PCA}(X)$
\State $\text{inds} \gets \{i : S_i > \epsilon\}$
\State $V' \gets V[:, \text{inds}]$
\State \Return $(I - V' V'^T) g$
\end{algorithmic}
\label{alg:projection}
\end{algorithm}

This algorithm produces a tradeoff between the increment in robustness and loss of accuracy controlled by the parameter $\epsilon$. An example of using the algorithm is displayed in \cref{fig:appendix:approx} and the tradeoff induces can be seen in \cref{fig:consistency_accuracy}.

\section{Simulations}\label{appendix:simulations}
\subsection{Details}
All simulations are done by optimizing randomly initialized embedding vectors using the \cref{mmloss}. We use full batch gradient descent with a learning rate of 0.01. 

In \cref{fig:theory2} (top) we train unnormalized embedding vectors, directly optimizing \cref{mmloss} without a normalization step. We initialize the embeddings sampled from two normal distributions $\mathcal{N}(\mu_{1,2}, 0.01^2 \cdot I)$ with $\mu_{1,2}=(0, \pm 0.5)$. We use a constant temperature of $\tau=0.1$ and train for $10^7$ iterations.

In \cref{fig:theory2} (bottom) and \cref{fig:training} we initialize 1000 embeddings per modality drawn from a normal distribution $\mathcal{N}(\pm\vec{e}_1, 0.01^2 \cdot I)$ with $\vec{e}_1$ being the first elementary basis vector. All embeddings are normalized as is done in training multi-modal contrastive models \cite{clip}. 

\subsection{Further Experiments}
 \begin{figure}[h]
     \centering
     \begin{tabular}{cc}
        \includegraphics[width=0.45\linewidth]{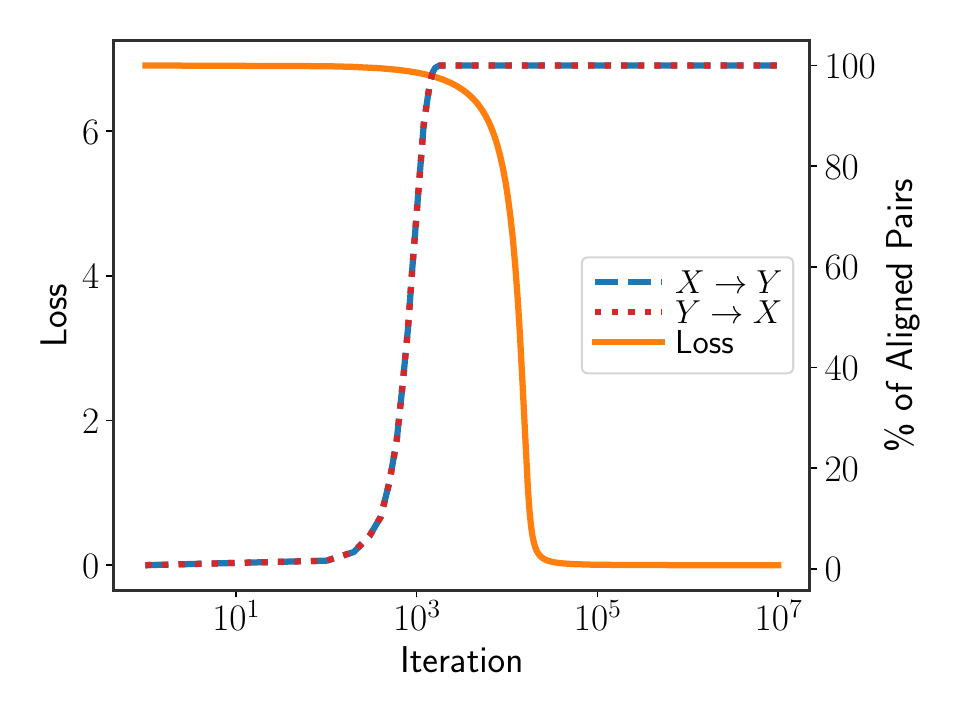}  &
        \includegraphics[width=0.45\linewidth]{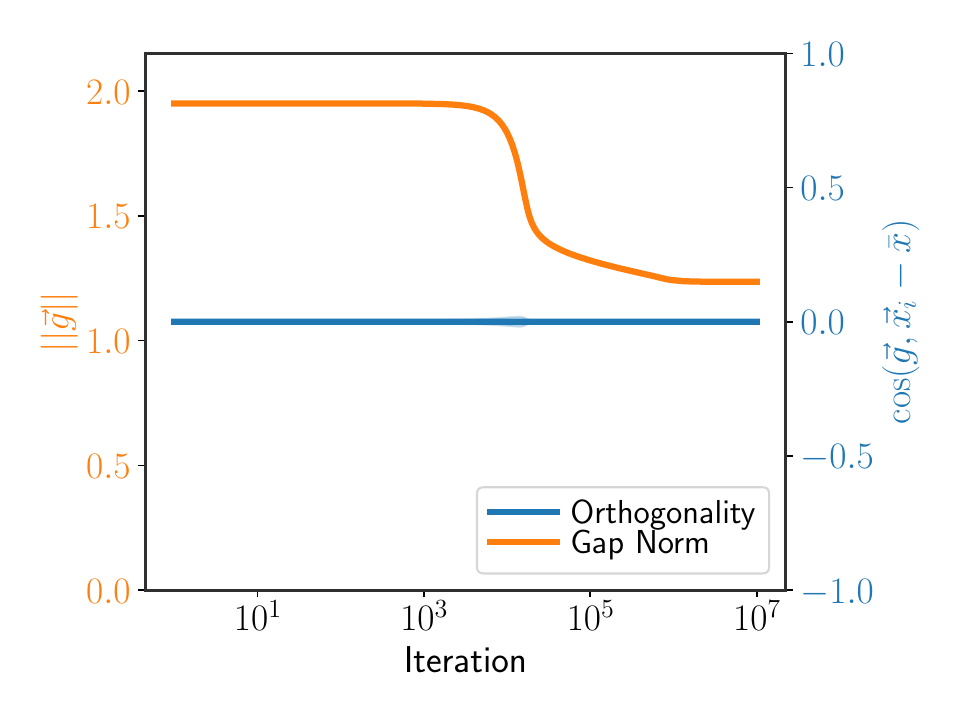} 
     \end{tabular}
    
  \caption[]{We follow \citet{shi2023towards} and learn embeddings $\X,\Y\in \R^{1000\times 512}$ on the hypersphere using the contrastive loss (\cref{mmloss}) and gradient descent. While the loss decreases and training converges (left), a major gap remains between the embeddings and orthogonality holds (right, measured using \cref{eq:ortho}). }\label{fig:training}
\end{figure}

We ablate both the dimension, initial distance between modality means and temperature. As stated in the main paper (and shown in \cite{shi2023towards} and \cite{trigger}), there exist cases in which training converges to completely aligned modalities. This can happen when double stochasticity does not hold throughout the iterations, and double stochasticity may depend subtly on the temperature $\tau$ and the number of iterations. We demonstrate this in \cref{fig:further_exs}.

\begin{figure*}[b!]
    \centering
    \begin{tabular}{cc}
        \includegraphics[width=0.45\linewidth]{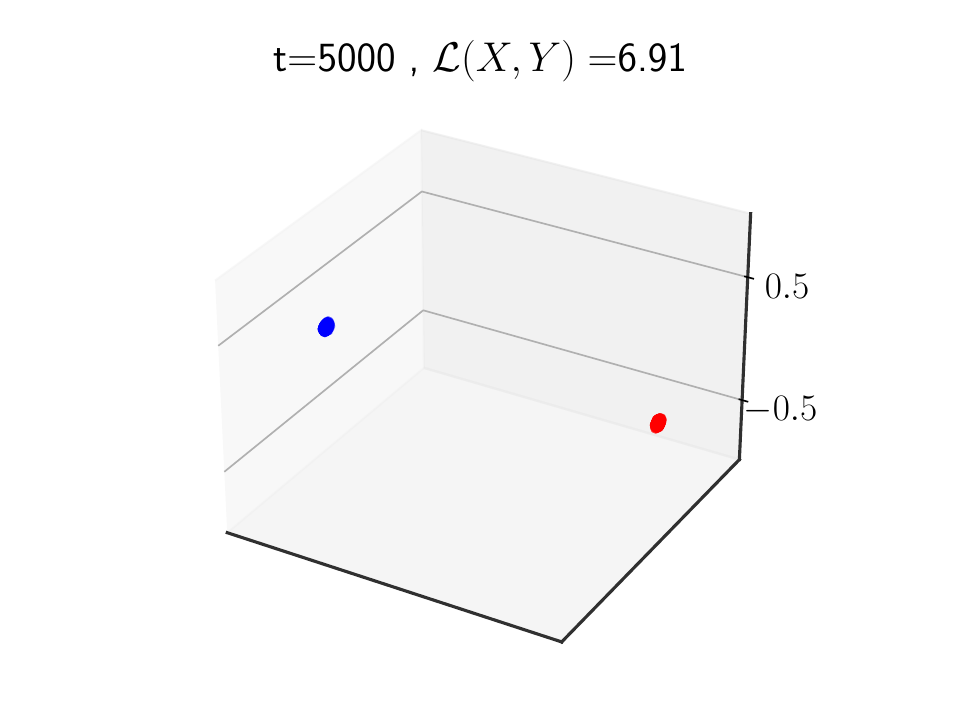} &
        \includegraphics[width=0.45\linewidth]{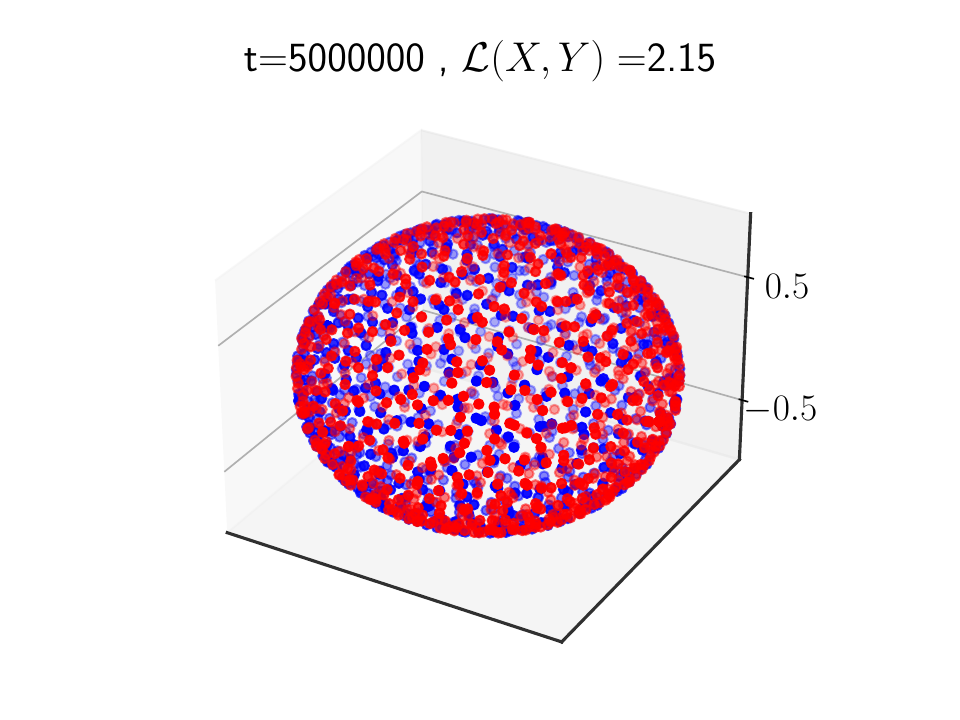} \\[-0.3em]
        \includegraphics[width=0.45\linewidth]{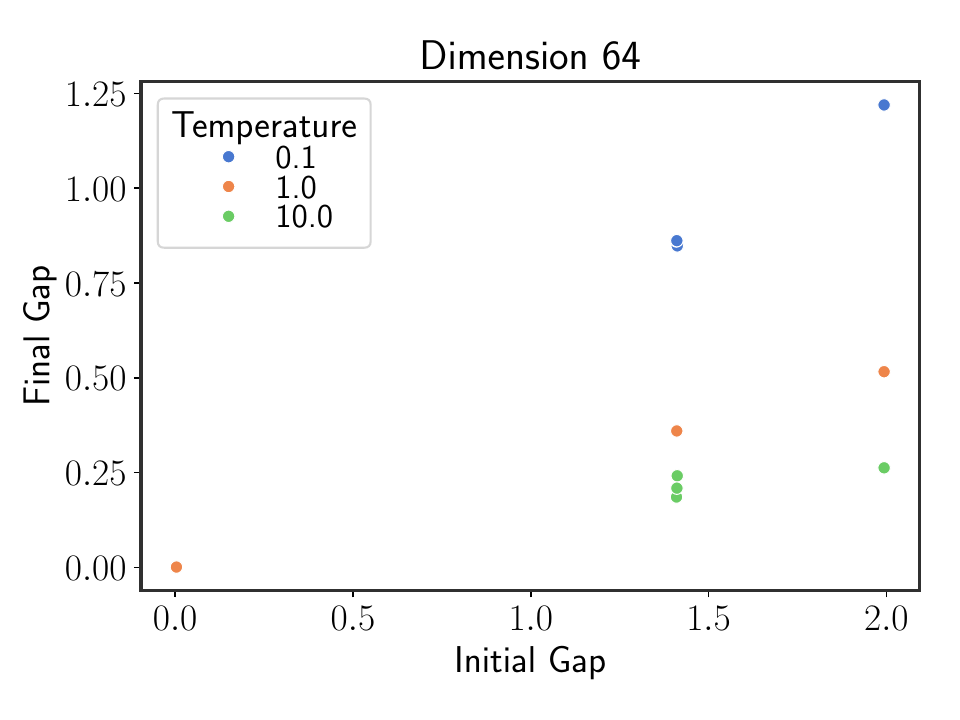} &
        \includegraphics[width=0.45\linewidth]{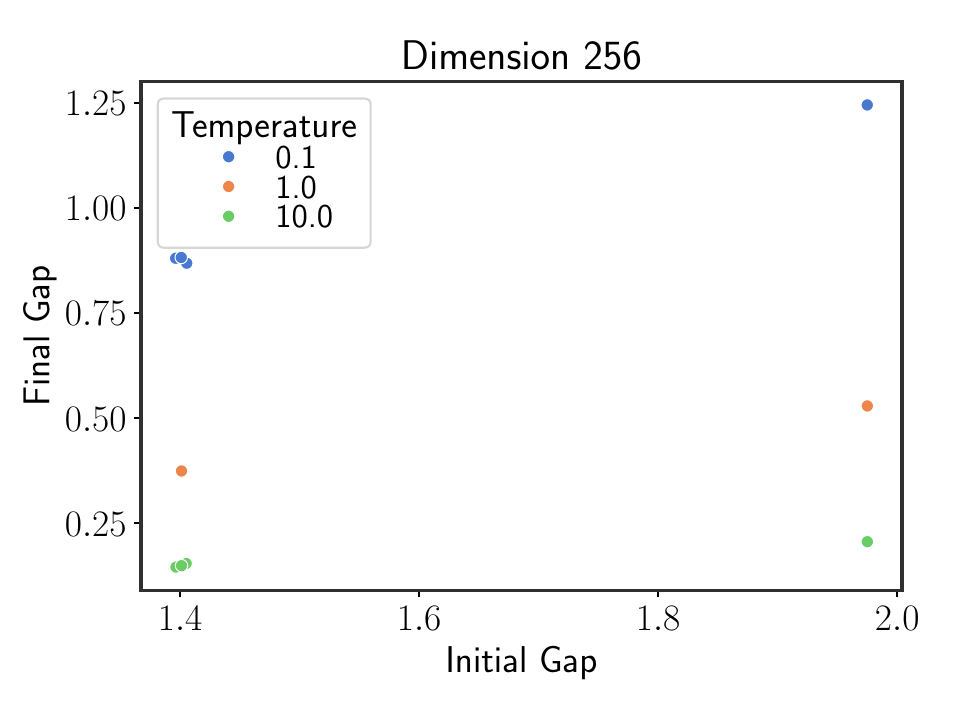} \\[-0.3em]
        \includegraphics[width=0.45\linewidth]{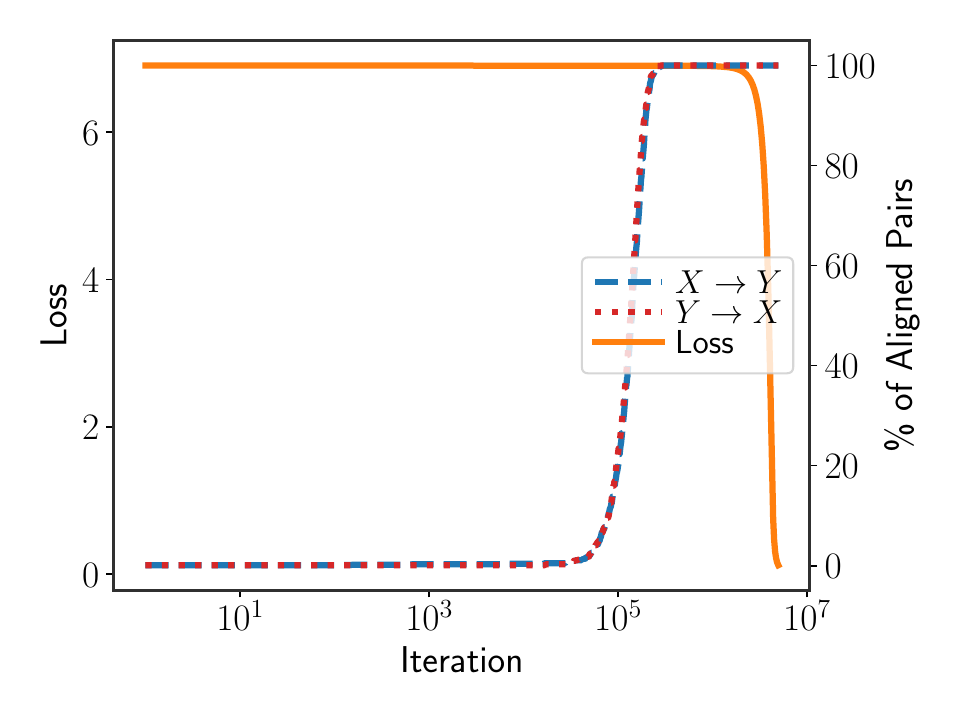} &
        \includegraphics[width=0.45\linewidth]{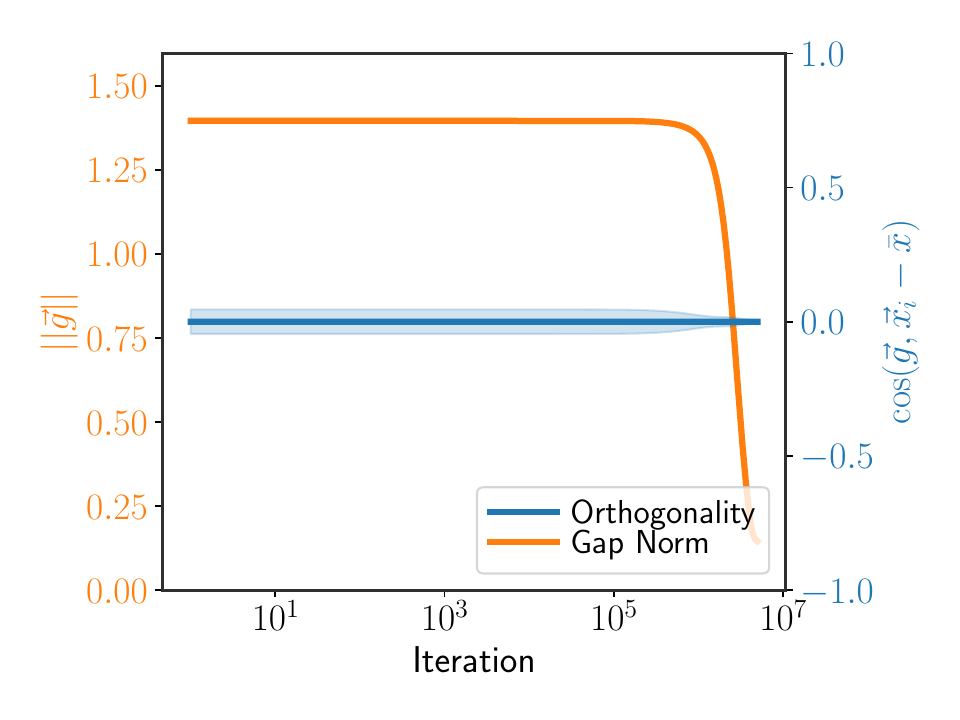}
    \end{tabular}

    \caption{Top: When training with $\tau=1$ training converges to a solution without a gap, despite existence of an initial gap and orthogonality.\\
    Middle: This is consistent for training in higher dimensions as well - different temperatures have different effects on how much of the gap is closed. When temperatures are $\geq 1$, the gap closes throughout training. In higher temperatures training hardly differs from initialization. When the gap is initialized at zero it remains so for all temperatures. 
    Bottom: Example of dynamics in $\R^{256}$ with $\tau=10$. The gap closes throughout training as it converges to perfect alignment. }
    \label{fig:further_exs}
\end{figure*}

\newpage

\begin{figure*}
   \begin{tabular}{ccc}
        \includegraphics[width=0.32\linewidth]{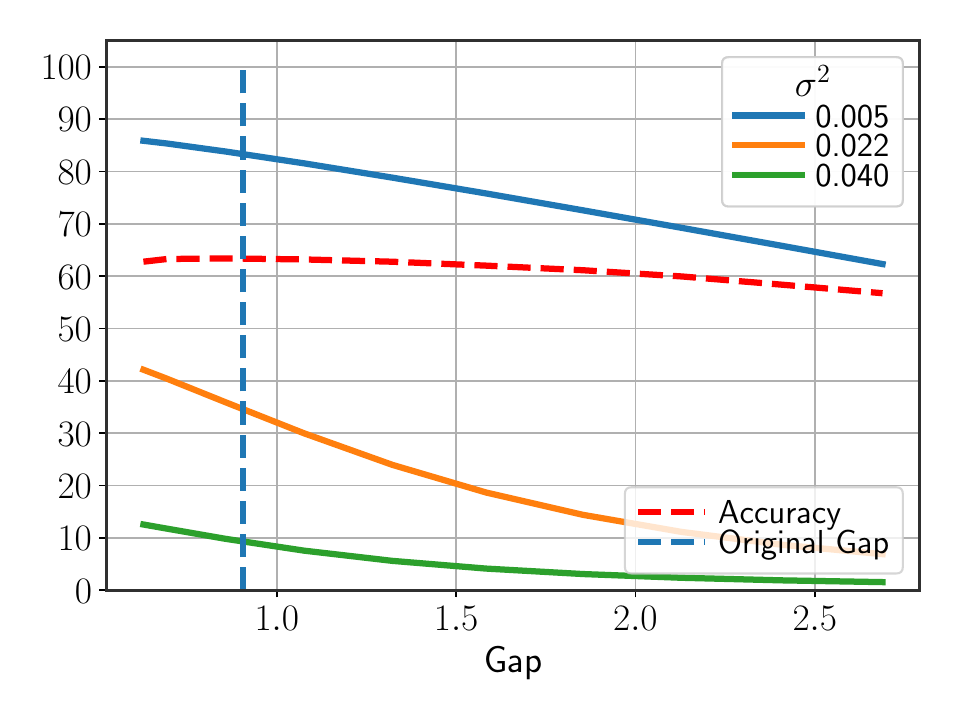} &
        \includegraphics[width=0.32\linewidth]{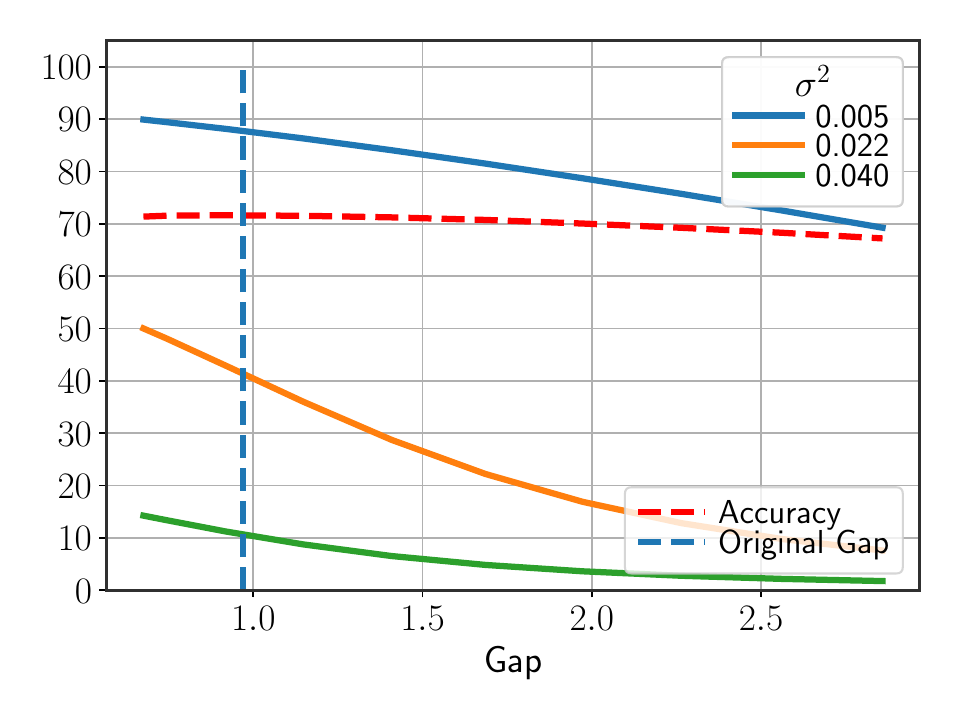} &
        \includegraphics[width=0.32\linewidth]{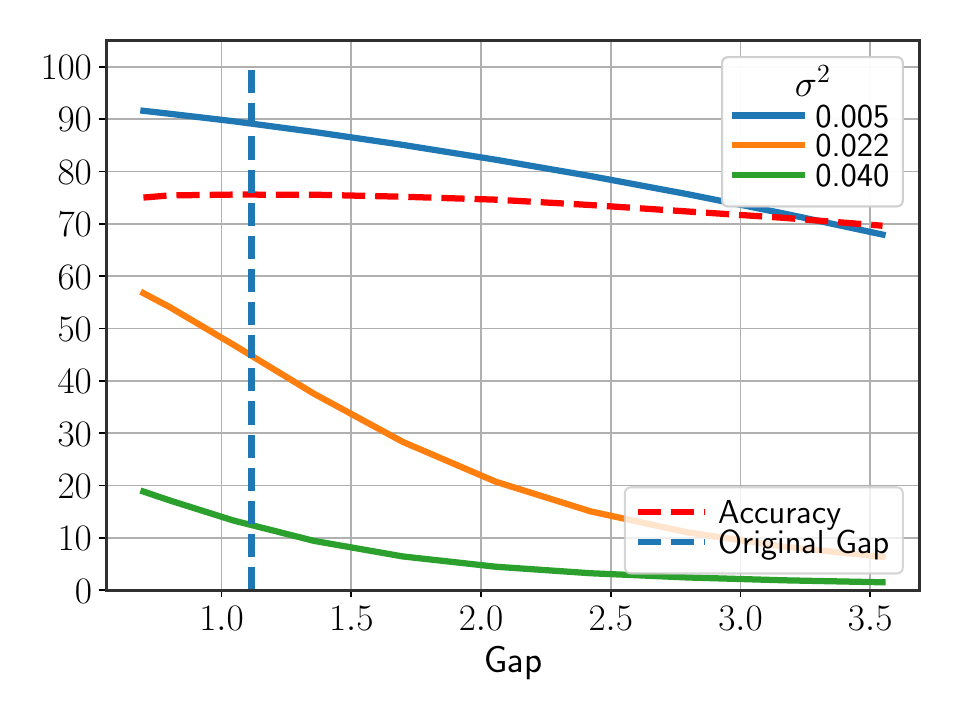}\\[-0.3em]
        \small (a) CLIP (ViT B-16) on Imagenet &
        \small (b) CLIP (ViT L-14) on Imagenet &
        \small (c) SigLIP on Imagenet  \\[-0.3em]
         
        \includegraphics[width=0.32\linewidth]{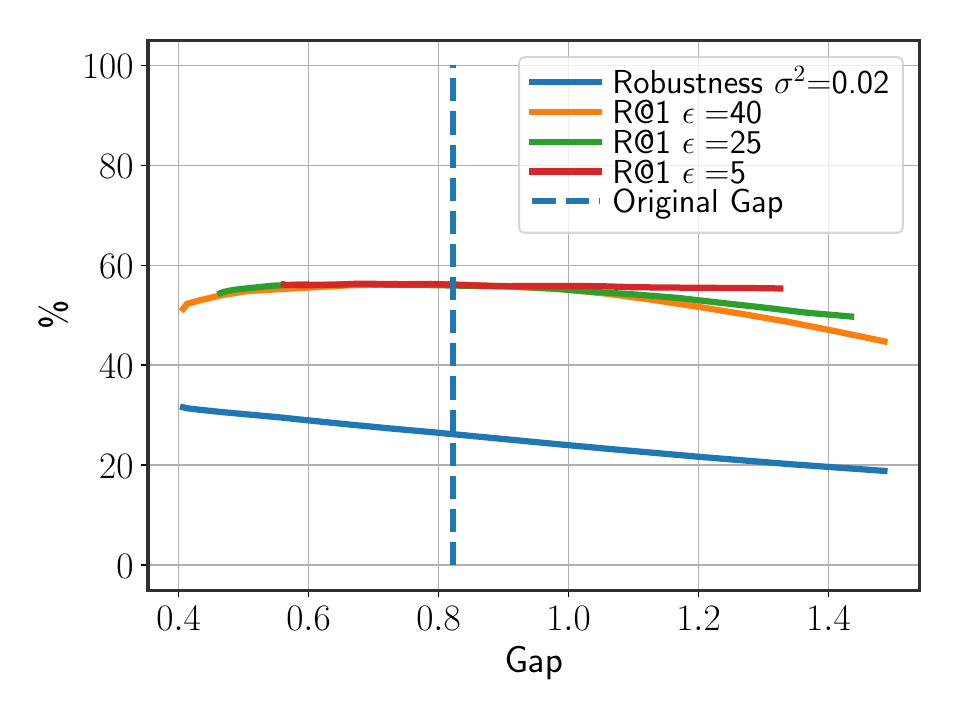} &
        \includegraphics[width=0.32\linewidth]{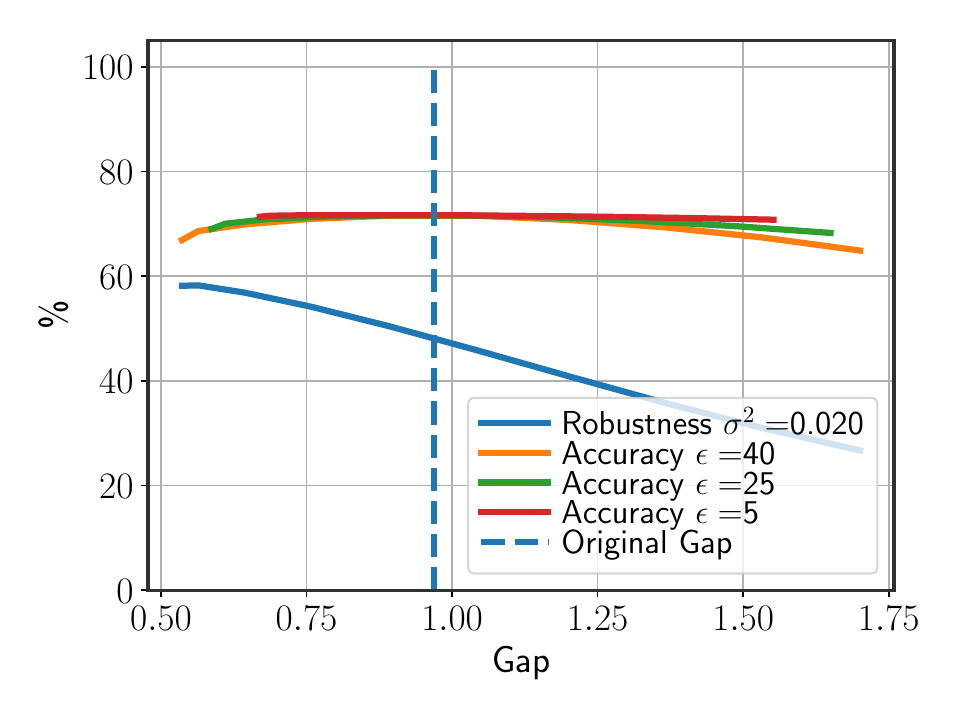} &
        \includegraphics[width=0.32\linewidth]{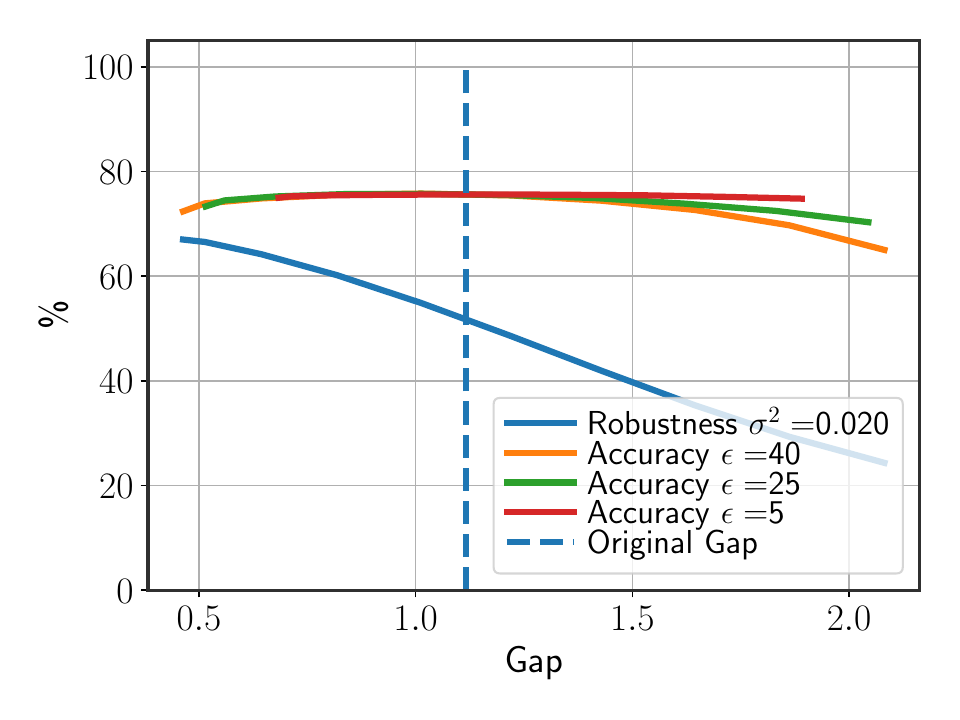} \\[-0.3em]
        \small (d) CLIP (ViT L-14), MS-COCO  &
        \small (e) CLIP (ViT L-14), ImageNet &
        \small (f) SigLIP, ImageNet  \\[-0.3em]
    \end{tabular}
  \caption[]{The zero shot classification accuracy and robustness under noise $\eta\sim \mathcal{N}(0,\sigma^2 I)$, on ImageNet and MS-COCO. \\
  Top row: When using \cref{alg:projection} with $\epsilon=5\%$ of the variance, decrease in accuracy is negligible ($<1\%$) relative to the robustness gained, which can be $\sim 10\%$. \\
  Bottom row: As the threshold $\epsilon$ grows larger, more of the gap is closed, greatly increasing robustness at a negligible cost of accuracy. 
  }
    \label{fig:appendix:approx}

\end{figure*}

\begin{figure*}
   \begin{tabular}{cc}
        {\label{fig:clipmscoco}\includegraphics[width=0.48\linewidth]{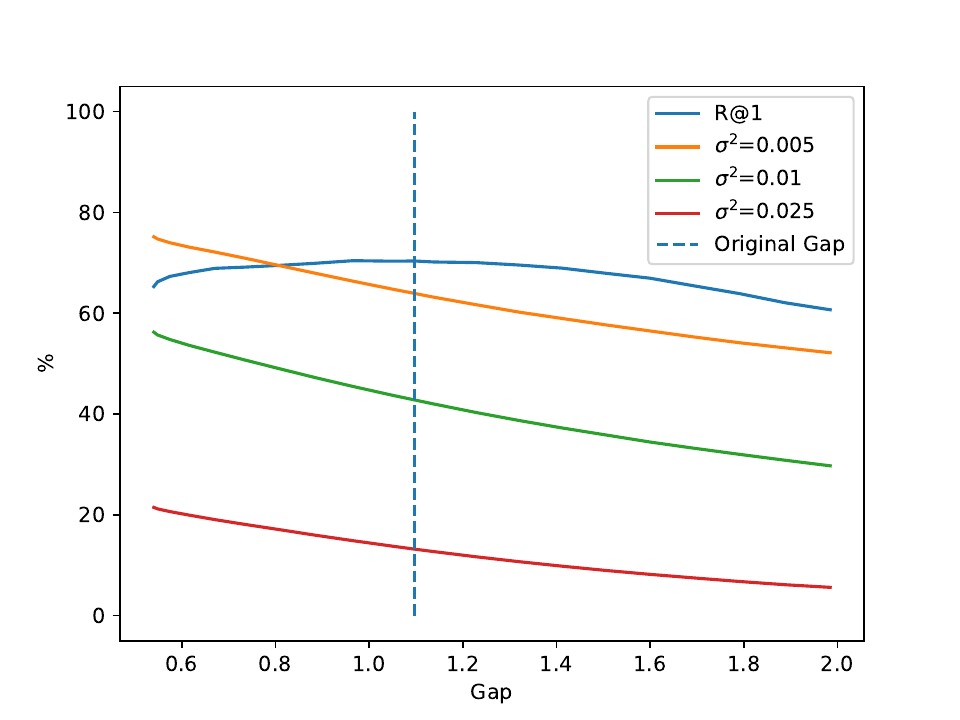} } &  {{\label{fig:cliptradeoff}\includegraphics[width=0.48\linewidth]{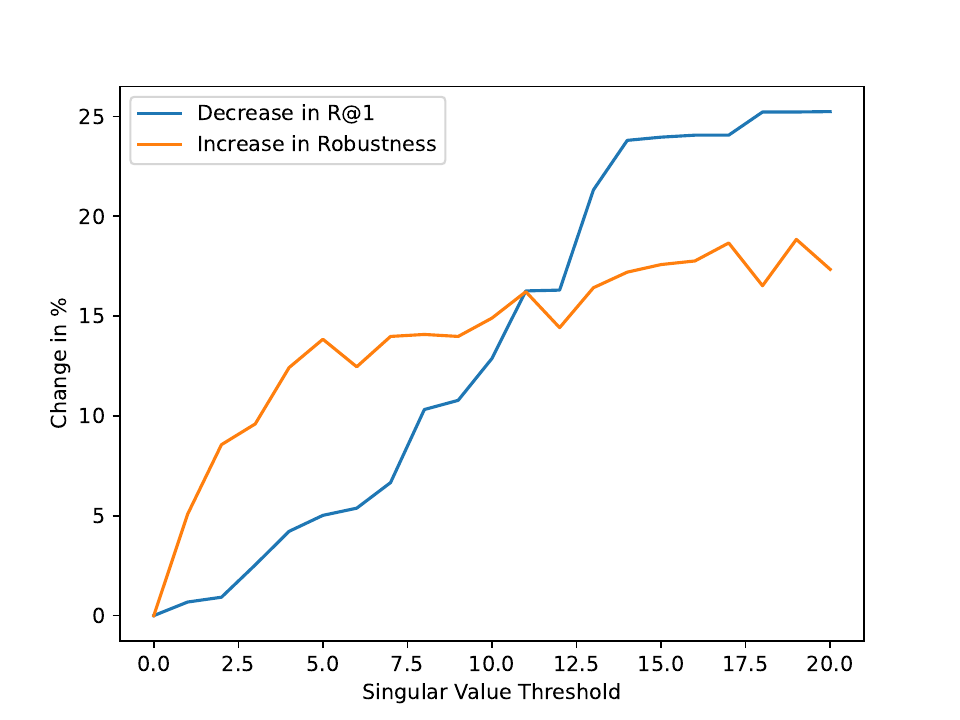} }}
        \\
        \small (a) SigLIP on MS-COCO  &
        \small (b) Tradeoff for $\sigma^2=0.01$ [-0.3em]
    \end{tabular}
   \caption{Even when using \cref{alg:projection} the drop in R@1 for SigLIP \citep{siglip} on image to text retrieval on MS-COCO dataset \citep{mscoco} is negligible relative to the improvement in robustness for different Gaussian noises (left). \cref{fig:cliptradeoff} shows the ranges of the singular value threshold $\epsilon$ for which the increment in robustness (for Gaussian noise with $\sigma^2 = 0.01$) is larger than the decrease in R@1.}
 \label{fig:consistency_accuracy}

\end{figure*}

\end{document}